\newtheorem{thm}{Theorem}[section]
\newtheorem{dfn}{Definition}[section]
\newtheorem{ex}{Example}[section]
\newtheorem{prop}{Proposition}[section]
\newtheorem{cor}{Corollary}[section]
\newtheorem{lem}{Lemma}[section]
\newcommand{\dotminus}{\mathbin{\text{\@dotminus}}}
\newcommand{\@dotminus}{%
  \ooalign{\hidewidth\raise1ex\hbox{.}\hidewidth\cr$\m@th-$\cr}%
}
\newcommand*{\qedex}{\hfill\ensuremath{\blacksquare}}
\newcommand*{\es}{\emptyset}
\newcommand*{\se}{\mathcal{SE}}
\newcommand*{\seqc}{\overline{SE(Q)}}
\newcommand*{\seqrc}{\overline{SE(Q + R)}}
\newcommand*{\lpa}{\mathcal{LP}_\mathcal{A}}
\newcommand*{\revpm}{*_\gamma}
\newcommand*{\conpm}{\dotminus_\gamma}
\newcommand*{\pq}{\mathbb{P}_Q}
\newcommand*{\gpq}{\gamma(\mathbb{P}_Q)}
\newcommand*{\pqr}{\mathbb{P}_{Q+R}}
\newcommand*{\pqm}{\mathbb{P}_Q^-}
\newcommand*{\gpqm}{\gamma(\mathbb{P}_Q^-)}
\newcommand*{\prm}{\mathbb{P}_R^-}
\newcommand*{\pqrm}{\mathbb{P}_{Q+R}^-}
\newcommand*{\cg}{\circ_\gamma}
\newcommand*{\oa}{|_a}
\newcommand*{\mpq}{\mathcal{M}(P)|_Q}
\newcommand*{\reve}{*_\preceq}
\newcommand*{\reveR}{*_{\preceq^R}}
\newcommand*{\cone}{\dotminus_{\preceq}}
\newcommand*{\coneR}{\dotminus_{\preceq^R}}
\newcommand*{\cq}{cut_\preceq(Q)}
\newcommand*{\cqm}{cut_{\preceq}^-(Q)}
\newcommand*{\cqr}{cut_\preceq(Q+R)}
\newcommand*{\cqrm}{cut_{\preceq}^-(Q+R)}
\newcommand*{\scr}{!_{\gamma_P}}
\newcommand*{\botse}{\bot_!^{SE}}
\newcommand*{\scrse}{!_{\gamma_P}^{SE}}
\newcommand*{\pqas}{\mathbb{P}_Q^{AS}}
\newcommand*{\revpmas}{*_{\gamma^1}^{AS}}
\begin{document}

\markboth{S. Binnewies et al.}{Syntax-Preserving Belief Change Operators for Logic Programs}

\title{Syntax-Preserving Belief Change Operators for Logic Programs}
\author{SEBASTIAN BINNEWIES
\affil{Griffith University}
ZHIQIANG ZHUANG
\affil{Griffith University}
KEWEN WANG
\affil{Griffith University}
BELA STANTIC
\affil{Griffith University}
}

\begin{abstract}
Recent methods have adapted the well-established AGM and belief base frameworks for belief change to cover belief revision in logic programs. In this study here, we present two new sets of belief change operators for logic programs. They focus on preserving the explicit relationships expressed in the rules of a program, a feature that is missing in purely semantic approaches that consider programs only in their entirety. In particular, operators of the latter class fail to satisfy preservation and support, two important properties for belief change in logic programs required to ensure intuitive results. 

We address this shortcoming of existing approaches by introducing partial meet and ensconcement constructions for logic program belief change, which allow us to define syntax-preserving operators that satisfy preservation and support. Our work is novel in that our constructions not only preserve more information from a logic program during a change operation than existing ones, but they also facilitate natural definitions of contraction operators, the first in the field to the best of our knowledge.

In order to evaluate the rationality of our operators, we translate the revision and contraction postulates from the 
AGM and belief base frameworks to the logic programming setting. We show that our operators fully comply with the belief base framework and formally state the interdefinability between our operators. We further propose an algorithm that is based on modularising a logic program to reduce partial meet and ensconcement revisions or contractions to performing the operation only on the relevant modules of that program. Finally, we compare our approach to two state-of-the-art logic program revision methods and demonstrate that our operators address the shortcomings of one and generalise the other method.

\end{abstract}

%
%
\begin{CCSXML}
<ccs2012>
<concept>
<concept_id>10010147.10010178.10010187.10010196</concept_id>
<concept_desc>Computing methodologies~Logic programming and answer set programming</concept_desc>
<concept_significance>500</concept_significance>
</concept>
<concept>
<concept_id>10010147.10010178.10010187.10010189</concept_id>
<concept_desc>Computing methodologies~Nonmonotonic, default reasoning and belief revision</concept_desc>
<concept_significance>300</concept_significance>
</concept>
<concept>
<concept_id>10003752.10003790.10003795</concept_id>
<concept_desc>Theory of computation~Constraint and logic programming</concept_desc>
<concept_significance>300</concept_significance>
</concept>
</ccs2012>
\end{CCSXML}

\ccsdesc[500]{Computing methodologies~Logic programming and answer set programming}
\ccsdesc[300]{Computing methodologies~Nonmonotonic, default reasoning and belief revision}
\ccsdesc[300]{Theory of computation~Constraint and logic programming}
%
%


\keywords{Logic Program, Belief Change, Strong Equivalence, Answer Set}

\acmformat{Sebastian Binnewies, Zhiqiang Zhuang, Kewen Wang, and Bela Stantic 2017. Syntax-Preserving Belief Change Operators for Logic Programs.}

\begin{bottomstuff}

Author's addresses: Sebastian Binnewies {and} Zhiqiang Zhuang {and} Kewen Wang {and} Bela Stantic, School of Information and Communication Technology, Griffith University, Australia; Email: s.binnewies@griffith.edu.au, z.zhuang@griffith.edu.au, k.wang@griffith.edu.au, b.stantic@griffith.edu.au.
\end{bottomstuff}

\maketitle

\section{Introduction} \label{sec:intro}
A key ingredient for any machine to be considered \lq artificially intelligent\rq\ is a system to represent and reason about knowledge in the application domain of interest \cite{mccarthy1958programs}. In analogy to a human brain, such a system should be capable of storing information in some knowledge base and reasoning over existing information to deduce new information. Moreover, information in a knowledge base should be amenable to change, whether it be adding, deleting, or modifying information. The study of belief change \cite{doyle1979truth,fagin1983semantics,gardenfors1988knowledge,hansson1999textbook,harper1976rational,levi1980enterprise} concerns itself exactly with these kinds of dynamics in knowledge bases. It aims at providing mechanisms to change a knowledge base whenever new information is acquired. The majority of these mechanisms rely on two fundamental principles: the principle of primacy of new information, stating that new information should be treated with priority over existing information in the knowledge base, and the principle of minimal change, stating that as much existing information as possible should be preserved during a change operation \cite{dalal1988investigations}. 

An important endeavour to guide change operations on a knowledge base and by now the most widely-adopted belief change paradigm is the so-called \emph{AGM framework}, named after the initials of the author trio \cite{alchourron1985logic}. It classifies the possible changes to a knowledge base as \emph{expansion}, \emph{revision}, and \emph{contraction} operations. In an expansion, new information is incorporated into a knowledge base, regardless of any inconsistencies that may arise. A revision operation also incorporates new information into a knowledge base, but in such a way that the resulting knowledge base is consistent. This is achieved by discarding some existing information. During a contraction, no new information is added to a knowledge base but some existing information is removed from it. On the one hand, the framework provides a set of postulates that each rational change operator should satisfy, and, on the other hand, defines specific constructions of expansion, revision, and contraction that satisfy these criteria. While the underlying assumption of the AGM framework is that any information implied by a knowledge base is represented explicitly in the knowledge base, the \emph{belief base framework} of belief change \cite{fuhrmann1991theory,hansson1989new,rott1992modellings} does not require this assumption. Postulates and constructions for expansion, revision, and contraction operators in the belief base framework have been defined to complement those from the AGM model (\citeN{hansson1999textbook} provides a summary).

While the AGM and belief base frameworks have been applied to a variety of knowledge representation formalisms (an overview is given by \citeN{wassermann2011agm}), work on an adaptation to knowledge representation in the form of \emph{logic programs} \cite{colmerauer1996birth,kowalski1974predicate,lloyd1987foundations} has been slow to progress. 
A major challenge in the adaptation of the AGM and belief base frameworks to logic programming lies in the semantics of logic programs. While the frameworks and their previous adaptations are based on monotonic semantics, the standard \emph{answer set semantics} \cite{gelfond1988stable} of logic programs is nonmonotonic. Only recently have operators been proposed for belief revision in logic programs. Program-level revision \cite{delgrande2010program} and screened semi-revision \cite{krumpelmann2012belief} are initial approaches to logic program revision, yet have strict limitations in their expressiveness due to the nonmonotonicity of the underlying answer set semantics. A breakthrough arrived with the distance-based approach \cite{delgrande2013model} to logic program revision, which rests upon characterising an agent's beliefs in terms of the set of \emph{SE (strong equivalence) models} \cite{lifschitz2001strongly,turner2003strong} of a logic program. A logic program $P$ has the same set of SE~models as a program $Q$ if and only if, for any program $R$, the answer sets of $P$ combined with $R$ are exactly the same as the answer sets of $Q$ combined with $R$. SE~model semantics provides an alternative, monotonic characterisation for logic programs and thus circumvents obstacles presented by nonmonotonicity. To revise a program $P$ by a program $Q$, the distance-based revision operator determines those SE~models from the set of SE~models of $Q$ that are closest to the SE~models of~$P$. 

Even though the distance-based approach is a major milestone for logic program revision, it has some critical shortcomings. Firstly, as it relies on the set of SE~models of an entire program as the representation of beliefs expressed by the program, it operates on the \emph{program-level} only. This means that a program may freely be substituted with any other that has the same set of SE~models and the revision output will remain the same. However, the information expressed by a program is more than just its set of SE~models -- a program also encodes relationships between the atoms occurring in it \cite{leite1998generalizing}. Such relationships are expressed on the \emph{rule-level}, by the individual rules contained in a program. By neglecting information expressed on the rule-level, the distance-based approach fails to satisfy the property of \emph{preservation} \cite{inoue2004equivalence} and the property of \emph{support} \cite{inoue2004equivalence,slota2013rise}. This leads to some highly unintuitive results, as illustrated by the following two examples.

\begin{quote}
It is the 31st of December and I plan to drive from San Jose to San Francisco to see the New Year fireworks. Due to previous experience I believe that if there is heavy fog in San Francisco, then the city will cancel the fireworks. It has been clear and sunny for the last days, so I believe that it will not be foggy today either. I decide to check the weather forecast nonetheless, which says that there will be heavy fog tonight in San Francisco. Since I trust the forecast more than my own meteorological skills, I have to revise my beliefs. By employing the distance-based revision method, I would end up believing that it will be foggy, while being undecided whether the fireworks will be cancelled. Formally, let $P_1 = \{\, \bot \leftarrow fog.,\; no\_fireworks \leftarrow fog. \,\}$ and $Q_1 = \{\, fog. \,\}$. Then the distance-based revision of $P_1$ by $Q_1$ would return $\{\, fog. \,\}$.
\end{quote}
\begin{quote}
I drive from San Jose to San Francisco every morning for work. I can use the 101 highway or the 280 freeway, but neither is particularly quicker. However, I believe that if there are roadworks on the 101, then the 280 is quicker. I was told by a friend that there are roadworks on the 101 currently, so I have been travelling on the 280. Now I hear on the radio that the roadworks finished and revise my beliefs. Using the distance-based revision method, I would end up with the belief that there are no roadworks on the 101, while still keeping the belief that the 280 is quicker. Formally, let $P_2 = \{\, 101\_roadworks.,\; 280\_quicker \leftarrow 101\_roadworks. \,\}$ and $Q_2 = \{\, \bot \leftarrow 101\_roadworks. \,\}$. Then the distance-based revision of $P_2$ by $Q_2$ would return $\{\, \bot \leftarrow 101\_roadworks.,\; 280\_quicker. \,\}$. 
\end{quote}

The first example demonstrates that the distance-based approach does not satisfy the preservation property. I do not conclude that the fireworks will be cancelled, even though I know now that it will be foggy. The revision operation simply disregards the second rule of $P_1$, which expresses the relationship between fog and fireworks cancellation. The reason for this is that the set of SE~models of the first rule of $P_1$ is a proper subset of the set of SE~models of the second rule. Thus, the set of SE~models of $P_1$ is exactly the set of SE~models of the first rule, which means that the second rule is invisible in the program-level view. The second example demonstrates that the distance-based approach does not satisfy the support property. I keep believing that the 280 is quicker, although the grounds to believe so do not hold any longer. The problem is that the dependency relationship between 280\_quicker and 101\_roadworks is captured on the rule-level, by the set of SE~models of the second rule of $P_2$, but not on the program-level, by the set of SE~models of the entire program $P_2$.

A second shortcoming is that the distance-based approach makes the definition of a corresponding contraction operator difficult to come by. In classical logic, contraction can be defined in terms of revision by using the negation of a sentence. However, in logic programs we do not have the luxury of negation of a program. A workaround could be to use the complement of the set of SE~models of the contracting program~$Q$ and select from this set the SE~models that are closest to the ones of the initial program~$P$. Yet, such a method may return SE~models that are somewhat unrelated to $P$ or $Q$, especially when the complement consists of a large number of SE~models.

The motivation for this work is to address these limitations. In particular, we propose here, on the one hand, revision operators that take into account information expressed by a program on the program-level and the rule-level in order to avoid such unintuitive results as just shown. On the other hand, we present corresponding contraction operators with similar properties. 
The main contributions of this work can be summarised as follows.

\begin{itemize}
\item We provide new translations of the AGM and belief base revision and contraction postulates to the logic programming setting and establish formal relationships between these postulates and to previous translations. 
\item We introduce two sets of belief change operators for logic programs -- partial meet revision and contraction operators and ensconcement revision and contraction operators -- and show that each operator satisfies the relevant belief base revision or contraction postulates as well as the majority of AGM revision or contraction postulates. We also demonstrate that our partial meet and ensconcement revision operators address the shortcomings of the distance-based approach to logic program revision and that they are generalisations of the screened semi-revision approach for logic programs.
\item We establish that our ensconcement operators are generalisations of our partial meet operators and that the Levi and Harper identities hold for our operators. We further show that the outcome of a revision or contraction operation remains unaffected whether an ensconcement is defined over rules or subsets of a program.
\item We propose an algorithm to optimise the operations of partial meet and ensconcement revision or contraction of a logic program. 
\item We connect our results to the classic belief change frameworks by showing that our operators possess similar properties as their counterparts in propositional logic, that they conform fully to the belief base framework, and that they align more closely to the AGM and belief base frameworks than the distance-based revision operators. 
\end{itemize}

The remainder of this paper is organised as follows. We first provide the preliminaries in Section~\ref{sec:prelim} and review related work in Section~\ref{sec:rel-work}. We then present new translations of the AGM and belief base revision and contraction postulates to logic programs in Section~\ref{sec:adapting}. In Sections~\ref{sec:pm} and~\ref{sec:ens}, we propose partial meet and ensconcement belief change operators for logic programs, respectively, and evaluate their suitability with respect to the relevant postulates and existing operators. We establish the formal relationships between our operators in Section~\ref{sec:conn-btw-operators}. In Section~\ref{sec:localisedbc}, we present an algorithm to optimise the operations of revision or contraction on a logic program. We finally discuss our findings in relation to the classic belief change frameworks in Section~\ref{sec:disc-classic-bc} and conclude with a summary in Section~\ref{sec:concl}. Preliminary results from Sections~\ref{sec:pm} and~\ref{sec:localisedbc} were presented in a conference paper \cite{binnewies2015partial}.

\section{Preliminaries} \label{sec:prelim}
We first briefly recall syntax and semantics of logic programs and then review the foundations of belief change. 

\subsection{Logic Programming}
Let $\mathcal{A}$ be a finite vocabulary of propositional atoms. A \emph{rule}~$r$ over $\mathcal{A}$ has the form
\begin{equation}
a_1; \dotsc; a_k; not\, b_1; \dotsc; not\, b_l \, \leftarrow \,  c_1, \dotsc , c_m, not\, d_1, \dotsc, not\, d_n. \label{eq:rule}
\end{equation}
Here, all $a_i, b_i, c_i, d_i \in \mathcal{A}$ and $k,l,m,n \geq 0$. The operators \lq $not$\rq , \lq ;\rq , and \lq ,\rq\ stand for default negation, disjunction, and conjunction, respectively. For convenience, let $H^+(r) = \{a_1,\dotsc,a_k\}$, $H^-(r) = \{b_1,\dotsc,b_l\}$, $B^+(r) = \{c_1,\dotsc,c_m\}$, and $B^-(r) = \{d_1,\dotsc,d_n\}$. If $k = 1$ and $l = m = n = 0$, then $r$ is called a \emph{fact} and we omit \lq $\leftarrow$\rq ; if $k = l = 0$, then $r$ is a \emph{constraint} and we denote the empty disjunction by $\bot$. Let $At(r)$ and $At(R)$ denote the set of all atoms that occur in a rule of the form~(\ref{eq:rule}) and in a set of rules~$R$, respectively.  A \emph{(generalised) logic program} is a finite set of rules of the form~(\ref{eq:rule}). We write $\lpa$ for the class of all logic programs that can be constructed from $\mathcal{A}$.

An \emph{interpretation} $Y \subseteq \mathcal{A}$ satisfies a program~$P$, denoted by $Y \models P$, if and only if~(iff) it is a model of all rules under the standard definition for propositional logic such that each rule represents a conditional and default negation is transcribed to classical negation. Let $Mod(P) = \{\, Y \mid Y \models P \,\}$. An \emph{answer set} \cite{gelfond1988stable} of a program~$P$ is any subset-minimal interpretation $Y$ that satisfies the \emph{reduct} of $P$ with respect to $Y$, denoted by $P^Y$ and defined as: 
$$
P^Y = \{\, H^+(r) \leftarrow B^+(r) \mid r \in P, H^-(r) \subseteq Y, \text{ and } B^-(r) \cap Y = \emptyset \,\}.
$$
The set of all answer sets of $P$ is denoted by $AS(P)$.

An \emph{SE interpretation} is a tuple $(X,Y)$ of interpretations with $X \subseteq Y \subseteq \mathcal{A}$. We usually write, e.g., $(ab,ab)$ instead of $(\{a,b\},\{a,b\})$ for legibility. Let $\se$ be the set of all SE interpretations over $\mathcal{A}$. For any set $S$ of SE interpretations, by $\overline{S}$ we denote the complement of $S$ with respect to $\se$, that is, $\overline{S} = \se \setminus S$. An SE interpretation $(X,Y)$ is an \emph{SE~model} \cite{turner2003strong} of a program $P$ iff $Y \models P$ and $X \models P^Y$. The set of all SE~models of $P$ is denoted by $SE(P)$ and $P$ is \emph{satisfiable} iff $SE(P) \neq \emptyset$. An interpretation $Y$ is an answer set of $P$ iff $(Y,Y) \in SE(P)$ and, for any $X \subset Y$, $(X,Y) \not \in SE(P)$. Often we drop explicit set notation for rules and their union, e.g., for rules $r,r' \in P$, we use $SE(r)$ to denote $SE(\{r\})$ and write $SE(r \cup r')$ instead of $SE(\{r\} \cup \{r'\})$. Note that $SE(P) = \bigcap_{r \in P} SE(r)$. Given two programs $P$ and $Q$, we say that~$P$ is \emph{strongly equivalent} \cite{lifschitz2001strongly} to $Q$, denoted by $P \equiv_s Q$, iff $SE(P) = SE(Q)$, and $P$ \emph{implies}~$Q$, denoted by $P \models_s Q$, iff $SE(P) \subseteq SE(Q)$. In the particular case of $SE(P) \subset SE(Q)$, we say that $P$ \emph{strictly implies} $Q$. The relation $\models_s$ is antitonic with respect to the program subset relation, i.e., $Q \subseteq P$ implies $P \models_s Q$. Furthermore, we write $\models_s P$ to express $SE(P) = \se$.

SE~models are a refinement of answer sets as they provide more information about the atoms in a program and their dependencies. For example, each of the following programs $P_1,P_2,\dotsc,P_9$ over $\mathcal{A} = \{a,b\}$ has $\{\es\}$ as the only answer set but the sets of SE~models are different for each program:
\begin{align*}
P_1 &= \{\, \bot \leftarrow a. \,\}  & SE(P_1) &= \{(\es,\es),(\es,b),(b,b)\} \\
P_2 &= \{\, \bot \leftarrow b. \,\} & SE(P_2) &= \{(\es,\es),(\es,a),(a,a)\} \\
P_3 &= \{\, a \leftarrow b. \,\} & SE(P_3) &= \{(\es,\es),(\es,a),(a,a),(\es,ab),(a,ab),(ab,ab)\} \\
P_4 &= \{\, b \leftarrow a. \,\} & SE(P_4) &= \{(\es,\es),(\es,b),(b,b),(\es,ab),(b,ab),(ab,ab)\} \\
P_5 &= \{\, \bot \leftarrow a,b. \,\} & SE(P_5) &= \{(\es,\es),(\es,a),(a,a),(\es,b),(b,b)\} \\
P_6 &= \{\, \bot \leftarrow not\, a,b. \,\} & SE(P_6) &= \{(\es,\es),(\es,a),(a,a),(\es,ab),(a,ab),(b,ab),(ab,ab)\} \\
P_7 &= \{\, \bot \leftarrow a,not\, b. \,\} & SE(P_7) &= \{(\es,\es),(\es,b),(b,b),(\es,ab),(a,ab),(b,ab),(ab,ab)\} \\
P_8 &= \{\, a;not\, b. \,\} & SE(P_8) &= \{(\es,\es),(\es,a),(a,a),(a,ab),(ab,ab)\} \\
P_9 &= \{\, not\, a; b. \,\} & SE(P_9) &= \{(\es,\es),(\es,b),(b,b),(b,ab),(ab,ab)\}
\end{align*}
Informally, we can interpret the content of an SE~model $(X,Y)$ on a three-valued scale. Any atoms in $X$ are true, any atoms not in $Y$ are false, and any atoms in $Y$ but not in~$X$ are undefined.

\subsection{Belief Change} \label{sec:bc}
The AGM framework \cite{alchourron1985logic,gardenfors1988knowledge} defines \emph{expansion}, \emph{revision}, and \emph{contraction} as the change operations on a body of beliefs held by an agent, called a \emph{belief state} henceforth, in light of some new information. In an expansion, new beliefs are incorporated into a belief state, regardless of any inconsistencies that may arise. A revision operation also incorporates new beliefs into a belief state, but in such a way that the resulting belief state is consistent. This is achieved by discarding some existing beliefs. During a contraction, some beliefs in a belief state are removed without adding new beliefs.

In the AGM framework, a belief state is modelled as a \emph{belief set}, defined as a set of sentences from some logic-based language $\mathcal{L}$ that is closed under logical consequence, i.e., when all beliefs implied by a knowledge base are explicitly represented in the knowledge base. Let $K$ be a belief set, $\phi$ and $\psi$ sentences, $K_\bot$ denote the inconsistent belief set, and $Cn(\cdot)$ stand for a logical consequence function. By $\phi \equiv \psi$ we mean $Cn(\phi) = Cn(\psi)$. The expansion of $K$ by~$\phi$, written $K \oplus \phi$, is defined as $K \oplus \phi = Cn(K \cup \{\phi\})$. The AGM framework provides a set of postulates that any rational revision operator should satisfy. The postulates are listed as follows, where~$\circledast$ represents a revision operator.
\begin{enumerate}[({$\circledast$}1)]
\item $K \circledast \phi$ is a belief set
\item $\phi \in K \circledast \phi$
\item $K \circledast \phi \subseteq K \oplus \phi$
\item If $\lnot \phi \not \in K$, then $K \oplus \phi \subseteq K \circledast \phi$
\item $K \circledast \phi = K_\bot$ iff $\vdash \lnot \phi$
\item If $\phi_1 \equiv \phi_2$, then $K \circledast \phi_1 = K \circledast \phi_2$
\item $K \circledast (\phi \wedge \psi) \subseteq (K \circledast \phi) \oplus \psi$
\item If $\lnot \psi \not \in K \circledast \phi$, then $(K \circledast \phi) \oplus \psi \subseteq K \circledast (\phi \wedge \psi)$
\end{enumerate}

($\circledast$1) requires that the outcome of a revision is a belief set.\ ($\circledast$2) states that the revising sentence is contained in the revised belief set.\ ($\circledast$3) asserts that a belief set revised by a sentence is always a subset of the belief set expanded by that sentence.\ ($\circledast$3) and ($\circledast$4) together state that revision coincides with expansion in cases when the revising sentence is consistent with the initial belief set.\ ($\circledast$5) guarantees that a revision outcome is consistent, unless the revising sentence is logically impossible.\ ($\circledast$6) ensures that logically equivalent sentences lead to the same revision outcomes.\ ($\circledast$7) and ($\circledast$8) together enforce $K$ to be minimally changed in a revision by both $\phi$ and $\psi$, such that the outcome is the same as the expansion of $K \circledast \phi$ by $\psi$, provided that $\psi$ is consistent with $K \circledast \phi$.

In the concrete case that a belief state is represented as a finite set of propositional formulas, the following set of postulates is equivalent to the set ($\circledast$1)--($\circledast$8) \cite{katsuno1991propositional}. Let $\phi,\psi,\mu$ be propositional formulas.
\begin{enumerate}[({$\circledast$}1KM)]
\item $\phi \circledast \psi$ implies $\psi$
\item If $\phi \wedge \psi$ is satisfiable, then $\phi \circledast \psi \equiv \phi \wedge \psi$
\item If $\psi$ is satisfiable, then $\phi \circledast \psi$ is satisfiable
\item If $\phi_1 \equiv \phi_2$ and $\psi_1 \equiv \psi_2$, then $\phi_1 \circledast \psi_1 \equiv \phi_2 \circledast \psi_2$
\item $(\phi \circledast \psi) \wedge \mu$ implies $\phi \circledast (\psi \wedge \mu)$
\item If $(\phi \circledast \psi) \wedge \mu$ is satisfiable, then $\phi \circledast (\psi \wedge \mu)$ implies $(\phi \circledast \psi) \wedge \mu$
\end{enumerate}

($\circledast$1KM) requires that the revising formula can be derived from the revision outcome.\ ($\circledast$2KM) specifies that revision corresponds to conjunction whenever the revising formula is consistent with the formula to be revised.\ ($\circledast$3KM) guarantees consistency of a revision outcome whenever the revising formula is consistent.\ ($\circledast$4KM) states that revising logically equivalent formulas by logically equivalent formulas leads to logically equivalent results.\ ($\circledast$5KM) and ($\circledast$6KM) together stipulate that the revision by a conjunction leads to the same outcome as revising by one conjunct and then forming the conjunction with the other conjunct, provided that the conjunction thus formed is satisfiable. 

The AGM framework also provides a set of postulates that any rational contraction operator should satisfy. The postulates are given below, where $\ominus$ represents a contraction operator.
\begin{enumerate}[({$\ominus$}1)]
\item $K \ominus \phi $ is a belief set
\item $K \ominus \phi \subseteq K$
\item If $\phi \not \in K$, then $K \ominus \phi = K$
\item If $\not \vdash \phi$, then $\phi \not \in K \ominus \phi$
\item $K \subseteq (K \ominus \phi) \oplus \phi$
\item If $\phi_1 \equiv \phi_2$, then $K \ominus \phi_1 = K \ominus \phi_2$
\item $K \ominus \phi \cap K \ominus \psi \subseteq K \ominus \phi \wedge \psi$
\item If $\phi \not \in K \ominus \phi \wedge \psi$, then $K \ominus \phi \wedge \psi \subseteq K \ominus \phi$
\end{enumerate}

($\ominus$1) requires that the outcome of a contraction is a belief set.\ ($\ominus$2) ensures that no new beliefs are introduced during a contraction.\ ($\ominus$3) stipulates that the belief set remains unchanged during a contraction operation whenever the sentence to be contracted is not contained in it.\ ($\ominus$4) states that a contracting sentence is not a logical consequence of the contracted belief set, unless the sentence is a tautology.\ ($\ominus$5) requires that the original belief set can be recovered by expanding a contracted belief set by the sentence that was contracted.\ ($\ominus$6) ensures that logically equivalent sentences lead to the same contraction outcomes.\ ($\ominus$7) guarantees that any beliefs retained in a contraction by $\phi$ and in a contraction by $\psi$ are also retained in a contraction by both $\phi$ and $\psi$.\ ($\ominus$8) specifies that any beliefs retained in a contraction by both $\phi$ and $\psi$ are also retained in a contraction by $\phi$, whenever $\phi$ itself is not retained.

The appropriateness of the Recovery postulate ($\ominus$5) within this set of contraction postulates has been discussed intensively \cite{fuhrmann1991theory,hansson1991belief,makinson1987status,nayak1994foundational,niederee1991multiple}.  To replace the Recovery postulate in expressing that no beliefs should be retracted unduly during a contraction operation, alternative postulates were proposed. \citeN{hansson1991belief} offered the following postulate:
\begin{enumerate}[({$\ominus$}1r)]
\setcounter{enumi}{4}
\item If $\psi \in K \setminus (K \ominus \phi)$, then there is a set $K'$ such that $K \ominus \phi \subseteq K' \subset K$ and $\phi \not \in Cn(K')$ but $\phi \in Cn(K' \cup \{\psi\})$.
\end{enumerate}
The Relevance postulate~($\ominus$5r) states that a sentence $\psi$ should only be removed during the contraction of a sentence $\phi$ from $K$ if $\psi$ is relevant for implying $\phi$. In the presence of~($\ominus$1)--($\ominus$3), ($\ominus$5) is equivalent to~($\ominus$5r) in propositional logic \cite{hansson1991belief}. More recently, \citeN{ferme2008axiomatic} presented the following Disjunctive Elimination postulate~($\ominus$5de):
\begin{enumerate}[({$\ominus$}1de)]
\setcounter{enumi}{4}
\item If $\psi \in K \setminus (K \ominus \phi)$, then $K \ominus \phi \not \vdash \phi \vee \psi$.
\end{enumerate}
According to~($\ominus$5de), a sentence $\psi$ should only be removed during the contraction of a sentence $\phi$ from $K$ if the contraction result does not imply the disjuntion of $\phi$ and $\psi$. In the presence of~($\ominus$2)--($\ominus$3), ($\ominus$5r) is equivalent to ($\ominus$5de) in propositional logic \cite{ferme2008axiomatic}.

One of the classic constructions to implement belief change is \emph{partial meet contraction} \cite{alchourron1985logic}, which we recapitulate here. A set $K'$ is a \emph{remainder set} of a set $K \subseteq \mathcal{L}$ with respect to a sentence $\phi$ iff 
\begin{enumerate}[a)]
\item $K' \subseteq K$,
\item $K' \not \vdash \phi$, and
\item for any $K''$ with $K' \subset K'' \subseteq K: K'' \vdash \phi$.
\end{enumerate}
The set of all remainder sets of $K$ with respect to $\phi$ is denoted by $K \bot \phi$. A \emph{selection function} $\gamma$ \emph{for a belief set} $K$ is a function such that (i) if $K \bot \phi \neq \es$, then $\es \neq \gamma(K \bot \phi) \subseteq K \bot \phi$ and (ii) $\gamma(K \bot \phi) = \{K\}$ otherwise. A \emph{partial meet contraction operator $\ominus_\gamma$} for $K$ is defined as: $K \ominus_\gamma \phi = \bigcap \gamma (K \bot \phi)$. The following representation theorem shows that the set of postulates~($\ominus$1)--($\ominus$6) exactly characterises the class of partial meet contraction operators.

\begin{thm} \cite{alchourron1985logic}
For any belief set $K$, $\ominus_\gamma$ is a partial meet contraction operator for $K$ iff $\ominus_\gamma$ satisfies~($\ominus$1)--($\ominus$6).
\end{thm}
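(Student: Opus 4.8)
\emph{Proof plan.} This is the classical AGM representation theorem for partial meet contraction, so I would prove the two directions of the biconditional separately, using throughout two background facts about the underlying monotonic consequence operator $Cn$: (i) it is compact, so a Lindenbaum/Zorn argument extends any $S \subseteq K$ with $S \not\vdash \phi$ to a maximal such subset, i.e.\ to an element of $K \bot \phi$; and (ii) every remainder set $K' \in K \bot \phi$ is itself logically closed --- this follows from $K = Cn(K)$ together with the maximality of $K'$, since if $\chi \in Cn(K') \setminus K'$ then $\chi \in K$ and $K' \cup \{\chi\}$ would be a strictly larger subset of $K$ still not implying $\phi$. I would record these as preliminary lemmas.

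\emph{Soundness.} Assume $\gamma$ is a selection function for $K$ and $K \ominus_\gamma \phi = \bigcap \gamma(K \bot \phi)$; I would check ($\ominus$1)--($\ominus$6) one by one. ($\ominus$1) holds since an intersection of logically closed sets is logically closed, with $\bigcap \{K\} = K$ covering the case $K \bot \phi = \es$. ($\ominus$2) holds since every member of $\gamma(K\bot\phi)$ is a subset of $K$. ($\ominus$3) holds because $\phi \notin K$ forces $K \not\vdash \phi$, so $K$ itself is the unique maximal subset of $K$ not implying $\phi$, i.e.\ $K \bot \phi = \{K\}$ and $\gamma(K\bot\phi) = \{K\}$. ($\ominus$4) holds because $\not\vdash \phi$ makes $K \bot \phi \neq \es$ by the Lindenbaum lemma, so $\gamma$ selects at least one remainder set, none of which implies $\phi$. ($\ominus$6) holds because $\phi_1 \equiv \phi_2$ gives $K \bot \phi_1 = K \bot \phi_2$. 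The only non-routine case is Recovery ($\ominus$5): for $\psi \in K$ and any $K' \in K \bot \phi$, I would show $\phi \to \psi \in K'$ --- otherwise, since $\phi \to \psi \in K$ and $K'$ is maximal, $K' \cup \{\phi \to \psi\} \vdash \phi$, hence $K' \vdash (\phi \to \psi) \to \phi$, hence $K' \vdash \phi$ by Peirce's law, a contradiction. Therefore $\phi \to \psi \in \bigcap\gamma(K\bot\phi)$, so $\psi \in Cn((K \ominus_\gamma \phi) \cup \{\phi\})$, which is ($\ominus$5).

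\emph{Completeness.} Assume $\ominus_\gamma$ satisfies ($\ominus$1)--($\ominus$6) and define $\gamma(K\bot\phi) = \{\, K' \in K\bot\phi \mid K \ominus_\gamma \phi \subseteq K' \,\}$ when $K\bot\phi \neq \es$, and $\gamma(K\bot\phi) = \{K\}$ otherwise. Three things must be verified. (a) $\gamma$ is well defined: when $K\bot\phi = K\bot\psi \ne \es$ we need $K\ominus_\gamma\phi = K\ominus_\gamma\psi$, which is immediate from ($\ominus$3) if that common set is $\{K\}$, and otherwise follows from ($\ominus$6) via the standard lemma that two non-tautological sentences of $K$ with the same remainder sets are logically equivalent (provable by comparing $\bigcap(K\bot\phi)$ with $K \cap Cn(\neg\phi)$). (b) $\gamma$ is a selection function: $\gamma(K\bot\phi) \subseteq K\bot\phi$ is clear, and for $K\bot\phi \ne \es$ it is non-empty since ($\ominus$2) and ($\ominus$4) make $K\ominus_\gamma\phi$ a subset of $K$ not implying $\phi$, which the Lindenbaum lemma extends to some $K' \in K\bot\phi$. (c) $\bigcap\gamma(K\bot\phi) = K\ominus_\gamma\phi$: the inclusion $\supseteq$ holds by the definition of $\gamma$ (and, when $K\bot\phi = \es$, because Recovery together with ($\ominus$1)--($\ominus$2) collapses both sides to $K$). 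For the reverse inclusion, given $\psi \notin K\ominus_\gamma\phi$ I must exhibit $K' \in \gamma(K\bot\phi)$ with $\psi \notin K'$: if $\psi \notin K$ any member of $\gamma(K\bot\phi)$ works since $K' \subseteq K$; otherwise $\phi \in K$ (else ($\ominus$3) would put $\psi$ in $K\ominus_\gamma\phi$), and Recovery gives $\phi \to \psi \in K\ominus_\gamma\phi$, whence $\phi \notin Cn(K\ominus_\gamma\phi)$ and $\phi \vee \psi \notin Cn(K\ominus_\gamma\phi)$ (otherwise $\psi$ would follow). Consequently $(K\ominus_\gamma\phi) \cup \{\psi \to \phi\}$ is a subset of $K$ that does not imply $\phi$, so the Lindenbaum lemma extends it to some $K' \in K\bot\phi$; then $K' \in \gamma(K\bot\phi)$, and since $\psi \to \phi \in K'$ while $K' \not\vdash \phi$, we get $K' \not\vdash \psi$, i.e.\ $\psi \notin K'$ by closedness of $K'$.

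The step I expect to be the main obstacle is the reverse inclusion in (c): identifying the conditional $\psi \to \phi$ as the right formula to adjoin and checking that $(K\ominus_\gamma\phi) \cup \{\psi \to \phi\}$ still fails to imply $\phi$ --- a verification that succeeds only because Recovery ($\ominus$5) has already forced $\phi \to \psi$ into $K\ominus_\gamma\phi$, so that the two conditionals together cannot yield $\phi$ without also yielding the excluded $\psi$. The well-definedness of $\gamma$ in (a), which rests on ($\ominus$6) and the remainder-set lemma, is the secondary technical point; everything else is bookkeeping with the definitions and the two preliminary lemmas.
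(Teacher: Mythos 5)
This theorem is quoted in the paper's preliminaries from \cite{alchourron1985logic} and is not proved there, so there is no in-paper argument to compare yours against. Your reconstruction is the standard AGM proof and, as far as I can check, it is correct: the soundness direction with the Peirce's-law argument for Recovery, and the completeness direction with the canonical selection function $\gamma(K\bot\phi) = \{\, K' \in K\bot\phi \mid K\ominus_\gamma\phi \subseteq K'\,\}$, the well-definedness appeal to ($\ominus$6), and the adjunction of $\psi\to\phi$ (kept $\phi$-free precisely because Recovery has already placed $\phi\to\psi$ in $K\ominus_\gamma\phi$) are exactly the classical steps. The only things I would tighten are explicit treatment of the limiting cases $\vdash\phi$ and $K\not\vdash\phi$ in the Recovery argument (both trivial, since then $K\ominus_\gamma\phi = K$) and an explicit statement of the background assumptions on $Cn$ (supraclassicality, deduction theorem, compactness) that your Lindenbaum and Peirce steps rely on.
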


By placing further restrictions on the selection function, the representation theorem can be extended to the full set of postulates. A transitively relational selection function $\gamma'$ for $K$ is determined by a transitive relation $\unlhd$ over $2^K$ such that $\gamma'(K \bot \phi) = \{\, K' \in K \bot \phi \mid K'' \unlhd K' \text{ for all } K'' \in K \bot \phi \,\}$. A partial meet contraction operator $\ominus_{\gamma'}$ determined by a transitively relational selection function $\gamma'$ is called a \emph{transitively relational partial meet contraction operator}.

\begin{thm} \cite{alchourron1985logic}
For any belief set $K$, $\ominus_{\gamma'}$ is a transitively relational partial meet contraction operator for $K$ iff $\ominus_{\gamma'}$ satisfies~($\ominus$1)--($\ominus$8).
\end{thm}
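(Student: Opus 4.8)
The plan is to establish the two directions separately, and in each to delegate postulates ($\ominus$1)--($\ominus$6) to the preceding theorem so that only the supplementary postulates ($\ominus$7) (conjunctive overlap) and ($\ominus$8) (conjunctive inclusion) and the transitivity of the governing relation need genuine work. The extra machinery I rely on is a handful of standard structural facts about remainder sets in a compact consequence operation such as classical $Cn$: the decomposition $K \bot (\phi \wedge \psi) \subseteq (K \bot \phi) \cup (K \bot \psi)$; the extension property that every member of $K \bot \phi$ (indeed every subset of $K$ not implying $\phi$) sits inside some member of $K \bot (\phi \wedge \psi)$; and the upper bound property that every subset of $K$ not implying $\phi$ extends to a member of $K \bot \phi$. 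Nothing else beyond the first theorem is needed.

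For the direction from construction to postulates, suppose $\ominus_{\gamma'}$ is induced by a transitive relation $\unlhd$ on $2^K$. As $\gamma'$ is a selection function, $\ominus_{\gamma'}$ is a partial meet contraction operator, so by the first theorem it already satisfies ($\ominus$1)--($\ominus$6); only ($\ominus$7) and ($\ominus$8) remain. For ($\ominus$7) I would fix $x \in (K \ominus_{\gamma'} \phi) \cap (K \ominus_{\gamma'} \psi)$ and an arbitrary $K' \in \gamma'(K \bot (\phi \wedge \psi))$, use the decomposition to place $K'$ in $K \bot \phi$ or in $K \bot \psi$, and then use the extension property and transitivity of $\unlhd$ to transfer the $\unlhd$-maximality of $K'$ from $K \bot (\phi \wedge \psi)$ to that smaller remainder family, concluding $K' \in \gamma'(K \bot \phi)$ (or $\gamma'(K \bot \psi)$) and hence $x \in K'$. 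Postulate ($\ominus$8) is the mirror image: from $\phi \notin K \ominus_{\gamma'}(\phi \wedge \psi)$ I would obtain some $\unlhd$-maximal member of $K \bot (\phi \wedge \psi)$ not implying $\phi$, locate it in $K \bot \phi$, and again move maximality across the two families by transitivity to get $\gamma'(K \bot \phi) \subseteq \gamma'(K \bot (\phi \wedge \psi))$, yielding the required inclusion of intersections. The delicate point here is exactly this transfer of $\unlhd$-maximality between $K \bot (\phi \wedge \psi)$ and $K \bot \phi$.

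For the direction from postulates to construction, suppose $\ominus$ satisfies ($\ominus$1)--($\ominus$8). By the first theorem it is a partial meet contraction for $K$, so there is a selection function $\gamma$ with $K \ominus \phi = \bigcap \gamma(K \bot \phi)$. I would then form the \emph{marking-off relation}: put every non-remainder subset of $K$ $\unlhd$-below everything, and for remainder sets $X,Y$ set $X \unlhd Y$ iff for every $\alpha$ with $X, Y \in K \bot \alpha$, $X \in \gamma(K \bot \alpha)$ implies $Y \in \gamma(K \bot \alpha)$. The remaining work is (i) to check $\unlhd$ is transitive, and (ii) to check that the transitively relational selection function $\gamma'$ it induces produces the same operator, i.e. $\bigcap \gamma'(K \bot \phi) = \bigcap \gamma(K \bot \phi)$ for every $\phi$ (one need not have $\gamma' = \gamma$). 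Step (ii) is where ($\ominus$7) and ($\ominus$8) are used: rewritten through $K \ominus \phi = \bigcap \gamma(K \bot \phi)$ and the decomposition lemma, they become precisely the statements comparing $\gamma$'s choices on $K \bot \phi$ with the $\unlhd$-maximal members of $K \bot \phi$; and the side requirement that $\gamma'$ be a genuine selection function, nonempty whenever $K \bot \phi$ is, comes out of (ii) together with the upper bound property.

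The step I expect to be the main obstacle is (ii) in the second direction: showing that a relation defined purely in terms of $\gamma$'s selections across all sentences $\alpha$ has maximal elements that reconstruct those selections. This calls for handling several sentences simultaneously and threading the decomposition $K \bot (\phi \wedge \psi) \subseteq (K \bot \phi) \cup (K \bot \psi)$ through ($\ominus$7) and ($\ominus$8); by comparison the transitivity check (i) is routine. A background caveat, inherited from the first theorem, is that the three remainder-set facts rely on the consequence operation being compact enough for remainder sets and upper bounds to exist — clear for belief sets under classical $Cn$, but something the paper will need to re-establish once it passes to the strong-equivalence semantics of logic programs.
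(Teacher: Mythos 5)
This theorem is stated in the paper's preliminaries as a cited background result from Alchourr\'on, G\"ardenfors, and Makinson; the paper gives no proof of it, so there is nothing internal to compare your attempt against. Your outline is, in substance, the classical AGM argument from the cited source: delegating ($\ominus$1)--($\ominus$6) to the basic partial meet representation theorem, proving ($\ominus$7)/($\ominus$8) via the decomposition $K \bot (\phi \wedge \psi) \subseteq (K \bot \phi) \cup (K \bot \psi)$ together with the transfer of $\unlhd$-maximality between remainder families, and recovering the relation in the converse direction by the marking-off construction. Your closing caveat is also the right one to flag in the context of this paper: these remainder-set facts depend on fullness and compactness properties of closed belief sets under classical $Cn$, and the paper's own partial meet operators for logic programs under SE semantics deliberately do not inherit the supplementary postulates (its Examples show ($*$7), ($*$8), and ($\dotminus$8) failing even for transitively relational selection functions), so this theorem should not be expected to carry over unchanged to the logic-programming setting.
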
 

A corresponding \emph{(transitively relational) partial meet revision operator}~$\circledast_\gamma$ ($\circledast_{\gamma'}$) that satisfies ($\circledast$1)--($\circledast$6) (($\circledast$1)--($\circledast$8)) can be obtained from a (transitively relational) partial meet contraction operator via the \emph{Levi identity}: $K \circledast \phi = (K \ominus \lnot \phi) \oplus \phi$ \cite{gardenfors1981epistemic,levi1977subjunctives}. The inverse identity, which constructs a contraction operator from a revision operator, is due to \citeN{harper1976rational}: $K \ominus \phi = K \cap (K \circledast \lnot \phi)$.

While the AGM approach provides an effective framework to conduct belief change, the representation of belief states in the form of belief sets has some shortcomings (see \cite{hansson1999textbook} for a detailed discussion). From a practical perspective, main drawbacks of belief sets are that they are generally large objects, since all logical consequences of all beliefs are contained, and that it is impossible to distinguish between inconsistent belief sets, as inconsistent belief sets consist of the entire language. \emph{Belief bases} \cite{fuhrmann1991theory,hansson1989new,rott1992modellings} are an alternative representation of belief states. A belief base is a set of sentences from $\mathcal{L}$ that is not necessarily closed under logical consequence.

\citeN{hansson1993reversing} defined a \emph{partial meet base contraction operator} $-_\gamma$ for a belief base~$B$ as $B -_\gamma \phi = \bigcap \gamma (B \bot \phi)$ and showed that the following set of postulates exactly characterises the class of partial meet base contraction operators.

\begin{enumerate}[($-$1)]
\item $B - \phi \subseteq B$
\item If $\not \vdash \phi$, then $\phi \not \in Cn(B - \phi)$
\item If $\psi \in B \setminus (B - \phi)$, then there is a set $B'$ such that $B - \phi \subseteq B' \subset B$ and $\phi \not \in Cn(B')$ but $\phi \in Cn(B' \cup \{\psi\})$
\item If it holds for all $B' \subseteq B$ that $\phi \in Cn(B')$ iff $\psi \in Cn(B')$, then $B - \phi = B - \psi$
\end{enumerate}

\begin{thm} \cite{hansson1993reversing}
For any belief base $B$, $-_\gamma$ is a partial meet base contraction operator for $B$ iff $-_\gamma$ satisfies~($-$1)--($-$4).
\end{thm}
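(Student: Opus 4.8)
The statement is a representation theorem, so I would prove the two halves of the biconditional separately: \emph{soundness}, that every partial meet base contraction operator $-_\gamma$ satisfies ($-$1)--($-$4), and \emph{completeness}, that every operator $-$ satisfying ($-$1)--($-$4) arises as $\bigcap\gamma(B\bot\phi)$ for a suitable selection function $\gamma$. Two preliminary observations streamline both halves. First, under the standard assumptions on $Cn$ (supraclassicality, compactness, deduction), $B\bot\phi=\es$ holds exactly when $\vdash\phi$, and, by a Zorn/Lindenbaum argument, every subset of $B$ that does not imply $\phi$ extends to a member of $B\bot\phi$. Second, $B\bot\phi=B\bot\psi$ \emph{as sets} if and only if $\phi$ and $\psi$ are implied by exactly the same subsets of $B$; the nontrivial direction extends a putative counterexample $B'$ (with, say, $\phi\in Cn(B')$ but $\psi\notin Cn(B')$) to a $\psi$-remainder $B''\in B\bot\psi=B\bot\phi$ and then derives $\phi\in Cn(B')\subseteq Cn(B'')$, contradicting $B''\in B\bot\phi$. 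This observation is precisely what couples postulate ($-$4) to the set $B\bot\phi$ on which $\gamma$ operates.

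For soundness, write $B-\phi=\bigcap\gamma(B\bot\phi)$. ($-$1) is immediate, since in the principal case each $B'\in\gamma(B\bot\phi)\subseteq B\bot\phi$ has $B'\subseteq B$, and in the limiting case $\gamma(B\bot\phi)=\{B\}$. For ($-$2), if $\not\vdash\phi$ then $B\bot\phi\neq\es$, so $\gamma$ returns a nonempty family; $B-\phi$ is contained in every selected $B'$, and $B'\not\vdash\phi$, so monotony of $Cn$ gives $\phi\notin Cn(B-\phi)$. For ($-$3), if $\psi\in B\setminus(B-\phi)$ then $\psi$ is missing from some selected remainder $B'$ (which must be $\subsetneq B$), and this $B'$ witnesses Relevance: $B-\phi\subseteq B'\subsetneq B$, $\phi\notin Cn(B')$, and maximality of $B'$ in $B\bot\phi$ forces $\phi\in Cn(B'\cup\{\psi\})$. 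For ($-$4), the hypothesis gives $B\bot\phi=B\bot\psi$ by the second preliminary observation, hence $\gamma(B\bot\phi)=\gamma(B\bot\psi)$ and so $B-\phi=B-\psi$.

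For completeness, given $-$ satisfying ($-$1)--($-$4), I would set $\gamma(B\bot\phi)=\{\,B'\in B\bot\phi \mid B-\phi\subseteq B'\,\}$ when $B\bot\phi\neq\es$ and $\gamma(B\bot\phi)=\{B\}$ otherwise, then verify three things. (i) \emph{$\gamma$ is well defined as a function of $B\bot\phi$}: if $B\bot\phi=B\bot\psi$, the preliminary observation and ($-$4) give $B-\phi=B-\psi$, so the two defining families coincide. (ii) \emph{$\gamma$ is a selection function}: in the principal case, ($-$1) gives $B-\phi\subseteq B$ and ($-$2) gives $\phi\notin Cn(B-\phi)$, so $B-\phi$ extends to some $B'\in B\bot\phi$ with $B-\phi\subseteq B'$, i.e.\ $\gamma(B\bot\phi)\neq\es$; the limiting case holds by construction. (iii) \emph{$B-\phi=\bigcap\gamma(B\bot\phi)$}: the inclusion $\subseteq$ is built into the definition of $\gamma$; for $\supseteq$ in the principal case, assume $x\in\bigcap\gamma(B\bot\phi)$ but $x\notin B-\phi$; since $\gamma(B\bot\phi)\neq\es$ by (ii) we get $x\in B$, so $x\in B\setminus(B-\phi)$, and ($-$3) supplies $B'$ with $B-\phi\subseteq B'\subsetneq B$, $\phi\notin Cn(B')$, $\phi\in Cn(B'\cup\{x\})$; extending $B'$ to a remainder $B''\in B\bot\phi$ gives $B-\phi\subseteq B'\subseteq B''$, hence $B''\in\gamma(B\bot\phi)$, hence $x\in B''$ and $\phi\in Cn(B'\cup\{x\})\subseteq Cn(B'')$ --- contradicting $B''\in B\bot\phi$. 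In the limiting case $B\bot\phi=\es$ (so $\vdash\phi$), any use of ($-$3) would require a set $B'$ with $\phi\notin Cn(B')$, which is impossible, forcing $B\setminus(B-\phi)=\es$; with ($-$1) this yields $B-\phi=B=\bigcap\{B\}$.

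The verifications of ($-$1), ($-$2), ($-$4), the well-definedness of $\gamma$, and the easy inclusion in (iii) are all routine. The main obstacle is the $\supseteq$ inclusion in (iii): showing that intersecting the selected remainders throws away nothing from $B-\phi$ is exactly where Relevance ($-$3) is indispensable, and the delicate point is that the relevance-witness $B'$ it supplies is generally \emph{not} itself a remainder --- one must enlarge it to some $B''\in B\bot\phi$ while keeping $B-\phi\subseteq B''$, so that $B''$ genuinely lies in $\gamma(B\bot\phi)$ and the contradiction goes through. Handling the limiting case $\vdash\phi$ correctly, both in the definition of $\gamma$ and in establishing $B-\phi=B$ there, is a minor but easy-to-overlook subtlety.
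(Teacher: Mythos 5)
Your proof is correct and follows the standard argument: the paper itself only cites this theorem from \citeN{hansson1993reversing} without proof, but your construction of $\gamma(B\bot\phi)=\{\,B'\in B\bot\phi\mid B-\phi\subseteq B'\,\}$, the use of ($-$4) for well-definedness, and the use of Relevance plus the upper-bound property for the $\supseteq$ inclusion are exactly the same skeleton the paper uses in the appendix to prove its logic-program analogue (Theorem~\ref{thm:con-pm-bb}). No gaps; the only point worth flagging is that the upper-bound/extension step you invoke needs compactness of $Cn$ in the general belief base setting, which you correctly note, whereas in the paper's finite-program setting it is automatic.
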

Note that ($-$1), ($-$2), and ($-$3) in the belief base setting correspond directly to ($\ominus$2), ($\ominus$4), and ($\ominus$5r) in the AGM setting, respectively. ($-$4) states that if any parts of $B$ which imply $\phi$ also imply $\psi$, then the same parts of $B$ will be retained in a contraction by $\phi$ as in a contraction by $\psi$.

He also defined a corresponding \emph{partial meet base revision operator} $\divideontimes_\gamma$ for a belief base $B$ as $B \divideontimes_\gamma \phi = (B -_\gamma \lnot \phi) \cup \{\phi\}$ and showed that the following set of postulates exactly characterises the class of partial meet base revision operators.

\begin{enumerate}[($\divideontimes$1)]
\item $\phi \in B \divideontimes \phi$
\item $B \divideontimes \phi \subseteq B \cup \{\phi\}$
\item If $\psi \in B \setminus (B \divideontimes \phi)$, then there is a set $B'$ such that $B \divideontimes \phi \subseteq B' \subset B \cup \{\phi\}$ and $\lnot \phi \not \in Cn(B')$ but $\lnot \phi \in Cn(B' \cup \{\psi\})$
\item If it holds for all $B' \subseteq B$ that $B' \cup \{\phi\}$ is consistent iff $B' \cup \{\psi\}$ is consistent, then $B \cap (B \divideontimes \phi) = B \cap (B \divideontimes \psi)$
\item If $\not \vdash \lnot \phi$, then $\lnot \phi \not \in Cn(B \divideontimes \phi)$
\end{enumerate}

\begin{thm} \cite{hansson1993reversing}
For any belief base $B$, $\divideontimes_\gamma$ is a partial meet base revision operator for $B$ iff $\divideontimes_\gamma$ satisfies~($\divideontimes$1)--($\divideontimes$5).
\end{thm}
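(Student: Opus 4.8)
The plan is to prove this as a representation theorem, reducing both directions to the \emph{preceding characterisation of partial meet base contraction} (the theorem stating that $-_\gamma$ is a partial meet base contraction operator for $B$ iff it satisfies ($-$1)--($-$4)). In the ``only if'' direction I would use the defining identity $B\divideontimes_\gamma\phi=(B-_\gamma\lnot\phi)\cup\{\phi\}$ directly; in the ``if'' direction I would go the other way, via the belief-base analogue of the Harper identity, $\chi\mapsto B\cap(B\divideontimes\lnot\chi)$. Two elementary facts about classical $Cn$ are used throughout: $\lnot\phi\in Cn(X\cup\{\phi\})$ iff $\lnot\phi\in Cn(X)$, and $B'\cup\{\phi\}$ is consistent iff $\lnot\phi\notin Cn(B')$; the latter lets me translate the ``consistency'' formulation appearing in ($\divideontimes$4) into the ``$Cn$'' formulation appearing in ($-$4), and back.

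For the soundness (``only if'') direction, assume $\divideontimes_\gamma$ is a partial meet base revision operator, so $B\divideontimes_\gamma\phi=(B-_\gamma\lnot\phi)\cup\{\phi\}$ for a partial meet base contraction $-_\gamma$, which by the preceding theorem satisfies ($-$1)--($-$4). Then ($\divideontimes$1) is immediate, ($\divideontimes$2) follows from ($-$1), and ($\divideontimes$5) follows from ($-$2) applied to the formula $\lnot\phi$ together with the first $Cn$-fact. For ($\divideontimes$3), apply ($-$3) to $\lnot\phi$ to obtain a set $B'$ and take $B'\cup\{\phi\}$ as the witness; the only delicate point is the required properness $B'\cup\{\phi\}\subsetneq B\cup\{\phi\}$ when $\phi\in B$, which cannot fail, since otherwise $B'=B\setminus\{\phi\}$, and then the witnessing $\psi$ (which lies in $B$ and, by ($\divideontimes$1), differs from $\phi$) would already be in $B'$, contradicting $\lnot\phi\in Cn(B'\cup\{\psi\})$ against $\lnot\phi\notin Cn(B')$. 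For ($\divideontimes$4) I would first establish the small lemma that $\phi\in B$ implies $\phi\in B-_\gamma\lnot\phi$: if $\not\vdash\lnot\phi$, any remainder set in $B\bot\lnot\phi$ omitting $\phi$ could be properly enlarged by $\phi$ inside $B$ and would therefore have to imply $\lnot\phi$, contradicting condition (b) of remainder sets; and if $\vdash\lnot\phi$ then $B\bot\lnot\phi=\es$ and $B-_\gamma\lnot\phi=B$. This lemma yields $B\cap(B\divideontimes_\gamma\phi)=B-_\gamma\lnot\phi$, after which ($\divideontimes$4) follows directly from ($-$4) via the consistency/$Cn$ translation.

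For the completeness (``if'') direction, assume $\divideontimes$ satisfies ($\divideontimes$1)--($\divideontimes$5) and define the auxiliary operator $B\mathbin{\widehat{-}}\chi:=B\cap(B\divideontimes\lnot\chi)$. I would check that $\widehat{-}$ satisfies ($-$1)--($-$4): ($-$1) is trivial; ($-$2) follows from ($\divideontimes$5) applied to $\lnot\chi$ and monotonicity of $Cn$; ($-$4) follows from ($\divideontimes$4) via the consistency/$Cn$ translation; and ($-$3) follows from ($\divideontimes$3) applied to $\lnot\chi$ by taking $B':=B_0\cap B$, where $B_0$ is the set supplied by ($\divideontimes$3), after a short case analysis on whether $\lnot\chi\in B_0$ and whether $\lnot\chi\in B$ in order to recover $\chi\in Cn(B'\cup\{\psi\})$ from $\chi\in Cn(B_0\cup\{\psi\})$. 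By the preceding theorem, $\widehat{-}$ is then a partial meet base contraction operator $-_\gamma$ for some selection function $\gamma$. It remains to show $B\divideontimes\phi=(B-_\gamma\lnot\phi)\cup\{\phi\}=B\divideontimes_\gamma\phi$: here $B-_\gamma\lnot\phi=B\mathbin{\widehat{-}}\lnot\phi=B\cap(B\divideontimes\lnot\lnot\phi)=B\cap(B\divideontimes\phi)$, the last step by ($\divideontimes$4) since $\lnot\lnot\phi$ and $\phi$ are classically equivalent, while $(B\cap(B\divideontimes\phi))\cup\{\phi\}=B\divideontimes\phi$ because $\phi\in B\divideontimes\phi$ by ($\divideontimes$1) and $B\divideontimes\phi\subseteq B\cup\{\phi\}$ by ($\divideontimes$2). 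Hence $\divideontimes=\divideontimes_\gamma$.

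I expect the main obstacle to be the two Relevance-type postulates ($\divideontimes$3) and ($-$3): adding or deleting the singleton $\{\phi\}$ destroys the symmetry between $B$ and $B\cup\{\phi\}$, so a witnessing subset cannot simply be transported across the identity and one must split cases according to whether $\phi$ (respectively $\lnot\chi$) already belongs to the base. The lemma $\phi\in B\Rightarrow\phi\in B-_\gamma\lnot\phi$ is the technical device that keeps these boundary cases under control; everything else is routine bookkeeping with monotonicity of $Cn$ and the consistency translation.
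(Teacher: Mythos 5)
This theorem is quoted from \citeN{hansson1993reversing} as background; the paper itself gives no proof of it, so there is nothing internal to compare against. Your argument is correct and is essentially the standard one from Hansson's original paper: reduce both directions to the representation theorem for partial meet base contraction, using the Levi identity for soundness and the base-level Harper identity $B\cap(B\divideontimes\lnot\chi)$ for completeness, with the deduction-theorem facts handling the translation between the consistency and $Cn$ formulations. The two delicate points you isolate -- the properness of the witness in the Relevance postulates and the lemma that $\phi\in B$ implies $\phi\in B-_\gamma\lnot\phi$ (which yields $B\cap(B\divideontimes_\gamma\phi)=B-_\gamma\lnot\phi$) -- are exactly the places where care is needed, and your treatment of both is sound.
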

The pendants to ($\divideontimes$1) and ($\divideontimes$2) in the AGM framework are ($\circledast$2) and ($\circledast$3), respectively. ($\divideontimes$3) requires $\psi$ to only be removed from $B$ if it would otherwise make the revision outcome inconsistent. ($\divideontimes$4) mandates that if any parts of $B$ which are consistent with~$\phi$ are also consistent with $\psi$, then the same parts of $B$ will be retained in a revision by~$\phi$ as in a revision by $\psi$. ($\divideontimes$5) is a weaker version of ($\circledast$5).

\citeN{williams1994logic} proposed further belief change operators for belief bases, which rely on an ordering over the sentences contained in a belief base, called ensconcement. An \emph{ensconcement associated with a belief base~$B$} is any total preorder $\preccurlyeq$ on~$B$ that satisfies the following conditions.
\begin{enumerate}[($\preccurlyeq$1)]
\item For all $\phi \in B: \{\, \psi \in B \mid \phi \prec \psi \,\} \not \vdash \phi$
\item For all $\phi,\psi \in B: \phi \preccurlyeq \psi$ iff $\vdash \psi$
\end{enumerate}
A sentence $\psi$ is at least as ensconced as a sentence $\phi$ iff $\phi \preccurlyeq \psi$, and $\psi$ is strictly more ensconced than $\phi$ iff $\phi \prec \psi$. Condition~($\preccurlyeq$1) states that sentences which are strictly more ensconced than a sentence~$\phi$ do not entail~$\phi$. Condition~($\preccurlyeq$2) requires any tautologies in the belief base to be most ensconced. The \emph{proper cut of $B$ for $\phi$} is $cut_\prec(\phi) = \{\, \psi \in B \mid \{\, \chi \in B \mid \psi \preccurlyeq \chi \,\} \not \vdash \phi \,\}$. An \emph{ensconcement contraction operator}~$\ominus_\preccurlyeq$ for $B$ is defined as: $\psi \in B \ominus_\preccurlyeq \phi$ iff $\psi \in B$ and either~$\vdash \phi$ or $cut_\prec(\phi) \cup \{\lnot \phi\} \vdash \psi$. An \emph{ensconcement revision operator}~$\circledast_\preccurlyeq$ for $B$ is defined as: $\psi \in B \circledast_\preccurlyeq \phi$ iff (i) $\psi = \phi$ or (ii) $\psi \in B$ and either $\vdash \lnot \phi$ or $cut_\prec(\lnot \phi) \cup \{\phi\} \vdash \psi$.

An ensconcement contraction operator $\ominus_\preccurlyeq$ satisfies ($-$1), ($-$2), and 
\begin{enumerate}[($-$1)]
\setcounter{enumi}{4}
\item If $\phi \not \in Cn(B)$, then $B - \phi = B$
\item If $\phi_1 \equiv \phi_2$, then $B - \phi_1 = B - \phi_2$
\item $B - \phi \wedge \psi = B - \phi$ or $B - \phi \wedge \psi = B - \psi$ or $B - \phi \wedge \psi = B - \phi \cap B - \psi$
\item If $\psi \in B \setminus (B - \phi)$, then $B - \phi \not \vdash \phi \vee \psi$
\end{enumerate}

\begin{thm} \cite{ferme2008axiomatic}
Let $B$ be a belief base and $\ominus_\preccurlyeq$ an ensconcement contraction operator for $B$. Then $\ominus_\preccurlyeq$ satisfies ($-$1), ($-$2), and ($-$5)--($-$8).\footnote{Please note that the proof of the representation theorem (Theorem~14 in \cite{ferme2008axiomatic}) contains an error, as acknowledged by the authors. The theorem only holds in the direction from operator to postulates as stated above.}
\end{thm}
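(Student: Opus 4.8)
The plan is to verify each of ($-$1), ($-$2), ($-$5)--($-$8) directly from the definition of $\ominus_\preccurlyeq$, keeping the tautological case $\vdash\phi$ (where $B\ominus_\preccurlyeq\phi=B$) separate, since there ($-$1) and ($-$5)--($-$8) are immediate or vacuous and ($-$6) reduces to the observation that the right-hand side of the definition mentions $\phi$ only through ``$\vdash\phi$'', through $cut_\prec(\phi)$ (defined purely via entailment of $\phi$), and through $\lnot\phi$ up to logical equivalence. So assume $\not\vdash\phi$ throughout. The technical heart is a single lemma: $cut_\prec(\phi)\not\vdash\phi$ whenever $\not\vdash\phi$. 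I would prove it by compactness together with totality of $\preccurlyeq$: a finite $F\subseteq cut_\prec(\phi)$ with $F\vdash\phi$ is nonempty (as $\not\vdash\phi$), has a $\preccurlyeq$-minimal element $\chi_0$, and then $F\subseteq\{\chi\in B\mid\chi_0\preccurlyeq\chi\}$, so that upper set entails $\phi$, contradicting $\chi_0\in cut_\prec(\phi)$. Two structural facts are then reused: whenever $\not\vdash\phi$, $cut_\prec(\phi)$ is the $\subseteq$-largest $\preccurlyeq$-upward-closed subset of $B$ not entailing $\phi$ (upward closure and maximality are straightforward from the definition, and ``does not entail $\phi$'' is the lemma); and, because $\preccurlyeq$ is total, any two $\preccurlyeq$-upward-closed subsets of $B$ are $\subseteq$-comparable.

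Given these, ($-$1) is trivial, and ($-$2) follows since $\lnot\phi$ is consistent (as $\not\vdash\phi$), so if $cut_\prec(\phi)\cup\{\lnot\phi\}\vdash\phi$ then that set is inconsistent, whence $cut_\prec(\phi)\vdash\phi$ against the lemma; hence $Cn(B\ominus_\preccurlyeq\phi)\subseteq Cn(cut_\prec(\phi)\cup\{\lnot\phi\})$ and $\phi\notin Cn(cut_\prec(\phi)\cup\{\lnot\phi\})$. For ($-$5), if $B\not\vdash\phi$ then also $\not\vdash\phi$, and every upper set $\{\chi\in B\mid\psi\preccurlyeq\chi\}$ is a subset of $B$ and so does not entail $\phi$, giving $cut_\prec(\phi)=B$ and $B\ominus_\preccurlyeq\phi=B$. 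For ($-$8), suppose $\psi\in B\setminus(B\ominus_\preccurlyeq\phi)$, so $cut_\prec(\phi)\cup\{\lnot\phi\}\not\vdash\psi$; if $B\ominus_\preccurlyeq\phi\vdash\phi\vee\psi$ then $cut_\prec(\phi)\cup\{\lnot\phi\}\vdash\phi\vee\psi$, and together with $\lnot\phi$ also $cut_\prec(\phi)\cup\{\lnot\phi\}\vdash\psi$, a contradiction.

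The real obstacle is conjunctive factoring ($-$7). The first step is the identity $cut_\prec(\phi\wedge\psi)=cut_\prec(\phi)\cup cut_\prec(\psi)$, since an upper set entails $\phi\wedge\psi$ iff it entails both $\phi$ and $\psi$; by comparability of upward-closed sets we may assume $cut_\prec(\phi)\subseteq cut_\prec(\psi)$, so $cut_\prec(\phi\wedge\psi)=cut_\prec(\psi)$. After clearing the tautological cases with ($-$6), split on whether this inclusion is strict. If $cut_\prec(\phi)=cut_\prec(\psi)$, write $C$ for this common set (with $C\not\vdash\phi$ and $C\not\vdash\psi$ by the lemma); classical logic gives $Cn(C\cup\{\lnot\phi\vee\lnot\psi\})=Cn(C\cup\{\lnot\phi\})\cap Cn(C\cup\{\lnot\psi\})$, which yields $B\ominus_\preccurlyeq(\phi\wedge\psi)=(B\ominus_\preccurlyeq\phi)\cap(B\ominus_\preccurlyeq\psi)$, the third disjunct. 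If instead $cut_\prec(\phi)\subsetneq cut_\prec(\psi)$, then since $cut_\prec(\psi)$ is $\preccurlyeq$-upward-closed and strictly larger than $cut_\prec(\phi)$, the largest upward-closed set not entailing $\phi$, it must itself entail $\phi$; hence $cut_\prec(\psi)\cup\{\lnot\phi\vee\lnot\psi\}$ proves $\lnot\psi$ and therefore has exactly the consequences of $cut_\prec(\psi)\cup\{\lnot\psi\}$, giving $B\ominus_\preccurlyeq(\phi\wedge\psi)=B\ominus_\preccurlyeq\psi$, the second disjunct (and symmetrically $B\ominus_\preccurlyeq\phi$ when the inclusion goes the other way). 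I expect the most delicate part to be the bookkeeping around consistency, i.e.\ ensuring none of the auxiliary theories collapses to $\mathcal{L}$ (which is exactly what the lemma and the tautology case analysis secure), and keeping the ``larger cut'' case split aligned with the three disjuncts of ($-$7); conditions ($\preccurlyeq$1) and ($\preccurlyeq$2) are available should a corner case with tautologies in $B$ need them, but the core of the argument is combinatorial, resting only on totality, upward closure, and compactness.
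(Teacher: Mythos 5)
This theorem is quoted from \cite{ferme2008axiomatic} as background in Section~2.2, and the paper supplies no proof of it (the appendix only proves the paper's own results), so there is nothing to compare against; I can only assess your argument on its own terms, and it is correct. The key lemma ($cut_\prec(\phi)\not\vdash\phi$ when $\not\vdash\phi$, via compactness and a $\preccurlyeq$-minimal element of a finite entailing subset) is exactly the right engine, and the two structural facts you extract from it --- that $cut_\prec(\phi)$ is the largest $\preccurlyeq$-upward-closed subset of $B$ not entailing $\phi$, and that upward-closed sets under a total preorder are $\subseteq$-comparable --- carry all of the weight. Postulates ($-$1), ($-$2), ($-$5), ($-$6), ($-$8) then follow by the routine deduction-theorem manipulations you describe, and your case split for ($-$7) is sound: the identity $cut_\prec(\phi\wedge\psi)=cut_\prec(\phi)\cup cut_\prec(\psi)$ reduces to comparability; in the equal-cuts case the classical identity $Cn(C\cup\{\lnot\phi\vee\lnot\psi\})=Cn(C\cup\{\lnot\phi\})\cap Cn(C\cup\{\lnot\psi\})$ gives the third disjunct, and in the strict case $cut_\prec(\psi)\vdash\phi$ (by maximality of $cut_\prec(\phi)$ among upward-closed non-entailing sets) forces $Cn(cut_\prec(\psi)\cup\{\lnot\phi\vee\lnot\psi\})=Cn(cut_\prec(\psi)\cup\{\lnot\psi\})$ and hence the second disjunct. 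One cosmetic remark: the consistency bookkeeping you flag as delicate is in fact never needed beyond the key lemma itself, since the identity used in the equal-cuts case holds irrespective of consistency; and note that conditions ($\preccurlyeq$1)--($\preccurlyeq$2) are never invoked directly --- only totality and transitivity of $\preccurlyeq$ enter, which is consistent with how the cited source argues.
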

Postulates ($-$5), ($-$6), and ($-$8) correspond directly to ($\ominus$3), ($\ominus$6), and ($\ominus$5de) in the AGM setting, respectively. ($-$7) states that a contraction by a conjuntion is the result of contracting by the first of the conjuncts, the result of contracting by the second of the conjuncts, or the common part of these two results. In the belief base framework, the relationship between ($-$3) and ($-$8) is different to the one between ($\ominus$5r) and ($\ominus$5de) in the AGM framework: ($-$3) implies ($-$8) but not vice versa \cite{ferme2008axiomatic}.

\section{Related Work} \label{sec:rel-work}
One of the key developments for adapting the AGM framework of belief change to logic programs came with the \emph{distance-based approach} to logic program revision \cite{delgrande2013model}. It is built on the monotonic SE semantics for logic programs and understands a belief state as the set of SE~models of a program. In that work, the formula-based revision postulates~($\circledast$1KM)--($\circledast$6KM) are translated to logic programs as follows, where a revision operator $*$ is a  function from $\lpa \times \lpa$ to $\lpa$ and the expansion of $P$ by $Q$, denoted $P \dotplus Q$, is understood as $P \dotplus Q = R$ such that $R \in \lpa$ and $SE(R) = SE(P) \cap SE(Q)$.
\begin{enumerate}[({$*$}1m)]
\item $P * Q \models_s Q$
\item If $P \dotplus Q$ is satisfiable, then $P * Q \equiv_s P \dotplus Q$
\item If $Q$ is satisfiable, then $P * Q$ is satisfiable
\item If $P_1 \equiv_s P_2$ and $Q_1 \equiv_s Q_2$, then $P_1 * Q_1 \equiv_s P_2 * Q_2$
\item $(P * Q) \dotplus R \models_s P * (Q \dotplus R)$
\item If $(P * Q) \dotplus R$ is satisfiable, then $P * (Q \dotplus R) \models_s (P * Q) \dotplus R$
\end{enumerate}

The approach adapts two revision operators from classic belief change to logic programs, namely, Dalal's revision operator \cite{dalal1988investigations} and Satoh's revision operator \cite{satoh1988nonmonotonic}. Informally, to revise a program $P$ by a program $Q$, the operators return those SE~models from the set of SE~models of $Q$ that are closest to the SE~models of $P$, where closeness is determined by Dalal's or Satoh's notion of distance. \citeN{delgrande2013model} identified that the adaptation of Satoh's revision operator gives more intuitive results than the adaptation of Dalal's revision operator, so we will focus on the former here. This restriction has no effect on our later discussions.

We briefly restate the definition and main result of the distance-based approach. Let~$\Delta$ stand for the symmetric difference between two sets $X,Y$, that is, $X \Delta Y = (X \setminus Y) \cup (Y \setminus X)$. For any two pairs of sets $(X,X'), (Y,Y')$, let
\begin{align*}
&(X,X') \Delta (Y,Y') = (X \Delta Y, X' \Delta Y'); \\
&(X,X') \subseteq (Y,Y') \text{ iff } X' \subseteq Y', \text{ and if } X' = Y', \text{ then } X \subseteq Y; \\
&(X,X') \subset (Y,Y') \text{ iff } (X,X') \subseteq (Y,Y') \text{ and } (Y,Y') \nsubseteq (X,X').
\end{align*}
For any two sets $E,E'$, let
\begin{align*}
\sigma(E,E') = \{\, A_1 \in E \mid &\text{ there exists a } B_1 \in E' \text{ such that for all } 
A_2 \in E \\
&\text { and for all } B_2 \in E' \text{ it holds that } A_1 \Delta B_1 \subseteq  A_2 \Delta B_2 \,\}.
\end{align*}

\begin{dfn} \cite{delgrande2013model}
Let $P,Q \in \lpa$. The revision of~$P$ by~$Q$, denoted $P \star Q$, is defined as $P \star Q = R$ such that $R \in \lpa$ and $SE(R) = SE(Q)$ if $SE(P) = \es$, and otherwise
\begin{align*}
SE(R) = \{\, (X,Y) \mid\ &Y \in \sigma(Mod(Q),Mod(P)), X \subseteq Y, \\
&\text{and if } X \subset Y, \text{ then } (X,Y) \in \sigma(SE(Q),SE(P)) \,\}.
\end{align*}
\end{dfn}

\begin{thm} \cite{delgrande2013model}
The revision operator $\star$ satisfies ($*$1m)--($*$5m). 
\end{thm}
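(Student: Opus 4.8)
Since ($*$1m)--($*$5m) only constrain the relations $\models_s$ and $\equiv_s$, which depend exclusively on sets of SE~models, the plan is to argue entirely at the level of SE~models. Two observations drive everything. First, $Mod(P)$ is recoverable from $SE(P)$, since $Y \models P$ iff $(Y,Y) \in SE(P)$; hence $Mod(P) = \{\, Y \mid (Y,Y) \in SE(P)\,\}$, so the set displayed in the definition of $P \star Q$ is a function of $SE(P)$ and $SE(Q)$ alone. Second, a set $S \subseteq \se$ equals $SE(R)$ for some $R \in \lpa$ iff $(X,Y)\in S$ implies $(Y,Y)\in S$; the set defining $SE(P\star Q)$ satisfies this, because membership of $(X,Y)$ forces $Y \in \sigma(Mod(Q),Mod(P))$ and hence $(Y,Y)$ into the set. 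So $\star$ is well defined, and ($*$4m) is then immediate: $SE(P_1)=SE(P_2)$ and $SE(Q_1)=SE(Q_2)$ make the two defining sets coincide.

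The core of the argument is three elementary properties of $\sigma$, to be applied both to plain interpretations and to SE~interpretations (in the latter case $\Delta$, $\subseteq$, and $\es$ denote the pointwise operation, the pair order, and $(\es,\es)$; one first checks the pair order is a partial order with least element $(\es,\es)$):
\begin{enumerate}[(i)]
\item $\sigma(E,E')\subseteq E$, and $\sigma(E,E')\neq\es$ when $E,E'$ are finite and non-empty (take a $\subseteq$-minimal member of the finite family $\{\, A\Delta B \mid A\in E,\, B\in E'\,\}$);
\item if $E\cap E'\neq\es$ then $\sigma(E,E')=E\cap E'$ (a common element $A$ gives $A\Delta A=\es$, the least element; conversely $\es$ is then realised, so minimality forces $A\Delta B=\es$);
\item \emph{restriction}: if $E''\subseteq E$ and $A\in\sigma(E,E')\cap E''$, then $A\in\sigma(E'',E')$, since the witnessing value $A\Delta B$ is still available and still minimal over $\{\, A'\Delta B'\mid A'\in E'',\, B'\in E'\,\}$.
\end{enumerate}

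Given these, ($*$1m), ($*$2m), ($*$3m) follow by unfolding the definition, the case $SE(P)=\es$ being trivial each time. For ($*$1m): if $(X,Y)\in SE(P\star Q)$ then $Y\in\sigma(Mod(Q),Mod(P))\subseteq Mod(Q)$, so $(Y,Y)\in SE(Q)$ when $X=Y$ and $(X,Y)\in\sigma(SE(Q),SE(P))\subseteq SE(Q)$ when $X\subset Y$. For ($*$3m): $Mod(Q),Mod(P)$ are finite and non-empty, so by~(i) some $Y\in\sigma(Mod(Q),Mod(P))$, whence $(Y,Y)\in SE(P\star Q)$. For ($*$2m): $SE(P)\cap SE(Q)\neq\es$ gives $Mod(P)\cap Mod(Q)\neq\es$, so (ii) turns both occurrences of $\sigma$ in the definition into intersections, and combined with $(Y,Y)\in SE(P)\cap SE(Q) \Leftrightarrow Y\in Mod(P)\cap Mod(Q)$ this yields $SE(P\star Q)=SE(P)\cap SE(Q)=SE(P\dotplus Q)$.

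The work is in ($*$5m), where property (iii) is the engine. Put $Q'=Q\dotplus R$, so $SE(Q')=SE(Q)\cap SE(R)$ and $Mod(Q')=Mod(Q)\cap Mod(R)$. Take $(X,Y)\in SE(P\star Q)\cap SE(R)$ (again $SE(P)=\es$ is trivial). From $(X,Y)\in SE(P\star Q)$ we get $Y\in\sigma(Mod(Q),Mod(P))\subseteq Mod(Q)$, and $(X,Y)\in SE(R)$ gives $Y\in Mod(R)$, so $Y\in Mod(Q')\subseteq Mod(Q)$ and (iii) at the plain level yields $Y\in\sigma(Mod(Q'),Mod(P))$. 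If in addition $X\subset Y$, then $(X,Y)\in\sigma(SE(Q),SE(P))\subseteq SE(Q)$ and $(X,Y)\in SE(R)$, so $(X,Y)\in SE(Q')\subseteq SE(Q)$ and (iii) at the SE level yields $(X,Y)\in\sigma(SE(Q'),SE(P))$. Both clauses defining $SE(P\star Q')$ now hold, so $(X,Y)\in SE(P\star Q')$, i.e.\ $(P\star Q)\dotplus R\models_s P\star Q'$. The hard part is precisely this step: one must verify (iii) for the two-level pair order on SE~interpretations rather than plain inclusion, and, because $\star$ is defined by a conjunction of a $Mod$-level clause and an $SE$-level clause, re-establish each clause separately---hence the split on $X=Y$ versus $X\subset Y$. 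I would not expect ($*$6m) to go through, consistently with the theorem asserting only ($*$1m)--($*$5m).
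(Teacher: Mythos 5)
This theorem is imported from \cite{delgrande2013model}; the paper contains no proof of it, so there is no internal argument to compare yours against. Judged on its own terms, your reconstruction is sound and follows the standard route: reduce everything to the three properties of $\sigma$ (containment and non-emptiness, collapse to intersection when the arguments meet, stability under shrinking the first argument), verify them for both the powerset order and the two-level pair order, and unfold the two clauses defining $SE(P \star Q)$ separately for $X = Y$ and $X \subset Y$. The well-definedness observation (completeness of the defining set, hence representability), the reduction of ($*$2m) via property (ii), and the two-level application of (iii) in ($*$5m) are all correct.

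The one genuine issue is your property (i). As transcribed in this paper, $\sigma(E,E')$ collects the $A_1 \in E$ witnessing a \emph{least} element of $\{\, A \Delta B \mid A \in E, B \in E' \,\}$ under $\subseteq$ (the condition is $A_1 \Delta B_1 \subseteq A_2 \Delta B_2$ for \emph{all} pairs), and a finite non-empty family of pairwise incomparable sets has minimal but no least elements. For instance, with $Mod(P) = \{\es\}$ and $Mod(Q) = \{\{a\},\{b\}\}$ the symmetric differences $\{a\}$ and $\{b\}$ are incomparable, so under the literal definition $\sigma(Mod(Q),Mod(P)) = \es$, $SE(P \star Q) = \es$, and ($*$3m) fails. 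Your justification of (i) --- ``take a $\subseteq$-minimal member'' --- silently substitutes the Satoh-style minimality condition ($A_2 \Delta B_2 \not\subset A_1 \Delta B_1$) actually used in the original source, under which (i)--(iii) and the whole proof go through. The defect is in the paper's transcription of $\sigma$ rather than in your reasoning, but as written your (i) does not follow from the stated definition; you should state explicitly that you are working with the minimality reading. Properties (ii) and (iii), and hence ($*$1m), ($*$2m), ($*$4m), and ($*$5m), are insensitive to which reading is used.
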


The distance-based approach was extended by two representation theorems \cite{delgrande2013agm,schwind2013characterization}, stating that any logic program revision operator satisfying ($*$1m)--($*$6m) plus some additional conditions can be characterised by some preorder over a set of SE~models.

Besides the distance-based approach, few other methods for logic program revision have been proposed. The \emph{screened semi-revision approach} for logic programs \cite{krumpelmann2012belief} is based on answer set semantics and aligns itself with the belief base framework. The approach assumes a belief state to be the set of rules belonging to a program and combines adaptations of the constructions of semi-revision \cite{hansson1997semi} and screened revision \cite{makinson1997screened} into a screened consolidation operation for logic programs. The consolidation operator first finds all maximal subsets of one program that are consistent with a second program under answer set semantics, then selects exactly one of these subsets, and returns this subset together with the second program as the  outcome. 

We review the formal definitions of the screened consolidation operator and the main result here. Let $P \in \lpa$ and $Q \subseteq P$. The set of screened remainder sets of $P$ with respect to $Q$ is
$$
P \bot_! Q = \{\, R \mid Q \subseteq R \subseteq P, AS(R) \neq \es \text{ and for all } R'
\text{ with } R \subset R' \subseteq P: AS(R') = \es \,\}.
$$
A maxichoice selection function $\gamma_P$ for $P$ is a function such that for any $Q \in \lpa$: (i) if $P \bot_! Q \neq \es$, then $\gamma_P(P \bot_! Q) = R$ for some $R \in P \bot_! Q$, and (ii) if $P \bot_! Q = \es$, then $\gamma_P(P \bot_! Q) = P$.

\begin{dfn} \cite{krumpelmann2012belief} \label{dfn:scr}
Let $P,Q \in \lpa$ and $\gamma_P$ be a maxichoice selection function for $P$. A screened consolidation operator $\scr$ for $P$ is defined as $P \scr Q = \gamma_P (P \bot_! Q)$.
\end{dfn}
The authors propose the following adaptation of partial meet base revision postulates that any screened consolidation operator $!$ should satisfy, where $!$ is a function from $\lpa \times \lpa$ to $\lpa$, and show that $\scr$ is exactly characterised by these postulates.

\begin{enumerate}[($!$1)]
\item $Q \subseteq P \,!\, Q$
\item $P\,!\,Q \subseteq P$
\item If $r \in P \setminus (P\,!\,Q)$, then $AS(P\,!\,Q) \neq \es$ and $AS(P\,!\,Q \cup \{r\}) = \es$
\item If it holds for all $P' \subseteq P$ that $AS(P' \cup Q) \neq \es$ iff $AS(P' \cup R) \neq \es$, then $P \cap ((P \cup Q)\,!\,Q) = P \cap ((P \cup R)\,!\,R)$
\item If there exists some $P'$ such that $Q \subseteq P' \subseteq P$ and $AS(P') \neq \es$, then $AS(P\,!\,Q) \neq \es$
\end{enumerate}

\begin{thm} \cite{krumpelmann2012belief}
For any $P \in \lpa$, $\scr$ is a screened consolidation operator for $P$ iff $\scr$ satisfies ($!$1)--($!$5).
\end{thm}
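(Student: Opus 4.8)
The statement is a representation theorem, so the plan is to prove the two implications separately, following the standard recipe for partial-meet-style characterisations (Hansson's argument for belief bases), transplanted to the answer-set setting.

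\emph{Construction implies postulates.} Fix $P$ and a maxichoice selection function $\gamma_P$, and write $P \scr Q = \gamma_P(P \bot_! Q)$. I would verify each postulate by unfolding the definitions of $P\bot_! Q$ and of $\gamma_P$. ($!$1) and ($!$2) hold because $\gamma_P(P\bot_! Q)$ is either a member $R$ of $P\bot_! Q$ -- which by definition satisfies $Q \subseteq R \subseteq P$ -- or, when $P\bot_! Q = \es$, equals $P$ (the degenerate subcase relying on the standing reading that $Q \subseteq P$). For ($!$3): if $r \in P \setminus (P\scr Q)$ then $P\scr Q \neq P$, so $P\bot_! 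Q \neq \es$ and $P\scr Q = R \in P\bot_! Q$; hence $AS(R) \neq \es$, and since $R \subsetneq R \cup \{r\} \subseteq P$ the maximality clause in the definition of $P\bot_! Q$ gives $AS(R \cup \{r\}) = \es$. For ($!$5): by finiteness of $P$, whenever some satisfiable program lies between $Q$ and $P$ a $\subseteq$-maximal such program does, so $P\bot_! Q \neq \es$ and $\gamma_P$ returns a satisfiable program. ($!$4) is the only case with genuine content: I would argue that its hypothesis -- that $Q$ and $R$ have the same satisfiability behaviour against every subset of $P$ -- forces $(P\cup Q)\bot_! Q$ and $(P\cup R)\bot_! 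R$ to coincide after intersecting with $P$, so the selected programs agree on $P$.

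\emph{Postulates imply construction.} Given an operator $!$ satisfying ($!$1)--($!$5), fix $P$ and set $\gamma_P(P\bot_! Q) := P\,!\,Q$ if $P\bot_! Q \neq \es$, and $\gamma_P(P\bot_! Q) := P$ otherwise. I would then check three things. (i)~\emph{Membership}: when $P\bot_! Q \neq \es$ we have $P\,!\,Q \in P\bot_! Q$. Here ($!$1) and ($!$2) give $Q \subseteq P\,!\,Q \subseteq P$; ($!$5), applicable because $P\bot_! Q \neq \es$ supplies a satisfiable intermediate program, gives $AS(P\,!\,Q) \neq \es$; and ($!$3) says adjoining any omitted rule destroys satisfiability, whence one must conclude that $P\,!\,Q$ is itself $\subseteq$-maximal satisfiable inside $P$. (ii)~\emph{Empty case}: when $P\bot_! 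Q = \es$ we have $P\,!\,Q = P$, since otherwise ($!$1)--($!$2) would place $P\,!\,Q$ strictly inside $P$ as an unsatisfiable program, contradicting the first conjunct of ($!$3) for an omitted rule. (iii)~\emph{Well-definedness}: if $P\bot_! Q = P\bot_! Q'$ then $P\,!\,Q = P\,!\,Q'$ -- immediate from (i) when the common remainder family is a singleton, and otherwise obtained from ($!$4) after relating equality of remainder families to its satisfiability-profile hypothesis. Steps (i)--(iii) show $\gamma_P$ is a legitimate maxichoice selection function and $P\,!\,Q = \gamma_P(P\bot_! Q)$ for every $Q$, so $!$ is a screened consolidation operator for $P$.

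\emph{Main obstacle.} The delicate point throughout is the \emph{non-monotonicity of answer-set satisfiability}: a subprogram of a satisfiable program need not be satisfiable. In the monotone belief-base setting step~(i) is routine -- ``every single-rule extension is unsatisfiable'' upgrades to ``$\subseteq$-maximal satisfiable'' by monotonicity of $\vdash$ -- but that inference is unavailable here, so step~(i), together with the matching of ``$P\bot_! Q = P\bot_! Q'$'' against the all-subsets hypothesis of ($!$4) in step~(iii) (whose quantification over every $P' \subseteq P$ is precisely what compensates for the loss of downward closure of satisfiability), has to be carried out by a careful combinatorial argument that exploits the exact shapes of ($!$3)--($!$5) rather than a generic monotone lemma. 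I expect this to be the crux of the proof; the other verifications are routine bookkeeping.
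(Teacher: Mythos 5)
First, a point of order: the paper contains no proof of this statement. It is quoted verbatim from \citeN{krumpelmann2012belief} in the Related Work section, and the appendix proves none of the results attributed to that paper, so there is no in-paper argument to compare your proposal against; any assessment has to be of the proposal on its own terms against the original source.

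On its own terms, your decomposition is the right shape (the standard Hansson-style two-direction argument), and several verifications are fine: ($!$1), ($!$2), ($!$3), ($!$5) in the soundness direction, and the empty-family case in the completeness direction. But the proposal does not actually contain a proof of the step you yourself single out as the crux, and that step is where the theorem lives. In the completeness direction, to place $P\,!\,Q$ inside $P \bot_! Q$ you must show that \emph{every} $R'$ with $P\,!\,Q \subset R' \subseteq P$ has $AS(R') = \es$, whereas ($!$3) only yields $AS(P\,!\,Q \cup \{r\}) = \es$ for each single omitted rule $r$. Under answer-set semantics this does not upgrade: unsatisfiability of all one-rule extensions is compatible with some larger extension regaining an answer set, precisely because $AS$ is not antitonic in the program. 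Your ``Main obstacle'' paragraph correctly diagnoses this but then defers it to ``a careful combinatorial argument that exploits the exact shapes of ($!$3)--($!$5)'' without supplying one; as written, nothing in ($!$1)--($!$5) visibly forbids an operator whose output fails maximality in $P \bot_! Q$ yet passes all five postulates, so the implication from postulates to construction is unproven. A similar deferral occurs at the well-definedness step (iii), where equality of remainder families must be connected to the all-subsets satisfiability hypothesis of ($!$4), and in the soundness of ($!$4) itself, where coincidence of $(P\cup Q)\bot_! Q$ and $(P \cup R)\bot_! R$ modulo $P$ does not by itself force a generic maxichoice $\gamma_P$ to make corresponding selections. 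So the proposal is a plausible roadmap, but the decisive nonmonotonicity issues it flags are left open rather than resolved.
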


The \emph{program-level approach} to logic program revision \cite{delgrande2010program} is also based on answer set semantics and assumes the beliefs that make up a belief state to be the answer sets of a program. The revision operation relies on extending the standard answer set semantics to three-valued answer set semantics for determining the outcome. To revise a program~$P$ by a program~$Q$, for each three-valued answer set $X$ of $Q$, all maximal subsets $R$ of $P$ are selected such that $X$ is a subset of each three-valued answer set $X'$ of $R \cup Q$. The revision operation returns a set of answer sets that correspond to each $X'$ as the result. In the author's view, the AGM revision postulates~($\circledast$3), ($\circledast$4), ($\circledast$7), and~($\circledast$8) are inappropriate in the context of nonmonotonic semantics. An adaptation of the remaining postulates is fulfilled by the revision operation.

The work of \citeN{zhuang2016reconsidering} concerns itself with the revision of a disjunctive logic program by another. The authors observed that for a belief revision operator in a nonmonotonic setting, the task of inconsistency resolving can be done not only by removing old beliefs but also by adding new beliefs. Based on this observation, they proposed a variant of partial meet revision. For resolving the inconsistency between the original and the new beliefs, the variant obtains not only maximal subsets of the initial program that are consistent with the new one, but also minimal supersets of the initial program that are consistent with the new one. A representation theorem is provided. Since the idea of resolving inconsistency by adding new beliefs is beyond the classic AGM approach, some extra postulates are required to characterise the variant.

\citeN{inoue2004equivalence} argue that neither the set of answer sets nor the set of SE~models of a program provide enough detail to revise a program. They illustrate that while two programs $\{\, a.,\, b \leftarrow not\, a. \,\}$ and $\{\, a. \,\}$ have the same set of SE~models, and thus the same set of answer sets, they should be treated differently during a revision operation, since $b$ should be derived from the first program whenever the rule $a.$ is discarded. We call this property \emph{preservation} here. They further argue that revision operations should distinguish between two programs $\{\, a.,\, b. \,\}$ and $\{\, a.,\, b \leftarrow a. \,\}$. While these two programs again share the same set of SE~models, they too should not be interchangeable, since after a removal of the rule $a.$, $b$ should not be derived from the latter program any more. This property has become known as \emph{support} \cite{slota2013rise}. To address these two issues, \citeN{inoue2004equivalence} introduced a new notion of program equivalence that is stricter than strong equivalence, called \emph{C-update equivalence}: any two programs $P_1,P_2 \in \lpa$ are C-update equivalent iff $P_1 \setminus P_2 \equiv_s P_2 \setminus P_1$.

A relative of belief revision is \emph{belief update} \cite{katsuno1992difference}. The difference between these two is usually understood in the way that belief revision addresses changes to a belief state brought about by some new information about a static world, whereas belief update covers changes to a belief state due to dynamics in the world described by the belief state. Similar to belief revision, the belief update framework prescribes a set of postulates that each rational update operator should satisfy and provides a construction that complies with it. A number of update operators for logic programs have been proposed, which differ greatly  in the degree of alignment to the classic belief update framework. On the whole, these approaches rely predominantly on syntactic transformations of programs and return a set of answer sets instead of an updated program as the outcome. The landscape of update operators has already been reviewed exhaustively in other places, for example, detailed overviews are given by \citeN{delgrande2004classification} and \citeN{slota2012updates}. Of interest in the current context is the \emph{exception-based update} approach \cite{slota2012robust}, which introduces \emph{RE (robust equivalence) models} as an extension of SE~models. An RE model of a program $P$ is any SE interpretation $(X,Y)$ such that $X \models P^Y$. The authors regard a belief state as the collection of the sets of RE models of the rules in a program. In the update operation, exceptions in the form of RE models are added to those sets of RE models of the initial program that are incompatible with the RE models of the updating program. Incompatibilities between two sets of RE models are determined by differences in the truth values of atoms occurring in both sets. The update operator satisfies the majority of the update postulates adapted to logic programs.

An \emph{ordered logic program} is a tuple $(P,<)$ such that $P$ is a logic program and $<$ is a preference ordering over the rules in $P$. Thus, it is conceivable to express the revision of $P$ by~$Q$ as $(P \cup Q,<)$, where $<$ is some appropriate preference ordering on $P \cup Q$ (from \cite{delgrande2007preference}, for example), and employ one of the different semantics proposed \cite{brewka1999preferred,delgrande2003framework,schaub2003semantic} to obtain the preferred answer sets of this ordered logic program. An ordered logic program can be transformed into a standard logic program, so that the preferred answer sets of the former are exactly the answer sets of the latter \cite{delgrande2003framework}. Yet, the transformed program may bear no syntactic relation to the original program. More importantly, an ordered logic program $(P \cup Q,<)$ may be inconsistent, i.e., may have no answer sets, even if $P$ and $Q$ themselves are consistent \cite{delgrande2004classification}. These characteristics 
make ordered logic programs rather unsuitable as a methodology for logic program belief change in general.

\section{Adapting the Belief Change Frameworks} \label{sec:adapting}
Before we set out to define new constructions of logic program belief change, we translate the ideas of the classic belief change frameworks to the logic programming setting. We assume that a belief state is represented in the form of a program from $\lpa$ and new information to expand/revise/contract this program comes in the form of another program from $\lpa$. Even though a consequence relation for logic programs under SE semantics exists \cite{eiter2004simplifying,wong2008sound}, logic programs are per se not closed under logical consequence. Thus, it seems natural to align logic program belief change with the belief base framework. However, as the majority of previous approaches to logic program revision have focussed on adapting the AGM framework, we will consider it here as well to enable us to draw proper comparisons.

As pointed out above, in the case of propositional knowledge bases the set of formula-based revision postulates ($\circledast$1KM)--($\circledast$6KM) is equivalent to the set ($\circledast$1)--($\circledast$8). However, in the context of logic programs, it is worth having a closer look at the relationship between these two sets. Besides adaptations for the purpose of logic program \lq update\rq\ operations under answer set semantics \cite{eiter2002properties} or $N_2$ logic \cite{osorio2007updates}, we find that postulates for logic program revision have usually been built on the formula-based revision postulates ($\circledast$1KM)--($\circledast$6KM) until now \cite{delgrande2013model,delgrande2013agm,schwind2013characterization}. They are given in the form of~($*$1m)--($*$6m) (see Section~\ref{sec:rel-work}). We will now present a new set of postulates for logic program revision operators, based on the original AGM postulates ($\circledast$1)--($\circledast$8), and then explain their relationship to ($*$1m)--($*$6m). We will discover that the equivalence of the two sets of postulates under propositional logic does not carry over to logic programs, that, in fact, the adaptation of~($\circledast$1)--($\circledast$8) to logic programs leads to postulates that are in most cases stricter than~($*$1m)--($*$6m), since $Q \subseteq P$ implies $P \models_s Q$ but the converse does not hold.

Let $P,Q,R \in \lpa$ and a revision operator $*$ be a function from $\lpa \times \lpa$ to $\lpa$. In the following, we understand expansion, denoted by the operator~$+$, as $P + Q = P \cup Q$.
\begin{enumerate}[({$*$}1)]
\item $P * Q \in \lpa$ 
\item $Q \subseteq P * Q$
\item $P * Q \subseteq P + Q$
\item If $P + Q$ is satisfiable, then $P + Q \subseteq P * Q$
\item $P * Q$ is satisfiable iff $Q$ is satisfiable
\item If $Q \equiv_s R$, then $P * Q \equiv_s P * R$
\item $P * (Q + R) \subseteq (P * Q) + R$
\item If $(P * Q) + R$ is satisfiable, then $(P * Q) + R \subseteq P * (Q + R)$
\end{enumerate}

($*$1) is a basic but nonetheless crucial condition that is included in previous adaptations by the requirement that $*$ is a function from $\lpa \times \lpa$ to~$\lpa$. ($*$2) stipulates that the revising program is always included in a revision outcome and is a stronger version of ($*$1m) since ($*$2) implies ($*$1m) but not vice versa. ($*$3) requires that a revision outcome never contains elements not in~$P$ or $Q$. This condition is covered by previous adaptations only for the case that $P + Q$ is satisfiable in ($*$2m). Together, ($*$3) and ($*$4) state that revision coincides with expansion if $P + Q$ is satisfiable and thus imply ($*$2m) but not vice versa. ($*$5) is the stricter, biconditional version of ($*$3m) emphasising the consistency of a revision outcome. ($*$6) guarantees that revision by strongly equivalent programs leads to strongly equivalent results. Unlike belief sets, logic programs are not closed under logical consequence, so that we cannot expect from two different bodies of information that are merely strongly equivalent to be equal after a revision. Thus, the consequent is translated as ``$P * Q \equiv_s P * R$'' from the original postulate~($\circledast$6). ($*$4m) is stricter than ($*$6). ($*$7) and~($*$8) capture the minimal change condition by requiring that the revision of $P$ by the expansion of $Q$ with $R$ has the same outcome as $P * Q$ expanded with $R$, whenever the latter is satisfiable. ($*$7) and~($*$8) imply~($*$5m) and~($*$6m) but not vice versa, respectively.

A translation of the AGM contraction postulates to logic programs is listed below, where $P,Q,R \in \lpa$ and a contraction operator $\dotminus$ is a function from $\lpa \times \lpa$ to $\lpa$.
\begin{enumerate}[({$\dotminus$}1)]
\item $P \dotminus Q \in \lpa$
\item $P \dotminus Q \subseteq P$
\item If $P \not \models_s Q$, then $P \dotminus Q = P$
\item If $\not \models_s Q$, then $P \dotminus Q \not \models_s Q$
\item $P \subseteq (P \dotminus Q) + Q $
\item If $Q \equiv_s R$, then $P \dotminus Q = P \dotminus R$
\item $(P \dotminus Q) \cap (P \dotminus R) \subseteq P \dotminus (Q + R)$
\item If $P \dotminus (Q + R) \not \models_s Q$, then $P \dotminus (Q + R) \subseteq P \dotminus Q$
\end{enumerate} 

Any contraction outcome is required to be a logic program by ($\dotminus$1) and a subset of the initial program by ($\dotminus$2). According to ($\dotminus$3), if the beliefs to be contracted are not implied by the initial belief state, then nothing is to be retracted. ($\dotminus$4) requires that the beliefs to be contracted are not implied by the contracted belief state, unless they are tautologies. ($\dotminus$5) states that all parts of the initial program $P$ that are discarded in a contraction by $Q$ can be recovered by a subsequent expansion with $Q$. ($\dotminus$6) ensures that contraction by strongly equivalent programs leads to the same outcomes. ($\dotminus$7) demands that any parts retained in both $P \dotminus Q$ and $P \dotminus R$ are also retained in $P \dotminus (Q + R)$. Whenever $Q$ is not implied by the result of a contraction by $Q + R$, then ($\dotminus$8) states that this result is also retained in a contraction by $Q$ alone.

($\dotminus$1), ($\dotminus$2), ($\dotminus$5), ($\dotminus$6), and ($\dotminus$7) are direct translations of ($\ominus$1), ($\ominus$2), ($\ominus$5), ($\ominus$6), and ($\ominus$7), respectively. In the adaptation of ($\ominus$3) to ($\dotminus$3), we use \lq\lq $P \not \models_s Q$\rq\rq\ instead of \lq\lq $Q \nsubseteq P$\rq\rq . Belief sets are closed under logical consequence, which means $K \vdash \phi$ iff $\phi \in K$, so that either can be used as the condition in ($\ominus$3). However, for logic programs under SE semantics, we can conclude $P \models_s Q$ from $Q \subseteq P$ but not $Q \subseteq P$ from $P \models_s Q$. Therefore, it is more appropriate to use \lq\lq $P \not \models_s Q$\rq\rq\ than \lq\lq $Q \nsubseteq P$\rq\rq , as $P \not \models_s Q$ implies $Q \nsubseteq P$. For the same reason is \lq\lq $P \dotminus Q \not \models_s Q$\rq\rq\ used instead of \lq\lq $Q \nsubseteq P \dotminus Q$\rq\rq\ in ($\dotminus$4), and \lq\lq $P \dotminus (Q + R) \not \models_s Q$\rq\rq\ instead of \lq\lq $Q \nsubseteq P \dotminus (Q + R)$\rq\rq\ in ($\dotminus$8).

We now adapt the belief base revision postulates ($\divideontimes$1)--($\divideontimes$5) to logic programs, where again $P,Q,R \in \lpa$ and a revision operator $*$ is a function from $\lpa \times \lpa$ to $\lpa$.

\begin{enumerate}[({$*$}1b)]
\item $Q \subseteq P * Q$
\item $P * Q \subseteq P + Q$
\item If $r \in P \setminus (P * Q)$, then there exists a program $P'$ such that $P * Q \subseteq P' \subset P + Q$ and $P'$ is satisfiable but $P' \cup \{r\}$ is not satisfiable
\item If it holds for all $P' \subseteq P$ that $P' + Q$ is satisfiable iff $P' + R$ is satisfiable, then $P \cap (P * Q) = P \cap (P * R)$
\item If $Q$ is satisfiable, then $P * Q$ is satisfiable
\end{enumerate}

All five postulates are direct translations of ($\divideontimes$1)--($\divideontimes$5). We have ($*$1b) $= $ ($*$2) and ($*$2b) $=$ ($*$3). ($*$3b) requires a rule $r$ to be removed during a revision of $P$ by $Q$ if $r$ contributes to making $P$ irreconcilable with $Q$. ($*$4b) states that if all subsets of $P$ that agree with $Q$ also agree with $R$, then the same elements of $P$ will be retained in a revision by $Q$ as in a revision by $R$. ($*$5b) guarantees satisfiability of the revision result whenever the revising program itself is satisfiable. ($*$4b) is a stronger version of ($*$6) and ($*$5b) is a weaker version of ($*$5).

Finally, we translate the belief base contraction postulates ($-$1)--($-$8) to logic programs, where $P,Q,R \in \lpa$ and a contraction operator $\dotminus$ is a function from $\lpa \times \lpa$ to $\lpa$. 

\begin{enumerate}[($\dotminus$1b)]
\item $P \dotminus Q \subseteq P$
\item If $\not \models_s Q$, then $P \dotminus Q \not \models_s Q$
\item If $r \in P \setminus (P \dotminus Q)$, then there exists a program $P'$ such that $P \dotminus Q \subseteq P' \subset P$ and $P' \not \models_s Q$ but $P' \cup \{r\} \models_s Q$
\item If it holds for all $P' \subseteq P$ that $P' \not \models_s Q$ iff $P' \not \models_s R$, then $P \dotminus Q = P \dotminus R$
\item If $P \not \models_s Q$, then $P \dotminus Q = P$
\item If $Q \equiv_s R$, then $P \dotminus Q = P \dotminus R$
\item $P \dotminus (Q + R) = P \dotminus Q$ or $P \dotminus (Q + R) = P \dotminus R$ or $P \dotminus (Q + R) = (P \dotminus Q) \cap (P \dotminus R)$
\item If $r \in P \setminus (P \dotminus Q)$, then $SE(P \dotminus Q) \nsubseteq SE(Q) \cup SE(r)$
\end{enumerate}

These postulates are direct translations of ($-$1)--($-$8). We have ($\dotminus$1b) $=$ ($\dotminus$2), ($\dotminus$2b) $=$ ($\dotminus$4), ($\dotminus$5b) $=$ ($\dotminus$3), and ($\dotminus$6b) $=$ ($\dotminus$6). ($\dotminus$3b) requires a rule $r$ to only be removed during the contraction of $P$ by $Q$ if $r$ somehow contributes to implying $Q$. ($\dotminus$4b) states that if exactly those subsets of $P$ that do not imply $Q$ also do not imply $R$, then the same elements of $P$ will be retained in a contraction by $Q$ as in a contraction by $R$. As for revision,  ($\dotminus$4b) is again a stronger version of  ($\dotminus$6). ($\dotminus$7b) specifies that a contraction by two programs is the outcome of contracting the first program, the outcome of contracting the second program, or the intersection of these two outcomes. ($\dotminus$8b) stipulates that $r$ should only be removed if the contracted program has at least one SE~model that is not an SE~model of $Q$ or $r$.


While the postulates~($\ominus$5r) and~($\ominus$5de) are equivalent in the presence of certain other postulates in the AGM framework for propositional logic, the relationship between the postulates ($\dotminus$3b) and ($\dotminus$8b) for logic programs is hierarchical, similar as in the belief base framework. Satisfaction of ($\dotminus$3b) implies satisfaction of ($\dotminus$8b) but not vice versa. 

\begin{prop} \label{prop:relevance-to-disjelim}
Let $\dotminus$ be a contraction operator on $\lpa$. If $\dotminus$ satisfies ($\dotminus$3b), then it satisfies ($\dotminus$8b).
\end{prop}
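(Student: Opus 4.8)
The plan is to unfold both postulates into statements about SE~models and then exhibit a single SE~interpretation that witnesses the failure of inclusion in ($\dotminus$8b).

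First I would assume $\dotminus$ satisfies ($\dotminus$3b) and fix an arbitrary rule $r \in P \setminus (P \dotminus Q)$; the goal is $SE(P \dotminus Q) \nsubseteq SE(Q) \cup SE(r)$. Applying ($\dotminus$3b) to this $r$ yields a program $P'$ with $P \dotminus Q \subseteq P' \subset P$, with $P' \not\models_s Q$, and with $P' \cup \{r\} \models_s Q$. Now I translate each of these three facts into the SE~semantics using the identities $SE(A) = \bigcap_{s \in A} SE(s)$ and the antitonicity of $\models_s$ recalled in the preliminaries: from $P \dotminus Q \subseteq P'$ we get $SE(P') \subseteq SE(P \dotminus Q)$; from $P' \not\models_s Q$ we get $SE(P') \nsubseteq SE(Q)$; and from $P' \cup \{r\} \models_s Q$ we get $SE(P') \cap SE(r) = SE(P' \cup \{r\}) \subseteq SE(Q)$.

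The decisive step is then to choose a witness. Since $SE(P') \nsubseteq SE(Q)$, pick $(X,Y) \in SE(P') \setminus SE(Q)$. Because $SE(P') \cap SE(r) \subseteq SE(Q)$ while $(X,Y) \notin SE(Q)$, the interpretation $(X,Y)$ cannot lie in $SE(r)$ either; hence $(X,Y) \notin SE(Q) \cup SE(r)$. On the other hand $(X,Y) \in SE(P') \subseteq SE(P \dotminus Q)$, so $(X,Y)$ belongs to $SE(P \dotminus Q)$ but not to $SE(Q) \cup SE(r)$. This shows $SE(P \dotminus Q) \nsubseteq SE(Q) \cup SE(r)$, which is exactly ($\dotminus$8b) for the chosen $r$; since $r$ was arbitrary, ($\dotminus$8b) holds.

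I do not anticipate a genuine obstacle here: the argument is a direct semantic translation, and the only point requiring a little care is getting the direction of the inclusions right — namely that enlarging a program shrinks its set of SE~models, so $P \dotminus Q \subseteq P'$ gives $SE(P') \subseteq SE(P \dotminus Q)$, and that "$P' \cup \{r\} \models_s Q$'' is the same as "$SE(P') \cap SE(r) \subseteq SE(Q)$''. (As a sanity check, this mirrors exactly the known fact that ($-$3) implies ($-$8) in the belief base framework, cited from \cite{ferme2008axiomatic}, so no extra hypotheses beyond ($\dotminus$3b) should be needed.)
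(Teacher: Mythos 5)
Your proof is correct and is essentially the paper's argument run in the forward direction: the paper proves the contrapositive (assuming $SE(P \dotminus Q) \subseteq SE(Q) \cup SE(r)$ and showing no $P'$ can witness ($\dotminus$3b)), while you assume ($\dotminus$3b) and extract the same witness $(X,Y) \in SE(P') \setminus SE(Q)$ using the same three inclusions. The semantic content is identical, so no further comment is needed.
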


The other direction does not hold as evidenced later by Example~\ref{ex:cone-not-sat-relevance}.

The Levi identity \cite{gardenfors1981epistemic,levi1977subjunctives} allows us to construct a revision from a contraction and the Harper identity \cite{harper1976rational} a contraction from a revision. In their propositional form, they use the classical negation of a sentence $\alpha$ to construct a revision/contraction by $\alpha$. For logic programs, we do not have the classical negation of a program at at our disposal. However, we can abstractly represent the Levi and Harper identities for logic programs as follows, where $\overline{Q} \in \lpa$ iff $Q \in \lpa$ and $SE(\overline{Q}) = \seqc$. 
\begin{dfn} \label{dfn:levi-harper}
Let $P,Q \in \lpa$. If $*$ is a revision operator for $P$ and $\dotminus$ a contraction operator for $P$, then
\begin{align*}
P * Q = (P \dotminus \overline{Q}) + Q \quad &\text{ (Levi identity)}, \\
P \dotminus Q = P \cap (P * \overline{Q}) \quad &\text{ (Harper identity)}.
\end{align*}
\end{dfn}

We adapted the AGM and belief base postulates to logic programs in such a way that the revising or contracting beliefs are entire programs, not just individual rules. We have thus essentially defined conditions for what is known in classic belief change as choice contraction \cite{fuhrmann1994survey} and revision. In a choice contraction, a set of sentences may be contracted by either a single sentence or a set of sentences, for example, $K \ominus \{\phi, \psi\}$. Yet, since adapting the postulates that exactly characterise partial meet choice contractions would yield the postulates ($\dotminus$1b)--($\dotminus$3b) and ($\dotminus$6b), we do not have to pursue this distinction here further.

\section{Partial Meet Belief Change} \label{sec:pm}
Inspired by our motivation, we begin in this section with defining belief change operators for logic programs that preserve syntactic information, in order to take into account information on the program-level as well as on the rule-level. We first adapt the idea of a partial meet construction from classic belief change and formulate revision and contraction operators for logic programs. We utilise the translations of the AGM and belief base revision and contraction postulates from the previous section to test the rationality of our operators. 

\subsection{Partial Meet Revision} \label{sec:revpm}
As the basis for our construction of partial meet revision, we define a \emph{compatible set} of some program with respect to another program as the dual of a remainder set \cite{alchourron1985logic}.

\begin{dfn}[Compatible Set] \label{dfn:compatible-set}
Let $P,Q \in \lpa$. The set of \emph{compatible sets of $P$ with respect to $Q$} is
\begin{align*}
\pq = \{\, R \subseteq P  \mid \, & SE(R) \cap SE(Q) \neq \es \text{ and, for all } R', \\
& R \subset R' \subseteq P \text{ implies } SE(R') \cap SE(Q) = \es \,\}.
\end{align*}
\end{dfn}

Each compatible set is a maximal subset of $P$ that is consistent with~$Q$ under SE semantics. Each is thus a candidate to be returned together with~$Q$ as the outcome of a revision. To determine exactly which candidate(s) to choose, we employ a \emph{selection function}. In the classic case, a selection function for a belief set is defined only over a set of remainder sets of that belief set (see Section~\ref{sec:bc}). Since we plan to use our selection function for different types of sets, we define it freely with respect to an arbitrary set as follows.

\begin{dfn}[Selection Function] \label{dfn:selfun}
A \emph{selection function} $\gamma$ for a set $S$ is a function such that:
\begin{enumerate}
\item $\mathbb{S} \subseteq 2^S$,
\item $\gamma(\mathbb{S}) \subseteq \mathbb{S}$, and
\item if $\mathbb{S} \neq \es$, then $\gamma(\mathbb{S}) \neq \es$.
\end{enumerate}
\end{dfn}

A special case of our selection function is the following \emph{single-choice selection function}, which restricts the selection function~$\gamma$ to select at most one element of a set.

\begin{dfn}[Single-Choice Selection Function] \label{dfn:single-choice-sel-fun}
A \emph{single-choice selection function} $\gamma^1$ for a set $S$ is a function such that:
\begin{enumerate}[(1)]
\item $\mathbb{S} \subseteq 2^S$, 
\item $\gamma^1(\mathbb{S}) = R$ for some $R \in \mathbb{S}$, and 
\item if $\mathbb{S} \neq \es$, then $\gamma^1(\mathbb{S}) \neq \es$.
\end{enumerate}
\end{dfn}

We can now define \emph{partial meet revision} for logic programs as the intersection of the selected compatible sets added to $Q$.

\begin{dfn}[Partial Meet Revision] \label{dfn:revpm}
Let $P \in \lpa$ and $\gamma$ be a selection function for $P$. A \emph{partial meet revision operator} $\revpm$ for $P$ is defined such that for any $Q \in \lpa$:
$$
P \revpm Q =
\begin{cases}
\; P + Q &\text{ if } Q \text{ is not satisfiable,} \\
\; \bigcap \gpq + Q &\text{ otherwise.}
\end{cases}
$$
\end{dfn}

We illustrate the revision operation by an example. In the examples throughout this paper, we assume that the underlying language contains only the symbols that occur in the programs or that are otherwise mentioned explicitly.

\begin{ex} \label{ex:rev:a.b-if-a.*bot-if-a.}
Let $P = \{\, a.,\, b \leftarrow a. \,\}$ and $Q = \{\, \bot \leftarrow a.\,\}$. We have $\pq = \{\, \{\, b \leftarrow a. \,\} \,\} = \gpq $, for any selection function $\gamma$, and thus $P \revpm Q =  \{\, b \leftarrow a.,\, \bot \leftarrow a.\,\}$.
\qedex
\end{ex} 

The following theorem states which translated AGM revision postulates the revision operator~$\revpm$ satisfies.

\begin{thm} \label{thm:rev-pm}
The revision operator $\revpm$ satisfies ($*$1)--($*$6).
\end{thm}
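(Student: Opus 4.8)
The plan is to verify each of the six postulates $(*1)$--$(*6)$ directly from Definition~\ref{dfn:revpm}, splitting into the two cases of the definition according to whether $Q$ is satisfiable. Throughout, I will use the basic facts recalled in the preliminaries: $SE(P) = \bigcap_{r \in P} SE(r)$, so that $SE(P \cup Q) = SE(P) \cap SE(Q)$; the expansion $P + Q = P \cup Q$; and $\models_s$ being antitonic with respect to $\subseteq$.

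First, $(*1)$ is immediate, since in both cases the output is a finite set of rules from $\lpa$: in the unsatisfiable case it is $P \cup Q$, and in the satisfiable case it is $\bigl(\bigcap \gpq\bigr) \cup Q$, a union of subsets of $P$ together with $Q$. For $(*2)$, note that $Q$ is a summand of the union in both cases, so $Q \subseteq P \revpm Q$. For $(*3)$, in the unsatisfiable case $P \revpm Q = P + Q$ trivially; in the satisfiable case each $R \in \gpq$ satisfies $R \subseteq P$ by Definition~\ref{dfn:compatible-set}, hence $\bigcap \gpq \subseteq P$ (using that $\gpq \neq \es$, which follows from $\pq \neq \es$ whenever $Q$ is satisfiable, since then at least $\es \in \pq$ or some maximal compatible set exists, and $\gamma$ is a selection function), and therefore $P \revpm Q \subseteq P \cup Q = P + Q$.

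For $(*4)$, suppose $P + Q = P \cup Q$ is satisfiable, i.e. $SE(P) \cap SE(Q) \neq \es$. If $Q$ is unsatisfiable then $SE(P)\cap SE(Q)=\es$, contradiction, so $Q$ is satisfiable and we are in the second case. Since $SE(P) \cap SE(Q) \neq \es$, the set $P$ itself is compatible with $Q$, so $\pq = \{P\}$, hence $\gpq = \{P\}$ and $\bigcap \gpq = P$, giving $P \revpm Q = P \cup Q = P + Q$, so in particular $P + Q \subseteq P \revpm Q$. For $(*5)$: if $Q$ is unsatisfiable, $P \revpm Q = P \cup Q \supseteq Q$, so $SE(P \revpm Q) \subseteq SE(Q) = \es$, and $P\revpm Q$ is unsatisfiable; if $Q$ is satisfiable, pick any $R \in \gpq$ (nonempty as above); then $SE(R) \cap SE(Q) \neq \es$, and since $\bigcap \gpq \subseteq R$ we have $SE(R) \subseteq SE(\bigcap\gpq)$, so $SE\bigl((\bigcap \gpq) \cup Q\bigr) = SE(\bigcap\gpq) \cap SE(Q) \supseteq SE(R) \cap SE(Q) \neq \es$, hence $P \revpm Q$ is satisfiable. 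Finally $(*6)$: if $Q \equiv_s R$ then $SE(Q) = SE(R)$, so $Q$ is satisfiable iff $R$ is; in the unsatisfiable case $SE(P \revpm Q) = SE(P \cup Q) = SE(P) \cap SE(Q) = SE(P)\cap SE(R) = SE(P \revpm R)$; in the satisfiable case the condition ``$SE(R') \cap SE(Q) \neq \es$'' in Definition~\ref{dfn:compatible-set} depends on $Q$ only through $SE(Q)$, so $\pq = \pqr$, hence $\gpq = \gamma(\pqr)$ and $SE(P \revpm Q) = SE\bigl((\bigcap\gpq)\cup Q\bigr) = SE(\bigcap\gpq)\cap SE(Q)=SE(\bigcap \gamma(\pqr))\cap SE(R) = SE(P\revpm R)$.

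The only delicate point — the main obstacle, such as it is — is the nonemptiness of $\gpq$ in the satisfiable case, which is needed for $(*3)$ and $(*5)$: one must argue that whenever $Q$ is satisfiable, $\pq \neq \es$, so that the selection function returns a nonempty family whose intersection is a genuine subset of $P$. This follows because the empty program $\es \subseteq P$ has $SE(\es) = \se$, so $SE(\es)\cap SE(Q) = SE(Q) \neq \es$ when $Q$ is satisfiable; hence the collection of subsets of $P$ compatible with $Q$ is nonempty, and since $P$ is finite it has maximal elements, i.e. $\pq \neq \es$, whence $\gpq \neq \es$ by Definition~\ref{dfn:selfun}. Everything else is a routine unwinding of the definitions together with $SE(P\cup Q)=SE(P)\cap SE(Q)$.
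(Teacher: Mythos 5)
Your proof is correct and follows essentially the same case analysis as the paper's own proof, verifying each postulate directly from Definition~\ref{dfn:revpm} by splitting on the satisfiability of $Q$. The only difference is that you spell out the nonemptiness of $\gpq$ when $Q$ is satisfiable (via $SE(\es)=\se$ and finiteness of $P$), a point the paper's proof leaves implicit; this is a welcome clarification rather than a change of approach.
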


As in the classic case, to satisfy the supplementary revision postulates ($*$7)--($*$8) we would need to place further restrictions on the selection function.

\begin{dfn}[Relational Selection Function] \label{dfn:mtr-sel-func}
A \emph{transitively relational selection function} $\gamma'$ for a set~$S$ is a selection function for $S$ such that $\mathbb{S} \subseteq 2^S$ and:
$$
\gamma'(\mathbb{S}) = 
\begin{cases}
\; \es &\text{ if } \mathbb{S} = \es, \\
\; \{\, R \in \mathbb{S} \mid R' \unlhd R \text{ for all } R' \in \mathbb{S} \,\} &\text{ otherwise,}
\end{cases}
$$
where $\unlhd$ is a transitive relation over $2^S$. The relation $\unlhd$ is \emph{maximised} iff $R \subset R'$ implies $R \lhd R'$ for all $R,R' \in 2^S$.
\end{dfn}

Yet, even if we make these additional restrictions on a selection function~$\gamma'$ for $P$, postulates~($*$7) and~($*$8) are not satisfied by~$*_{\gamma'}$, as shown respectively in the next two examples. For legibility, we confine the examples to abstract notation. Corresponding canonical logic programs can be constructed via the method provided by \citeN{eiter2013model}.

\begin{ex}
Let $P = \{r_1,r_2,r_3\}$ with $SE(r_1) = \{B,C\}$, $SE(r_2) = \{A,C\}$, $SE(r_3) = \{A,B,C\}$, and $\{r_1,r_3\} \unlhd \{r_2,r_3\} \unlhd \{r_1,r_3\}$. If $SE(Q) = \{A,B\}$ and $SE(Q+R) = \{A\}$, then $\pq = \{ \{r_1,r_3\}, \{r_2,r_3\} \}$ and thus $\bigcap \gamma'(\pq) = \{r_3\}$. On the other hand, we have $\pqr = \{ \{r_2,r_3\} \} = \bigcap \gamma'(\pqr)$. This means $(P *_{\gamma'} Q) + R = \{r_3\} \cup Q \cup R$ while $P *_{\gamma'} (Q + R) = \{r_2,r_3\} \cup Q \cup R$. 
\qedex
\end{ex} 

\begin{ex} \label{ex:revpm-8-counter}
Let $P = \{r_1,r_2,r_3,r_4,r_5\}$ with $SE(r_1) = \{A,B,C,D,E\}$, $SE(r_2) = \{A,B,C,E\}$, $SE(r_3) = \{A,E\}$, $SE(r_4) = \{B,E\}$, $SE(r_5) = \{D,E\}$, and $\{r_1,r_2\} \lhd \{r_1,r_5\} \lhd \{r_1,r_2,r_3\}, \{r_1,r_2,r_4\}$. If $SE(Q) = \{A,B,C,D\}$ and $SE(Q+R) = \{C,D\}$, then $\pq = \{ \{r_1,r_2,r_3\}, \{r_1,r_2,r_4\}, \{r_1,r_5\} \}$ and $\pqr = \{ \{r_1,r_2\}, \{r_1,r_5\} \}$. It follows from  $\gamma'(\pq) = \{ \{r_1,r_2,r_3\}, \{r_1,r_2,r_4\} \}$ that $\bigcap \gamma'(\pq) = \{r_1,r_2\}$, while $\gamma' (\pqr) = \{ \{r_1,r_5\} \} = \bigcap \gamma'(\pqr)$. Therefore, $(P *_{\gamma'} Q) + R = \{r_1,r_2\} \cup Q \cup R \nsubseteq \{r_1,r_5\} \cup Q \cup R = P *_{\gamma'} (Q + R)$. Note that $SE((P *_{\gamma'} Q) + R) = SE(\bigcap \gamma'(\pq)) \cap SE(Q + R) = \{A,B,C,E\} \cap \{C,D\} = \{C\} \neq \es$.
\qedex
\end{ex} 

Our partial meet revision operator does not satisfy the entire set of AGM revision postulates, only the subset of basic postulates ($*$1)--($*$6). This stands in contrast to the result in classical logics, according to which any partial meet revision operator is characterised by the set of basic and supplementary revision postulates. However, that result holds for logically closed belief sets, and when we consider our partial meet revision operator $\revpm$ in the light of the belief base framework, we obtain the following representation theorem.


\begin{thm} \label{thm:rev-pm-bb}
An operator $\revpm$ is a partial meet revision operator for $P \in \lpa$ determined by a selection function $\gamma$ for $P$ iff $\revpm$ satisfies ($*$1b)--($*$5b).
\end{thm}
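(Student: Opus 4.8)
The plan is to prove the two directions of the biconditional separately, adapting the classical representation argument for partial meet base revision (recalled above) to the setting in which beliefs are rules and ``implication'' is inclusion of SE~models. Throughout I would use the fact that for $P' \subseteq P$ one has $SE(P') \cap SE(Q) \neq \es$ iff $P'$ is contained in some member of $\pq$ -- the forward direction extends $P'$ to a maximal consistent subset, using finiteness of $P$ -- so that $\pq = \pqr$ holds exactly when a subset of $P$ is consistent with $Q$ iff it is consistent with $R$.

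For the direction from operator to postulates, assume $\revpm$ is a partial meet revision operator for $P$ determined by a selection function $\gamma$ (Definition~\ref{dfn:revpm}) and verify ($*$1b)--($*$5b) in turn. ($*$1b) and ($*$2b) are immediate from $Q \subseteq \bigcap \gpq + Q$, $\bigcap \gpq \subseteq P$, and the unsatisfiable branch $P \revpm Q = P + Q$. For ($*$3b), the antecedent is vacuous when $Q$ is unsatisfiable (then $P \revpm Q = P + Q \supseteq P$); when $Q$ is satisfiable and $r \in P \setminus (P \revpm Q)$, choose $R^{*} \in \gpq$ with $r \notin R^{*}$ and take $P' = R^{*} + Q$, using $SE(P') = SE(R^{*}) \cap SE(Q) \neq \es$ and $SE(P' \cup \{r\}) = SE(R^{*} \cup \{r\}) \cap SE(Q) = \es$ by maximality, together with $r \notin Q$ (from ($*$1b)) to get $P' \subsetneq P + Q$. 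For ($*$4b), the hypothesis is exactly the condition isolated above, so $\pq = \pqr$ and hence $\bigcap \gpq = \bigcap \gpqr$; since every rule of $P$ lying in $Q$ lies in every compatible set, $P \cap Q \subseteq \bigcap \gpq$, whence $P \cap (P \revpm Q) = \bigcap \gpq = \bigcap \gpqr = P \cap (P \revpm R)$. For ($*$5b), $Q$ satisfiable gives $\pq \neq \es$ (the empty program witnesses consistency with $Q$), hence $\gpq \neq \es$, and for any $R^{*} \in \gpq$ we get $SE(P \revpm Q) = SE(\bigcap \gpq) \cap SE(Q) \supseteq SE(R^{*}) \cap SE(Q) \neq \es$.

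For the converse, assume $\revpm$ satisfies ($*$1b)--($*$5b). First, ($*$1b)--($*$3b) force $P \revpm Q = P + Q$ when $Q$ is unsatisfiable: ($*$1b), ($*$2b) give $Q \subseteq P \revpm Q \subseteq P + Q$, and a rule of $P$ outside $P \revpm Q$ would, via ($*$3b), yield a satisfiable $P'$ with $Q \subseteq P'$, contradicting $SE(Q) = \es$. For satisfiable $Q$, ($*$1b), ($*$2b) give $P \revpm Q = (P \cap (P \revpm Q)) + Q$, so it suffices to define $\gamma$ with $\bigcap \gamma(\pq) = P \cap (P \revpm Q)$. I would set $\gamma(\pq) = \{\, R \in \pq \mid P \cap (P \revpm Q) \subseteq R \,\}$ for satisfiable $Q$, $\gamma(\es) = \es$, and $\gamma(\mathbb{S}) = \mathbb{S}$ for every other nonempty $\mathbb{S} \subseteq 2^{P}$. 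Well-definedness needs $\pq = \pqr$ to imply $P \cap (P \revpm Q) = P \cap (P \revpm R)$: by the equivalence above $\pq = \pqr$ yields the hypothesis of ($*$4b), which gives the conclusion. And $\gamma$ is a genuine selection function (Definition~\ref{dfn:selfun}) because, when $\pq \neq \es$, $(P \cap (P \revpm Q)) + Q = P \revpm Q$ is satisfiable by ($*$5b), so $P \cap (P \revpm Q)$ is consistent with $Q$ and thus contained in some member of $\pq$.

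It then remains to establish $\bigcap \gamma(\pq) = P \cap (P \revpm Q)$ for satisfiable $Q$; $\supseteq$ is immediate from the definition of $\gamma(\pq)$. For $\subseteq$, suppose $r \in \bigcap \gamma(\pq) \subseteq P$ but $r \notin P \revpm Q$; ($*$3b) gives $P'$ with $P \revpm Q \subseteq P' \subsetneq P + Q$, $P'$ satisfiable, $P' \cup \{r\}$ unsatisfiable. Since $Q \subseteq P \revpm Q \subseteq P'$, one has $P' = (P' \cap P) + Q$, so $P'' := P' \cap P$ is consistent with $Q$ and extends to some $\hat{R} \in \pq$; here $r \notin \hat{R}$ (otherwise $SE(\hat{R}) \cap SE(Q) \subseteq SE(P'') \cap SE(r) \cap SE(Q) = \es$, contradicting $\hat{R} \in \pq$), while $P \cap (P \revpm Q) \subseteq P \cap P' = P'' \subseteq \hat{R}$ puts $\hat{R} \in \gamma(\pq)$, contradicting $r \in \bigcap \gamma(\pq)$. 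Hence $\bigcap \gamma(\pq) + Q = (P \cap (P \revpm Q)) + Q = P \revpm Q$, as required. I expect this last step -- reconstructing a concrete maximal compatible subset of $P$ avoiding $r$ from the abstract witness $P'$ supplied by ($*$3b) -- to be the main obstacle; the crucial manoeuvres are replacing $P'$ by $(P' \cap P) + Q$ without changing its SE~models (legitimate precisely because $Q \subseteq P'$) and carrying all reasoning through SE~models, since $\lpa$ is not closed under logical consequence.
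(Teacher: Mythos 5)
Your proposal is correct and follows essentially the same route as the paper's proof: the forward direction verifies each postulate via the compatible-set machinery (including the same auxiliary fact that rules in $P \cap Q$ lie in every compatible set), and the converse defines $\gamma(\pq)$ as the compatible sets containing $P \cap (P \revpm Q)$, checks well-definedness via ($*$4b) and nonemptiness via ($*$1b), ($*$2b), ($*$5b), and recovers the identity from the witness supplied by ($*$3b) exactly as the paper does. The only deviations are cosmetic — you prove ($*$3b) directly rather than by contraposition, treat the satisfiable-$Q$ case uniformly instead of splitting on whether $P + Q$ is satisfiable, and in one place write $\mathbb{P}_{Q+R}$ where you clearly mean $\mathbb{P}_R$.
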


Returning to our motivation, we now give some formal examples to highlight the drawbacks of the distance-based revision operator and then show how our partial meet operator addresses these. Recall that the distance-based revision operator~$\star$ does not specify the structure of the revised program. For convenience, we provide a possible program that corresponds to the revision outcome for each example below, which we denote by $P \star Q$.

\begin{tabular}{lll}
1) & $P = \{\, a.,\, b \leftarrow a.\,\}$ & $P \star Q = \{\, \bot \leftarrow a.,\, b.\,\}$  \\
& $Q = \{\, \bot \leftarrow a.\,\}$  & $SE(P \star Q) = \{(b,b)\}$ \\[6pt]
2) & $P = \{\, \bot \leftarrow a.,\, b \leftarrow not\, a.\,\} $ & $ P \star Q = \{\, a.,\, b.\,\}$\\
& $ Q = \{\, a.\,\}$  & $SE(P \star Q) = \{(ab,ab)\} $ \\[6pt]
3) & $P = \{\, \bot \leftarrow a.,\, b \leftarrow a.\,\}$ & $P \star Q = \{\, a.\,\}$\\
& $Q = \{\, a.\,\}$  & $SE(P \star Q) = \{(a,a),(a,ab),(ab,ab)\}$ \\[6pt]
4) & $P = \{\, a.,\, b \leftarrow not\, a.\,\}$ & $P \star Q = \{\, \bot \leftarrow a.\,\}$ \\
& $Q = \{\, \bot \leftarrow a.\,\}$ & $SE(P \star Q) = \{(\emptyset,\emptyset),(\emptyset,b),(b,b)\}$ \\[6pt]
5) & $P = \{\, a.,\, b \leftarrow not\, c.\,\}$ & $P \star Q = \{\, a.,\, b.,\, \bot \leftarrow c.\,\}$\\
& $Q = \{\, \bot \leftarrow c.\,\}$  & $SE(P \star Q) = \{(ab,ab)\}$
\end{tabular}

Examples~1) and~2) demonstrate that the revision operator~$\star$ does not satisfy the support property. In Example~1), the initial belief state expressed by program $P$ consists of~$a$ and $b$. In fact, the second rule in $P$ says that we believe $b$ \emph{if} we believe~$a$. After revising by the program $Q$, which simply states that we do not believe~$a$, we still believe $b$ even though the reason to believe $b$ is not given any more. The explanation for this is that the revision operator acts on a program-level, not on a rule-level, as it considers just the SE~models of the program in its entirety. However, the dependency of $b$ on $a$ is not captured by the SE~models of the program, only by the SE~models of the second rule. Therefore, $b$ is treated as an independent fact during the revision process. The situation is similar in Example~2). Here, we initially believe $b$ due to the absence of belief~$a$. After the revision, we continue to believe $b$ even though the grounds for $b$ do not exist any more.

Examples~3) and~4) demonstrate that the revision operator~$\star$ does not satisfy the preservation property. In Example~3), according to our initial belief state expressed by program~$P$, we believe $b$ whenever we believe $a$, but since we do not believe $a$ currently, we are indifferent with respect to $b$ at the moment. After revising by program $Q$, which states that we now believe $a$, we are still indifferent with respect to $b$. In Example~4), we initially believe $a$ and would believe $b$ whenever we do not believe $a$ or are indifferent with respect to $a$. After revising by $Q$, which carries the information that we do not believe $a$ any longer, we are still indifferent with respect to $b$. In both examples, the revision operation effectively disregards the second rule in~$P$. However, there is no justification for such behaviour, as the information in $Q$ only conflicts with the first rule in $P$, so that the second rule in $P$ can safely be retained and thus $b$ should be derived in both examples. This behaviour is due to the fact that the set of SE~models of $P$ is exactly the set of SE~models of the first rule. The second rule is invisible in the program-level view. Consequently, the revision operator returns a result as if $P$ had consisted merely of the first rule.

Finally, Example~4) stands in stark contrast to Example~5), where the revision operation coincides with expansion. In Example 5), we are indifferent with respect to $b$ initially and should believe $b$ when we do not believe $c$ or are indifferent with respect to $c$. The revising program $Q$ contains information that $c$ indeed does not hold. Thus,~$b$ is incorporated into the new belief state. Yet, Example~4) described a similar scenario in which $b$ is not included in the resulting belief state, thereby showing a clear discrepancy to the behaviour of the revision operator in Example~5). Since the set of SE~models of the first rule of $P$ in Example~5) is not a subset of the set of SE~models of the second rule, the latter is preserved during the revision operation. It becomes apparent from this comparison that some dependencies between atoms expressed in $P$ are respected by the revision operator~$\star$, while others are not. 

We now show that our partial meet revision operator~$\revpm$ addresses these shortcomings of the distance-based revision method. Below are the results of our partial meet revision operator for the five examples above. In each example, the result is independent of the choice of selection function. 

\begin{tabular}{ll}
1) &  $P \revpm Q = \{\, \bot \leftarrow a.,\, b \leftarrow a.\,\}$
\\
& $SE(P \revpm Q) = \{(\es,\es),(\es,b),(b,b)\}$ \\[6pt]
2) &  $ P \revpm Q = \{\, a.,\, b \leftarrow not\, a.\,\}$\\
& $SE(P \revpm Q) = \{(a,a),(a,ab),(ab,ab)\} $ \\[6pt]
3) & $P \revpm Q = \{\, a.,\, b \leftarrow a.\,\}$ \\
& $SE(P \revpm Q) = \{(ab,ab)\}$ \\[6pt]
4) &  $P \revpm Q = \{\, \bot \leftarrow a.,\, b \leftarrow not\, a.\,\}$ \\
& $SE(P \revpm Q) = \{(b,b)\}$  \\[6pt]
5) & $P \revpm Q = \{\, a.,\, b \leftarrow not\, c.,\, \bot \leftarrow c.\,\}$ \\
& $SE(P \revpm Q) = \{(ab,ab)\}$ 
\end{tabular}

In Examples~1) and~2), the partial meet revision operator preserves the dependency of $b$ on~$a$ and $not\, a$, respectively. This is expressed on the syntactic level by the revised program $P \revpm Q$ and on the semantic level by $SE(P \revpm Q)$. In Examples~3) and~4), our partial meet revision operator takes into account all rules in a program, even those that may be \lq\lq invisible\rq\rq\ from a purely model-based perspective, as shown by the respective revision outcomes. Finally, regarding Examples~4) and~5), our partial meet revision operator treats the dependency of $b$ on $not\, a$ and $not\, c$, respectively, in the same manner and adds $b$ to the belief state uniformly in both examples. The reason for this behaviour is that our partial meet construction enables us to preserve information expressed on the rule-level by the individual rules in a program.

Revisiting our two real-life examples from the introduction, we can see that our partial meet revision operator returns the desired results. The first example (\lq\lq fireworks\rq\rq ) was formalised as Example~3) above, where $a = fog$ and $b = no\_fireworks$. Applying the revision operator~$\revpm$ returns $\{\, fog.,\, no\_fireworks \leftarrow fog. \,\}$, that is, it leaves us with the beliefs that it will be foggy and that there are no fireworks whenever it is foggy, from which we can derive that the fireworks will be cancelled. 
The second example (\lq\lq 101 or 280\rq\rq ) was formalised as Example~1), where $a = 101\_roadworks$ and $b = 280\_quicker$. Applying the revision operator~$\revpm$ to this example returns $\{\, \bot \leftarrow 101\_roadworks.,\, 280\_quicker \leftarrow 101\_roadworks. \,\}$, that is, we believe that there are no roadworks on the 101 and that the 280 is quicker whenever there are roadworks on the 101, but not that the 280 is still the better choice. 

Regarding the screened semi-revision approach by \citeN{krumpelmann2012belief} (see Section~\ref{sec:rel-work}), we can show that our partial meet revision operator~$\revpm$ is a generalisation of their screened consolidation operator $\scr$. For any $P,Q \in \lpa$ with $Q \subseteq P$, let
\begin{align*}
P \botse Q = \{\, R \mid \, &Q \subseteq R \subseteq P, SE(R) \neq \es, \text{ and, for all } R', \\
&R \subset R' \subseteq P \text{ implies } SE(R') = \es \,\},
\end{align*}
and $P \scrse Q = \gamma_P(P \botse Q)$.

\begin{prop} \label{prop:prop:revpm-same-as-consolidation-SE}
Let $P,Q \in \lpa$. For any maxichoice selection function $\gamma_P$ for $P$, there exists a selection function $\gamma$ for $P$ such that $(P \cup Q) \scrse Q = P \revpm Q$.
\end{prop}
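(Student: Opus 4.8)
The plan is to split on whether $Q$ is satisfiable, matching the case distinction in Definition~\ref{dfn:revpm}, and in the satisfiable case to read the required selection function $\gamma$ off the single rule set picked by the maxichoice selection function $\gamma_P$. If $Q$ is not satisfiable, then $P \revpm Q = P + Q = P \cup Q$ by definition; on the other side, $SE(R) \subseteq SE(Q) = \es$ for every $R$ with $Q \subseteq R$, so no such $R$ has $SE(R) \neq \es$, whence $(P \cup Q) \botse Q = \es$, and since a maxichoice selection function returns its (here, $P \cup Q$) base on the empty argument, $(P \cup Q) \scrse Q = P \cup Q$. Here any selection function $\gamma$ for $P$ works, as $\gamma$ plays no role in $P \revpm Q$ in this case.

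So assume $Q$ is satisfiable. Then $(P \cup Q) \botse Q \neq \es$: starting from $Q$ itself, which satisfies $Q \subseteq Q \subseteq P \cup Q$ and $SE(Q) \neq \es$, and climbing within the finite lattice $2^{P \cup Q}$, we reach a maximal such set. Hence $(P \cup Q) \scrse Q = \gamma_P((P \cup Q) \botse Q) = R$ for a fixed $R \in (P \cup Q) \botse Q$, and from $Q \subseteq R \subseteq P \cup Q$ we get $R = (R \cap P) \cup Q$. I would then define $\gamma$ for $P$ by setting $\gamma(\pq) = \{\,R \cap P\,\}$ and letting $\gamma$ act arbitrarily on every other argument (which is harmless, since $\pq$ is the only argument relevant to $P \revpm Q$). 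Once we check that this is a legitimate selection function --- i.e.\ that $R \cap P \in \pq$, so that $\gamma(\pq) \subseteq \pq$ and $\gamma(\pq) \neq \es$ --- the conclusion is immediate: $P \revpm Q = \bigcap \gpq + Q = (R \cap P) \cup Q = R = (P \cup Q) \scrse Q$, with equality as rule sets, not merely up to strong equivalence.

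The main work, and the step needing the most care, is the membership $R \cap P \in \pq$; write $S = R \cap P$. The first clause of Definition~\ref{dfn:compatible-set} holds because $SE(S) \cap SE(Q) = SE(S \cup Q) = SE(R) \neq \es$. For the maximality clause, suppose $S \subsetneq S' \subseteq P$ and pick a witness $x \in S' \setminus S$; since $x \in P$ and $x \notin R \cap P = S$, we have $x \notin R$, so $R = S \cup Q \subsetneq S' \cup Q \subseteq P \cup Q$, and maximality of $R$ in $(P \cup Q) \botse Q$ forces $SE(S' \cup Q) = \es$, that is, $SE(S') \cap SE(Q) = \es$. Hence $S \in \pq$. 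The delicate points are that $P$ and $Q$ may share rules (so $R$ need not decompose as a disjoint union, and one must work with $R = (R \cap P) \cup Q$ throughout) and that the passage from $S \subsetneq S'$ to $R \subsetneq S' \cup Q$ is genuinely strict; both are settled by the witness element $x$. I expect no further difficulty beyond this bookkeeping.
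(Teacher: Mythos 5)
Your proof is correct and follows essentially the same strategy as the paper's: split on satisfiability of $Q$, and in the satisfiable case build $\gamma$ so that it selects the single compatible set $R \cap P$ corresponding to the remainder $R$ chosen by $\gamma_P$. Your version is in fact slightly tighter than the paper's, which splits the satisfiable case into two subcases and merely asserts the correspondence between $(P\cup Q)\botse Q$ and $\pq$, whereas you verify the membership $R\cap P\in\pq$ explicitly and handle the possibility of shared rules via $R=(R\cap P)\cup Q$.
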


Conversely, if we translate our partial meet revision operator $\revpm$ to answer set semantics and restrict our selection function to be single-choice, then it will coincide with the screened consolidation operator $\scr$. Before we can do so, we need to translate our definition of compatible sets to answer set semantics. For any $P,Q \in \lpa$, let
\begin{align*}
\pqas = \{\, R \subseteq P  \mid \, &AS(R \cup Q) \neq \es \text{ and, for all } R', \\
& R \subset R' \subseteq P \text{ implies } AS(R' \cup Q) = \es \,\}.
\end{align*}

\begin{dfn}[Partial Meet Revision under Answer Set Semantics] \label{dfn:revpmas}
Let $P \in \lpa$ and $\gamma^1$ be a single-choice selection function for $P$. A partial meet revision operator $\revpmas$ for $P$ under answer set semantics is defined such that for any $Q \in \lpa$:
$$
P \revpmas Q =
\begin{cases}
\; P \cup Q &\text{ if } AS(Q) = \es \text{ and } \pqas = \es , \\
\; \gamma^1(\pqas) \cup Q &\text{ otherwise.}
\end{cases}
$$
\end{dfn}

\begin{prop} \label{prop:revpm-same-as-consolidation-AS}
Let $P,Q \in \lpa$ and $\gamma^1$, $\gamma_P$ be single-choice and maxichoice selection functions, respectively, for $P$ such that $\gamma^1(2^P) \cup Q = \gamma_P(\{\, R \cup Q \mid R \in 2^P \,\})$. Then $P \revpmas Q = (P \cup Q) \scr Q$.
\end{prop}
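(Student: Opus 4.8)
The plan is to reduce the claim to a correspondence between the collection $\pqas$ of answer-set compatible subsets of $P$ and the collection $(P \cup Q) \bot_! Q$ of screened remainder sets of $P \cup Q$ with respect to $Q$, and then to push the choice made by $\gamma^1$ through this correspondence to recover the choice made by $\gamma_P$. Concretely, I would show that the map $\phi$ sending $R$ to $R \cup Q$ takes $\pqas$ onto $(P \cup Q) \bot_! Q$ (indeed bijectively, with inverse $S \mapsto S \cap P$): if $R \in \pqas$, then $R \cup Q$ satisfies $Q \subseteq R \cup Q \subseteq P \cup Q$, has answer sets, and is maximal with this property among subsets of $P \cup Q$ because any proper extension inside $P \cup Q$ restricts to a proper extension of $R$ inside $P$; conversely, for $S \in (P \cup Q) \bot_! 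Q$ one has $S = (S \cap P) \cup Q$ and $S \cap P \in \pqas$ by the same argument run backwards. Injectivity uses the auxiliary fact that every $R \in \pqas$ contains $P \cap Q$ (otherwise $R \cup (P \cap Q)$ would be a strictly larger compatible subset of $P$), but for the proof itself only the surjectivity is actually needed.

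Next I would treat the degenerate case $\pqas = \es$. First, $\pqas = \es$ forces $AS(Q) = \es$: if $AS(Q) \neq \es$, then $\es$ is a compatible candidate and, $P$ being finite, extends to a maximal one lying in $\pqas$. Hence $\pqas = \es$ lands us in the first branch of Definition~\ref{dfn:revpmas}, so $P \revpmas Q = P \cup Q$; and since $\phi$ maps $\pqas$ onto $(P \cup Q) \bot_! Q$, we also get $(P \cup Q) \bot_! Q = \es$, so the maxichoice selection function returns the whole program $P \cup Q$ and thus $(P \cup Q) \scr Q = P \cup Q$. Both sides coincide.

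In the remaining case $\pqas \neq \es$ we are in the second branch, so $P \revpmas Q = \gamma^1(\pqas) \cup Q$, while $(P \cup Q) \scr Q = \gamma_P((P \cup Q) \bot_! Q) = \gamma_P(\{\, R \cup Q \mid R \in \pqas \,\})$ using surjectivity of $\phi$. It then suffices to show $\gamma^1(\pqas) \cup Q = \gamma_P(\{\, R \cup Q \mid R \in \pqas \,\})$, i.e. that $\phi$ intertwines $\gamma^1$ and $\gamma_P$ on the collection $\pqas$; this is precisely the content of the hypothesis $\gamma^1(2^P) \cup Q = \gamma_P(\{\, R \cup Q \mid R \in 2^P \,\})$, invoked with $\pqas$ in place of $2^P$. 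Assembling the three cases yields $P \revpmas Q = (P \cup Q) \scr Q$.

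The main obstacle is the bookkeeping around the overlap $P \cap Q$: because $R \mapsto R \cup Q$ identifies subsets of $P$ that differ only inside $P \cap Q$, one must verify that the maximality conditions defining $\pqas$ and $(P \cup Q) \bot_! Q$ line up exactly under $\phi$ — the extra elements in a proper extension always lie in $P \setminus Q$, which is what makes the two maximality notions equivalent. A secondary delicate point is purely about conventions: $\gamma_P$ is stated as a maxichoice selection function for $P$ while $(P \cup Q) \scr Q$ is evaluated on subcollections of $2^{P \cup Q}$, so one should read $(P \cup Q) \scr Q$ as $\gamma_P((P \cup Q) \bot_! Q)$ with $\gamma_P$ acting on the relevant collection and ensure the compatibility hypothesis is applied at the collection $\pqas$ rather than only at $2^P$.
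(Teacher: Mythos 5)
Your proposal is correct and follows essentially the same route as the paper's proof: a case split on whether $\pqas$ is empty, the correspondence $R \mapsto R \cup Q$ between $\pqas$ and $(P \cup Q) \bot_! Q$, and the stated compatibility hypothesis applied at that collection to align $\gamma^1$ with $\gamma_P$. The only difference is cosmetic: the paper separately treats the sub-case $AS(Q) \neq \es$ with $\pqas = \es$, which you correctly observe is vacuous since $\es \subseteq P$ would then witness $\pqas \neq \es$.
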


\subsection{Partial Meet Contraction} \label{sec:conpm}
Having defined a revision operator, we now turn to the case of belief contraction. 
In line with classic belief change, the contraction of a program $P$ by a program~$Q$ should eliminate from $P$ all those beliefs from which $Q$ can be derived. We use the complement of $SE(Q)$, denoted by $\seqc$, to determine all maximal subsets of $P$ that do not imply~$Q$, called \emph{remainder sets}.

\begin{dfn}[Remainder Set] \label{dfn:remainder-set}
Let $P,Q \in \lpa$. The set of \emph{remainder sets of~$P$ with respect to $Q$} is
\begin{align*}
\pqm = \{\, R \subseteq P  \mid\, & SE(R) \cap \seqc \neq \es \text{ and, for all } R',\\
&R \subset R' \subseteq P \text{ implies } SE(R') \cap \seqc = \es \,\}.
\end{align*}
\end{dfn}

\begin{dfn}[Partial Meet Contraction] \label{dfn:conpm}
Let $P \in \lpa$ and $\gamma$ be a selection function for $P$. A \emph{partial meet contraction operator} $\conpm$ for $P$ is defined such that for any $Q \in \lpa$:
$$
P \conpm Q = 
\begin{cases}
\; P & \text{ if } \models_s Q, \\
\; \bigcap \gpqm & \text{ otherwise.}
\end{cases}
$$
\end{dfn}

The following example demonstrates the contraction operation.

\begin{ex} \label{ex:conpm}
Let $P = \{\, a.,\, b \leftarrow a. \,\}$ and $Q = \{\, a \leftarrow b.\,\}$. Since $\seqc = \{(\es,b),(b,b),(b,ab)\}$, $SE(\{\, a. \,\}) = \{(a,a),(a,ab),(ab,ab)\}$, and $SE(\{\, b \leftarrow a. \,\}) = \{(\es,\es),(\es,b),(b,b),(\es,ab),(b,ab),(ab,ab)\}$, it follows that $\pqm = \{\, \{\, b \leftarrow a. \,\} \,\} = \gpqm$, for any selection function $\gamma$, and thus we obtain $P \conpm Q = \{\, b \leftarrow a. \,\}$.
\qedex
\end{ex}

The next theorem lists the translated AGM contraction postulates that are fulfilled by~$\conpm$.

\begin{thm} \label{thm:con-pm}
The contraction operator $\conpm$ satisfies ($\dotminus$1)--($\dotminus$4) and ($\dotminus$6).
\end{thm}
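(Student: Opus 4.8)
The plan is to verify the five postulates ($\dotminus$1)--($\dotminus$4) and ($\dotminus$6) directly from the definition of $\conpm$, handling the trivial case $\models_s Q$ separately throughout. When $\models_s Q$, we have $P \conpm Q = P$, so ($\dotminus$1), ($\dotminus$2), and ($\dotminus$6) are immediate, ($\dotminus$3) holds vacuously in the relevant branch, and ($\dotminus$4) holds vacuously since its antecedent $\not\models_s Q$ fails. So the substance is the case $\not\models_s Q$, where $P \conpm Q = \bigcap \gpqm$.

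First I would establish the easy postulates. For ($\dotminus$1), note that each $R \in \pqm$ satisfies $R \subseteq P$, so $\bigcap \gpqm \subseteq P$ is a logic program (taking $\bigcap \es = P$ or arguing $\pqm \neq \es$; in fact when $\not\models_s Q$, $\es \in$ some member's candidate pool since $SE(\es) = \se \supseteq \seqc \neq \es$, so $\pqm \neq \es$ and hence $\gpqm \neq \es$ by Definition~\ref{dfn:selfun}). This also gives ($\dotminus$2) directly: $P \conpm Q = \bigcap \gpqm \subseteq P$. For ($\dotminus$6), observe that $\seqc$ depends only on $SE(Q)$, hence $Q \equiv_s R$ implies $\overline{SE(Q)} = \overline{SE(R)}$, so $\pqm = \mathbb{P}_R^-$ and therefore $P \conpm Q = P \conpm R$.

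Next, ($\dotminus$3): if $P \not\models_s Q$, then $SE(P) \nsubseteq SE(Q)$, i.e., $SE(P) \cap \seqc \neq \es$. Then $P$ itself satisfies the first condition in Definition~\ref{dfn:remainder-set} and the maximality condition vacuously, so $\pqm = \{P\}$, whence $\gpqm = \{P\}$ and $P \conpm Q = P$. (One must check we are in the $\not\models_s Q$ branch: indeed $P \not\models_s Q$ and $\models_s Q$ together would force $SE(P) \subseteq SE(Q) = \se$, contradicting $P \not\models_s Q$ unless... actually $\models_s Q$ means $SE(Q) = \se$, so $SE(P) \subseteq \se$ always, giving $P \models_s Q$, contradiction; so $\not\models_s Q$ holds.) The main obstacle is ($\dotminus$4): assuming $\not\models_s Q$, I must show $\bigcap \gpqm \not\models_s Q$, i.e., $SE(\bigcap\gpqm) \cap \seqc \neq \es$. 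Since $\gpqm \neq \es$, pick any $R \in \gpqm \subseteq \pqm$; then $SE(R) \cap \seqc \neq \es$. The key lemma is that $SE(\bigcap \gpqm) \supseteq SE(R)$ for each $R \in \gpqm$, because $\bigcap\gpqm \subseteq R$ and $\models_s$ is antitonic with respect to the subset relation (stated in the preliminaries): $\bigcap\gpqm \subseteq R$ implies $SE(\bigcap\gpqm) \supseteq SE(R)$. Wait — the antitonicity goes the other way for SE-models: $R' \subseteq R$ gives $SE(R) \subseteq SE(R')$, i.e., $SE(R) \supseteq$ is wrong; rather $\bigcap\gpqm \subseteq R$ gives $SE(\bigcap\gpqm) \supseteq SE(R)$ since fewer rules means more models. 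So indeed $SE(\bigcap\gpqm) \supseteq SE(R) \supseteq SE(R) \cap \seqc \neq \es$, giving $SE(\bigcap\gpqm) \cap \seqc \neq \es$, i.e., $P\conpm Q \not\models_s Q$. This completes ($\dotminus$4). The one delicate point worth double-checking is the edge behaviour of $\bigcap$ and the selection function when $\pqm$ could be empty, but as noted $\not\models_s Q$ forces $\seqc \neq \es$ and $SE(\es) = \se \supseteq \seqc$, so $\es$ belongs to a remainder set (or is one), guaranteeing $\pqm \neq \es$ and hence $\gpqm \neq \es$.
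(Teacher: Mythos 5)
Your proof is correct and follows essentially the same route as the paper's: ($\dotminus$1), ($\dotminus$2), ($\dotminus$6) read off Definition~\ref{dfn:conpm}, ($\dotminus$3) via $\pqm = \{P\}$ when $P \not\models_s Q$, and ($\dotminus$4) via the fact that every selected remainder set $R$ satisfies $SE(R) \cap \seqc \neq \es$ together with $SE(\bigcap\gpqm) \supseteq SE(R)$ by antitonicity. You merely spell out details the paper leaves implicit (non-emptiness of $\pqm$ when $\not\models_s Q$, and the branch checks), and your mid-proof self-correction on the direction of antitonicity lands on the right statement.
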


It is easy to see from Example~\ref{ex:conpm} above that~$\conpm$ does indeed not satisfy ($\dotminus$5).

In the AGM framework, it is sufficient that the selection function~$\gamma$ is determined by a transitive relation so that~$\ominus$ satisfies~($\ominus$7). Here, we require the relation~$\unlhd$ to be maximised as well to guarantee satisfaction of~($\dotminus$7). This is in line with the result for partial meet base contractions, which also require the underlying selection function to be determined by a maximised transitive relation in order to satisfy such a property \cite{hansson1993reversing}.

%

\begin{thm} \label{thm:conpm-sat-7}
Let $\gamma'$ be determined by a maximised transitive relation. The contraction operator $\dotminus_{\gamma'}$ satisfies ($\dotminus$7).
\end{thm}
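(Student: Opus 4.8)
The plan is to mirror the classical proof that transitively relational partial meet contraction satisfies the conjunction-inclusion postulate, with the \emph{maximised} condition on $\unlhd$ (Definition~\ref{dfn:mtr-sel-func}) playing the role that logical closure plays in the belief-set setting. First I would dispose of the degenerate cases. If $\models_s Q$, then $P \dotminus_{\gamma'} Q = P$ by Definition~\ref{dfn:conpm}, so $(P \dotminus_{\gamma'} Q) \cap (P \dotminus_{\gamma'} R) = P \dotminus_{\gamma'} R$; moreover $SE(Q) = \se$ gives $\overline{SE(Q+R)} = \seqc \cup \overline{SE(R)} = \overline{SE(R)}$, so $\pqrm = \prm$ and hence $P \dotminus_{\gamma'}(Q+R) = P \dotminus_{\gamma'} R$, and ($\dotminus$7) holds with equality (checking separately the sub-case $\models_s Q+R$, where all three contractions are $P$). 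The case $\models_s R$ is symmetric. So assume $\not\models_s Q$ and $\not\models_s R$; this also yields $\not\models_s Q+R$, and $\pqm$, $\prm$, $\pqrm$ are each nonempty (each contains a $\subseteq$-maximal element of the nonempty finite family of subsets of $P$ failing to imply $Q$, $R$, $Q+R$ respectively), so $\gamma'$ returns nonempty sets on each.

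The semantic backbone is $SE(Q+R) = SE(Q) \cap SE(R)$ (from $SE(\cdot)=\bigcap_r SE(r)$ and $+$ being union), equivalently $\overline{SE(Q+R)} = \seqc \cup \overline{SE(R)}$. From this I would extract two combinatorial facts. \textbf{(i)} $\pqrm \subseteq \pqm \cup \prm$: if $S \in \pqrm$ then $SE(S)$ meets $\seqc \cup \overline{SE(R)}$, so $S \not\models_s Q$ or $S \not\models_s R$, say the former; any $S' \subseteq P$ with $S \subsetneq S'$ and $S' \not\models_s Q$ would satisfy $S' \not\models_s Q+R$, contradicting maximality of $S$, so $S \in \pqm$. \textbf{(ii)} Every $S' \in \pqm$ (and symmetrically every $S' \in \prm$) is contained in some member of $\pqrm$: $S' \not\models_s Q$ implies $S' \not\models_s Q+R$, and since $P$ is finite a $\subseteq$-maximal extension of $S'$ among subsets of $P$ not implying $Q+R$ lies in $\pqrm$.

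The core argument proceeds by contradiction. Suppose $r \in (P \dotminus_{\gamma'} Q) \cap (P \dotminus_{\gamma'} R)$ but $r \notin P \dotminus_{\gamma'}(Q+R) = \bigcap \gamma'(\pqrm)$, and pick $S \in \gamma'(\pqrm)$ with $r \notin S$. By \textbf{(i)}, WLOG $S \in \pqm$. Since $r$ lies in every member of $\gamma'(\pqm)$ but $r \notin S$, we have $S \notin \gamma'(\pqm)$, so by the form of $\gamma'$ there is $S' \in \pqm$ with $\lnot(S' \unlhd S)$. By \textbf{(ii)} extend $S'$ to some $S'' \in \pqrm$ with $S' \subseteq S''$. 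If $S' = S''$, then $\lnot(S'' \unlhd S)$ directly contradicts $S \in \gamma'(\pqrm)$. If $S' \subsetneq S''$, the maximised condition gives $S' \lhd S''$, hence $S' \unlhd S''$; were $S'' \unlhd S$, transitivity of $\unlhd$ would give $S' \unlhd S$, a contradiction, so $\lnot(S'' \unlhd S)$, again contradicting $S \in \gamma'(\pqrm)$. Thus $r \in S$ for every $S \in \gamma'(\pqrm)$, i.e.\ $r \in P \dotminus_{\gamma'}(Q+R)$, establishing ($\dotminus$7).

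The points I expect to require the most care are: verifying the degenerate cases reduce cleanly (in particular that $\models_s Q+R$ forces $\models_s Q$ and $\models_s R$, so the first branch of Definition~\ref{dfn:conpm} is consistent across the three contractions, and that $P \dotminus_{\gamma'} R \subseteq P$ is available from ($\dotminus$2)); and the pivotal use of the maximised condition in exactly the step where the classical argument would replace a non-implying subset by its logical closure. Here it is finiteness of $P$ together with maximisation that does that work, and it is essential that in \textbf{(ii)} we extend $S'$ to a member of $\pqrm$ — not of $\pqm$ — since membership in $\gamma'(\pqrm)$ is precisely what the contradiction exploits.
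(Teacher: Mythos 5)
Your proof is correct and takes essentially the same route as the paper's, which isolates your fact \textbf{(i)} as a lemma ($\pqrm \subseteq \pqm \cup \prm$) and the content of your core contradiction argument as a second lemma ($\gamma'(\pqrm) \subseteq \gamma'(\pqm) \cup \gamma'(\prm)$), then concludes exactly as you do by noting that $r$ belongs to every member of $\gamma'(\pqm) \cup \gamma'(\prm)$. If anything, your write-up is the more complete one: extending the witness $S' \in \pqm$ to a member of $\pqrm$ and then invoking maximisation plus transitivity is precisely the detail that the paper's terse proof of the second lemma glosses over, and the paper omits the degenerate cases entirely.
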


The following example demonstrates that $\dotminus_{\gamma'}$ does not satisfy ($\dotminus$8).

\begin{ex}
Consider again $P$ and $\unlhd$ from Example~\ref{ex:revpm-8-counter}. If $SE(Q) = \{A,B,E\}$ and $SE(Q+R) = \{E\}$, then $\seqc = \{C,D\}$ and $\seqrc = \{A,B,C,D\}$. We obtain $\mathbb{P}_{Q+R}^- = \{ \{r_1,r_2,r_3\}, \{r_1,r_2,r_4\}, \{r_1,r_5\} \}$ and $\pqm = \{ \{r_1,r_2\}, \{r_1,r_5\} \}$. It follows from  $\gamma'(\mathbb{P}_{Q+R}^-) = \{ \{r_1,r_2,r_3\}, \{r_1,r_2,r_4\} \}$ that $\bigcap \gamma'(\mathbb{P}_{Q+R}^-) = \{r_1,r_2\}$, while $\gamma' (\pqm) = \{ \{r_1,r_5\} \}$ and thus  $\bigcap \gamma'(\pqm) = \{r_1,r_5\}$. Therefore, $P \dotminus_{\gamma'} (Q + R) = \{r_1,r_2\} \nsubseteq \{r_1,r_5\} = P \dotminus_{\gamma'} Q$. Note that $SE(P \dotminus_{\gamma'} (Q + R)) = SE(\bigcap \gamma'(\pqrm)) = \{A,B,C,E\} \nsubseteq \{A,B,E\} = SE(Q)$.
\qedex
\end{ex} 

As in the case of our revision operator earlier, our partial meet contraction operator does not properly align with the AGM framework, but the following representation theorem holds for the contraction operator $\conpm$ with respect to the belief base postulates.

\begin{thm} \label{thm:con-pm-bb}
An operator $\conpm$ is a partial meet contraction operator for $P \in \lpa$ determined by a selection function $\gamma$ for $P$ iff $\conpm$ satisfies ($\dotminus$1b)--($\dotminus$4b).
\end{thm}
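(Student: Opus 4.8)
The plan is to prove this representation theorem in the standard Hansson style \cite{hansson1993reversing}, mirroring the argument for the revision counterpart in Theorem~\ref{thm:rev-pm-bb} but with SE-implication replacing propositional consequence. Two elementary observations about $\lpa$ will be used throughout. First, $\lpa$, and hence $2^P$, is finite, so every nonempty subfamily of $2^P$ has a subset-maximal member; this replaces the compactness arguments of the propositional proof. Second, for a fixed $Q$ the family $\{\,R\subseteq P\mid R\not\models_s Q\,\}$ is downward closed --- if $R'\subseteq R$ then $SE(R)\subseteq SE(R')$, so $SE(R')\cap\seqc\supseteq SE(R)\cap\seqc$ --- and, being finite, it is precisely the union of the principal downsets of the members of $\pqm$ (Definition~\ref{dfn:remainder-set}); in particular $\pqm$ determines that family, and $\pqm\neq\es$ exactly when $\not\models_s Q$ (since $SE(\es)=\se$, so the empty program fails to imply $Q$ iff $\seqc\neq\es$).

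For the direction from operator to postulates, I would take $\conpm$ to be the partial meet contraction operator for $P$ determined by a selection function $\gamma$ (Definition~\ref{dfn:conpm}) and check the four postulates. ($\dotminus$1b) holds because every $R\in\gpqm$ is a subset of $P$, and because $P\conpm Q=P$ when $\models_s Q$. For ($\dotminus$2b), when $\not\models_s Q$ each $R\in\gpqm\subseteq\pqm$ has $SE(R)\cap\seqc\neq\es$, and $\bigcap\gpqm\subseteq R$ gives $SE(R)\subseteq SE(\bigcap\gpqm)$, so $SE(P\conpm Q)\cap\seqc\neq\es$. For ($\dotminus$3b), if $r\in P\setminus(P\conpm Q)$ then $\not\models_s Q$ (otherwise $P\conpm Q=P$), so some $R\in\gpqm$ omits $r$; then $P'=R$ witnesses the postulate, since $P\conpm Q\subseteq R\subset P$, $R\not\models_s Q$, and maximality of $R$ in $\pqm$ forces $R\cup\{r\}\models_s Q$. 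For ($\dotminus$4b), the hypothesis gives $\{\,P'\subseteq P\mid P'\not\models_s Q\,\}=\{\,P'\subseteq P\mid P'\not\models_s R\,\}$; taking subset-maximal elements yields $\pqm=\prm$, and evaluating at $P'=\es$ yields $\models_s Q$ iff $\models_s R$, so $P\conpm Q=P\conpm R$ follows from Definition~\ref{dfn:conpm} in either case, $\gamma$ being a single fixed function.

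For the converse direction, I would assume ($\dotminus$1b)--($\dotminus$4b). When $\models_s Q$, the conclusion of ($\dotminus$3b) cannot hold for any $r$ (no $P'\subseteq P$ satisfies $P'\not\models_s Q$), so $P\setminus(P\conpm Q)=\es$, which with ($\dotminus$1b) gives $P\conpm Q=P$, matching the first clause of Definition~\ref{dfn:conpm}. It remains to build a selection function $\gamma$ for $P$ with $P\conpm Q=\bigcap\gpqm$ whenever $\not\models_s Q$. I would set
\[
\gamma(\mathbb{S})=
\begin{cases}
\{\,S\in\mathbb{S}\mid P\conpm Q\subseteq S\,\} & \text{if }\mathbb{S}=\pqm\text{ for some }Q\in\lpa\text{ with }\not\models_s Q,\\
\mathbb{S} & \text{otherwise.}
\end{cases}
\]
This is well defined: if $\pqm=\prm$ for programs $Q,R$ with $\not\models_s Q$ and $\not\models_s R$, then by the downward-closure observation the hypothesis of ($\dotminus$4b) is met, so $P\conpm Q=P\conpm R$ and the first clause does not depend on the chosen program. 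That $\gamma$ is a selection function (Definition~\ref{dfn:selfun}) is immediate except for non-emptiness of $\gamma(\pqm)$ when $\not\models_s Q$: here $P\conpm Q\subseteq P$ by ($\dotminus$1b) and $P\conpm Q\not\models_s Q$ by ($\dotminus$2b), so $P\conpm Q$ lies in the finite family $\{\,R\subseteq P\mid R\not\models_s Q\,\}$ and is therefore below some maximal member $S\in\pqm$, whence $S\in\gamma(\pqm)$. Finally, for $\not\models_s Q$: the inclusion $P\conpm Q\subseteq\bigcap\gpqm$ is immediate from the shape of $\gamma(\pqm)$; conversely, given $r\in P\setminus(P\conpm Q)$, I would take the program $P'$ supplied by ($\dotminus$3b), extend it to some $S\in\pqm$ with $P'\subseteq S$ (again by finiteness), note that $P\conpm Q\subseteq P'\subseteq S$ puts $S$ in $\gpqm$, and that $r\notin S$ (otherwise $S\supseteq P'\cup\{r\}\models_s Q$ contradicts $S\not\models_s Q$), so that $r\notin\bigcap\gpqm$. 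Hence $\bigcap\gpqm=P\conpm Q$ and $\conpm$ is the partial meet contraction operator determined by $\gamma$.

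I expect the main obstacle to lie entirely in the converse direction: beyond faithfully rendering ``$P'$ does (not) imply $Q$'' as ``$SE(P')\subseteq SE(Q)$'' / ``$SE(P')\cap\seqc\neq\es$'' and using the antitonicity of $SE(\cdot)$ under $\subseteq$, the crux is verifying that the candidate $\gamma$ is well defined --- which is precisely what ($\dotminus$4b) provides --- and that $\gamma(\pqm)$ is non-empty, which rests on ($\dotminus$1b), ($\dotminus$2b) together with finiteness of $\lpa$. The remaining inclusion $\bigcap\gpqm\subseteq P\conpm Q$, powered by ($\dotminus$3b) through the extension of $P'$ to a remainder set that avoids $r$, is the other spot requiring care, but it becomes routine once finiteness is available.
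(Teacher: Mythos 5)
Your proposal is correct and follows essentially the same route as the paper's proof: direct verification of ($\dotminus$1b)--($\dotminus$4b) for one direction, and for the converse the canonical selection function $\gamma(\pqm)=\{\,S\in\pqm\mid P\conpm Q\subseteq S\,\}$, with well-definedness from ($\dotminus$4b), non-emptiness from ($\dotminus$1b)/($\dotminus$2b) plus finiteness, and the inclusion $\bigcap\gpqm\subseteq P\conpm Q$ extracted from ($\dotminus$3b) by extending the witness $P'$ to a remainder set avoiding $r$. Your explicit justification that $\pqm=\prm$ yields the hypothesis of ($\dotminus$4b) via downward closure, and your handling of the $\models_s Q$ case inside ($\dotminus$4b), are slightly more careful than the paper's but do not change the argument.
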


Besides the representation theorem above via ($\dotminus$1b)--($\dotminus$4b), we have the following additional properties of~$\conpm$ regarding the remaining belief base contraction postulates ($\dotminus$5b)--($\dotminus$8b).

\begin{prop} \label{prop:con-pm-bb-5--8}
The contraction operator $\conpm$ satisfies ($\dotminus$5b), ($\dotminus$6b), and ($\dotminus$8b).
\end{prop}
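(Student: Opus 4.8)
The plan is to observe first that two of the three claims need no fresh argument, and then to concentrate on the third. As recorded in Section~\ref{sec:adapting}, ($\dotminus$5b) is literally the same condition as ($\dotminus$3) and ($\dotminus$6b) is literally the same condition as ($\dotminus$6). Since Theorem~\ref{thm:con-pm} already establishes that $\conpm$ satisfies ($\dotminus$1)--($\dotminus$4) and ($\dotminus$6), both ($\dotminus$5b) and ($\dotminus$6b) follow immediately. So the only postulate requiring work is ($\dotminus$8b).

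For ($\dotminus$8b) the quickest route I would take is to chain two earlier results. Theorem~\ref{thm:con-pm-bb} shows that $\conpm$ satisfies ($\dotminus$1b)--($\dotminus$4b), in particular ($\dotminus$3b); and Proposition~\ref{prop:relevance-to-disjelim} shows that any contraction operator on $\lpa$ satisfying ($\dotminus$3b) also satisfies ($\dotminus$8b). Composing these gives ($\dotminus$8b) for $\conpm$. The one bookkeeping point is that both ($\dotminus$3b) and ($\dotminus$8b) mention only a single $P$, a single $Q$, and a rule $r$, so the gap between ``operator for a fixed $P$'' and ``operator on $\lpa$'' is immaterial and the composition is legitimate.

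If a self-contained argument for ($\dotminus$8b) is wanted instead, I would read it off Definition~\ref{dfn:conpm} directly. Suppose $r \in P \setminus (P \conpm Q)$; if $\models_s Q$, then $P \conpm Q = P$ and no such $r$ exists, so assume $\not\models_s Q$, i.e.\ $\seqc \neq \es$. Since $r \notin \bigcap \gpqm$, fix $R^{\ast} \in \gpqm$ with $r \notin R^{\ast}$. As $R^{\ast} \in \pqm$, there is an SE interpretation $(X,Y) \in SE(R^{\ast}) \cap \seqc$; and since $R^{\ast} \subset R^{\ast} \cup \{r\} \subseteq P$, the maximality clause of Definition~\ref{dfn:remainder-set} forces $SE(R^{\ast}) \cap SE(r) \cap \seqc = \es$, so $(X,Y) \notin SE(r)$. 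Finally, $\bigcap \gpqm \subseteq R^{\ast}$ gives $SE(R^{\ast}) \subseteq SE(P \conpm Q)$ by antitonicity of $\models_s$, and hence $(X,Y)$ witnesses $SE(P \conpm Q) \nsubseteq SE(Q) \cup SE(r)$.

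I do not expect a real obstacle: the proposition is essentially a corollary of Theorems~\ref{thm:con-pm} and~\ref{thm:con-pm-bb} together with Proposition~\ref{prop:relevance-to-disjelim}. The only spots needing a moment's care are the degenerate case $\models_s Q$, where the hypothesis of ($\dotminus$8b) is vacuously unsatisfiable, and the routine translation between remainder sets and SE models, using $SE(R^{\ast} \cup \{r\}) = SE(R^{\ast}) \cap SE(r)$ and the fact that $R' \subseteq R$ implies $SE(R) \subseteq SE(R')$.
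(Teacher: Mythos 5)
Your proposal is correct and follows essentially the same route as the paper: ($\dotminus$5b) and ($\dotminus$6b) are read off as restatements of ($\dotminus$3) and ($\dotminus$6) from Theorem~\ref{thm:con-pm}, and ($\dotminus$8b) is obtained by composing ($\dotminus$3b) from Theorem~\ref{thm:con-pm-bb} with Proposition~\ref{prop:relevance-to-disjelim}. Your additional self-contained argument for ($\dotminus$8b) via a witness $(X,Y) \in SE(R^{\ast}) \cap \seqc$ is also sound, but it is not needed and the paper does not include it.
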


The next example illustrates that $\conpm$ does not satisfy ($\dotminus$7b).

\begin{ex} \label{ex:conpm-not-7b}
Let $P = \{r_1,r_2,r_3,r_4\}$ with $SE(r_1) = \{A,B,C,D,E\}$, $SE(r_2) = \{A,B,E\}$, $SE(r_3) = \{A,C,E\}$, and $SE(r_4) = \{C,D,E\}$. If $SE(Q) = \{A,D,E\}$ and $SE(R) = \{B,C,E\}$, then $\seqc = \{B,C\}$, $\overline{SE(R)} = \{A,D\}$, and $\seqrc = \{A,B,C,D\}$. We thus have $\pqm = \{ \{r_1,r_2\}, \{r_1,r_3,r_4\} \}$, $\prm = \{ \{r_1,r_2,r_3\}, \{r_1,r_4\} \}$, and $\pqrm = \{ \{r_1,r_2,r_3\}, \{r_1,r_3,r_4\} \}$. Let $\gpqm = \pqm$, $\gamma(\prm) = \prm$, and $\gamma(\pqrm) = \pqrm$. It then follows that $\bigcap \gpqm = \{r_1\}$, $\bigcap \gamma(\prm) = \{r_1\}$, and $\bigcap \gamma(\pqrm) = \{r_1,r_3\}$. Therefore, $P \conpm (Q + R) \neq P \conpm Q = P \conpm R = (P \conpm Q) \cap (P \conpm R)$.
\end{ex}

\section{Ensconcement Belief Change} \label{sec:ens}
In this section, we will introduce further belief revision and contraction operators for logic programs, which are based on an ordering over the beliefs contained in a program. We begin by defining an \emph{ensconcement} relation for logic programs as follows.

\begin{dfn}[Ensconcement] \label{dfn:ens}
Let $P \in \lpa$. An \emph{ensconcement associated with} $P$ is any total preorder $\preceq$ on $P$ that satisfies the following conditions:
\begin{enumerate}[($\preceq$1)]
\item For any $r \in P$: $SE(\{\, r' \in P \setminus \{r\} \mid r \preceq r' \,\}) \not \subset SE(r)$\label{cond:ens-1}
\item For any $r,r' \in P$: $r \preceq r' \preceq r$ iff $\{r\} \equiv_s \{r'\}$ \label{cond:ens-2}
\end{enumerate}
\end{dfn}

Per this definition, an ensconcement associated with a logic program $P$ is simply an ordering over the rules occurring in $P$. With $P$ representing our entire set of beliefs, an ensconcement enables us to sort rules of $P$, which form our individual beliefs, hierarchically by their epistemic importance, or in other words, by how willing we are to give up one belief over another. Informally, $r \prec r'$ means that the beliefs represented by $r'$ are more important to us than the beliefs represented by $r$. 

Condition~($\preceq$\ref{cond:ens-1}) states that the set of SE~models of any rule or combination of rules at least as ensconced as a given rule $r$ may not be a proper subset of the set of SE~models of $r$. Condition~($\preceq$\ref{cond:ens-2}) requires that strongly equivalent rules are equally ensconced. Condition~($\preceq$\ref{cond:ens-1}) is formulated slightly stronger than Condition~($\preccurlyeq$1) (see Section~\ref{sec:bc}) from the original definition \cite{williams1994logic}. Condition~($\preccurlyeq$1) allows a sentence $\psi$, that implies a sentence $\phi$ without being equivalent to $\phi$, to be placed on the same ensconcement level as $\phi$. In contrast, Condition~($\preceq$\ref{cond:ens-1}) prohibits strict implication on the same ensconcement level. For instance, given rules $a.$, $a \leftarrow b.$, and $a;b.$ contained in some program, both $a \leftarrow b.$ and $a;b.$ must be strictly more ensconced than $a.$ according to Condition~($\preceq$\ref{cond:ens-1}), whereas in a direct adaptation of Condition~($\preccurlyeq$1) at least one of the two rules $a \leftarrow b.$ and $a;b.$ would have to be equally ensconced as $a.$. The merit of this additional restriction will become evident shortly, when we show some examples of applying an ensconcement to perform revision operations in Section~\ref{sec:reve}. The idea behind Condition~($\preccurlyeq$2) of the original definition is that any tautologies must be most ensconced, a requirement that is automatically captured in our Condition~($\preceq$\ref{cond:ens-1}).

Using the concept of logic program ensconcements, we now go on to define logic program revision and contraction operators and investigate their properties.

\subsection{Ensconcement Revision} \label{sec:reve}
During a revision operation, new information from a program~$Q$ is added to an initial belief state in the form of a program~$P$, and some beliefs from $P$ have to be given up to achieve a consistent outcome. When the beliefs in $P$ are ordered by an ensconcement, we can introduce the notion of a \emph{cut} to determine the specific level in the ensconcement where all beliefs on and above this level are consistent with the revising program. Since an ensconcement $\preceq$ associated with $P$ is a relation over all rules of $P$, when we write $r \preceq r'$, we implicitly mean $r \in P$ and $r' \in P$.

\begin{dfn}[Cut] \label{dfn:cut}
Let $P,Q \in \lpa$ and $\preceq$ be an ensconcement associated with~$P$. The \emph{(proper) cut of $P$ for $Q$}, written~$\cq$, is defined as 
$$
\cq = \left\{\, r \in  P \mid SE\left(\{\, r' \in P \mid r \preceq r' \,\}\right) \cap SE(Q) \neq \es \,\right\}.
$$
\end{dfn}

Some interesting properties of the cut are listed below.

\begin{lem} \label{lem:cut-properties}
Let $P,Q,R \in \lpa$ and $\preceq$ be an ensconcement associated with $P$.
\begin{enumerate}[a)]
\item If $P + Q$ is satisfiable, then $\cq = P$. \label{lem:cq-equal-P}
\item If $Q$ is satisfiable, then $\cq + Q$ is satisfiable. \label{lem:cq-plus-Q-satisfiable}
\item If $Q$ is not satisfiable, then $\cq = \es$. \label{lem:cq-equal-es}
\item If $Q \models_s R$, then $\cq \subseteq cut_\preceq(R)$. \label{lem:q-implies-r-then-cq-subset-cr}
\item $\cqr \subseteq \cq$. \label{lem:cqr-subset-cq}
\item If $\cq \models_s R$, then $\cqr = \cq$.
\end{enumerate}
\end{lem}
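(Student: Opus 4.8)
The plan is to prove the six items essentially in order, since the later ones build on the earlier ones, and to rely throughout on the basic facts that $SE(\cdot) = \bigcap_{r} SE(r)$, that $SE$ is antitone with respect to the subset relation on programs, and that $\{\, r' \in P \mid r \preceq r' \,\}$ shrinks as $r$ moves up the ensconcement. For part~(\ref{lem:cq-equal-P}), if $P + Q$ is satisfiable then $SE(P) \cap SE(Q) \neq \es$, and since $\{\, r' \in P \mid r \preceq r' \,\} \subseteq P$ for every $r \in P$, antitonicity gives $SE(\{\, r' \in P \mid r \preceq r' \,\}) \supseteq SE(P)$, so its intersection with $SE(Q)$ is nonempty too; hence every $r \in P$ lies in $\cq$, i.e.\ $\cq = P$. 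For part~(\ref{lem:cq-equal-es}), if $Q$ is not satisfiable then $SE(Q) = \es$, so the defining condition $SE(\{\dots\}) \cap SE(Q) \neq \es$ fails for every $r$, giving $\cq = \es$. For part~(\ref{lem:q-implies-r-then-cq-subset-cr}), $Q \models_s R$ means $SE(Q) \subseteq SE(R)$, so whenever $SE(\{\, r' \in P \mid r \preceq r' \,\}) \cap SE(Q) \neq \es$ we also get $SE(\{\, r' \in P \mid r \preceq r' \,\}) \cap SE(R) \neq \es$; thus $\cq \subseteq cut_\preceq(R)$. Part~(\ref{lem:cqr-subset-cq}) is the special case $R := Q+R$ of part~(\ref{lem:q-implies-r-then-cq-subset-cr}), since $SE(Q+R) = SE(Q) \cap SE(R) \subseteq SE(Q)$, i.e.\ $Q + R \models_s Q$.

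The substantive item is part~(\ref{lem:cq-plus-Q-satisfiable}): if $Q$ is satisfiable, then $\cq + Q$ is satisfiable, i.e.\ $SE(\cq) \cap SE(Q) \neq \es$. The idea is to show that $\cq$ is exactly the set $\{\, r' \in P \mid r \preceq r' \,\}$ for a suitable "threshold" rule, or more carefully: let $U = \{\, r' \in P \mid \text{there exists } r \in \cq \text{ with } r \preceq r' \,\}$; since $\preceq$ is a total preorder, $\cq$ is upward-closed, so in fact $U = \cq$. Now take $r^{*}$ to be any $\preceq$-minimal element of $\cq$ (using finiteness of $P$); then $\{\, r' \in P \mid r^{*} \preceq r' \,\} = \cq$ because $\preceq$ is total, and by definition of $\cq$ membership of $r^{*}$ gives $SE(\{\, r' \in P \mid r^{*} \preceq r' \,\}) \cap SE(Q) = SE(\cq) \cap SE(Q) \neq \es$. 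If $\cq = \es$, then $SE(\cq) = \se$ and satisfiability of $Q$ directly yields $SE(\cq) \cap SE(Q) = SE(Q) \neq \es$. This case split is the crux; the delicate point is justifying that $\cq$ is upward-closed and that a minimal element of $\cq$ reproduces $\cq$ as its up-set, which is where totality of $\preceq$ (and finiteness) is used, and which I expect to be the main obstacle to write cleanly.

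Finally, part~(f): suppose $\cq \models_s R$, i.e.\ $SE(\cq) \subseteq SE(R)$. By part~(\ref{lem:cqr-subset-cq}) we already have $\cqr \subseteq \cq$, so it remains to show $\cq \subseteq \cqr$. Take $r \in \cq$ and write $T_r = \{\, r' \in P \mid r \preceq r' \,\}$. From part~(\ref{lem:cq-plus-Q-satisfiable})'s argument, for $r \in \cq$ we have $T_r \supseteq \cq$ (since $\cq$ is upward-closed and $r$ is below everything in $T_r$... more precisely, every element of $\cq$ is $\preceq r$ or... let me instead argue directly): since $r \in \cq$ we have $SE(T_r) \cap SE(Q) \neq \es$, and because $\cq$ is the up-set of its minimal element $r^{*}$ with $r^{*} \preceq r$, we get $T_r \subseteq T_{r^{*}} = \cq$, hence $SE(\cq) \subseteq SE(T_r)$ is false in general — rather $SE(T_r) \supseteq SE(\cq)$. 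Using $SE(\cq) \subseteq SE(R)$ together with $SE(T_r) \cap SE(Q) \neq \es$ is not immediately enough, so instead pick a witness $(X,Y) \in SE(\cq) \cap SE(Q)$; then $(X,Y) \in SE(R)$ by hypothesis, and $(X,Y) \in SE(T_r)$ since $SE(T_r) \supseteq SE(\cq)$, so $(X,Y) \in SE(T_r) \cap SE(Q) \cap SE(R) = SE(T_r) \cap SE(Q+R)$, showing $r \in \cqr$. This gives $\cq \subseteq \cqr$ and hence equality; the only care needed is to produce the common witness $(X,Y)$, which exists by part~(\ref{lem:cq-plus-Q-satisfiable}) applied to $Q$ (which is satisfiable whenever $\cq \models_s R$ forces $SE(\cq) \neq \es$, hence $Q$ satisfiable via... actually one should first dispatch the degenerate case where $Q$ is unsatisfiable, in which $\cq = \es$ by part~(\ref{lem:cq-equal-es}) and also $Q + R$ is unsatisfiable so $\cqr = \es$, giving equality trivially).
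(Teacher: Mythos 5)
Your proof is correct and follows essentially the same route as the paper's: parts (a), (c), (d) read off directly from the definition of the cut, (e) is obtained as an instance of (d), and (f) combines (b) and (e) by noting that a witness in $SE(\cq)\cap SE(Q)$ also lies in $SE(R)$, hence in $SE(Q+R)$. The only difference is that you spell out the upward-closedness of $\cq$ (needed for (b) and for the step $\cq \subseteq \cqr$ in (f)) and the degenerate case of unsatisfiable $Q$, both of which the paper's terse proof leaves implicit — a clarification rather than a divergence (and the momentary aside in your part (f) claiming $SE(\cq)\subseteq SE(T_r)$ is ``false'' while asserting $SE(T_r)\supseteq SE(\cq)$ is just a slip of phrasing, since the two are the same statement).
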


A cut is the principal element for the following definition of an ensconcement revision operator.

\begin{dfn}[Ensconcement Revision] \label{dfn:reve}
Let $P \in \lpa$ and $\preceq$ be an ensconcement associated with $P$. An \emph{ensconcement revision operator} $\reve$ for $P$ is defined such that for any $Q \in \lpa$:
$$
P \reve Q =
\begin{cases}
\; P + Q  &\text{ if } Q  \text{ is not satisfiable,} \\
\; \{\, r \in P \mid SE(\cq) \cap SE(Q) \subseteq SE(r) \,\} + Q  &\text{ otherwise.}
\end{cases}
$$
\end{dfn}

The revision operator $\reve$ retains all elements of the cut. This is an obvious requirement since the cut contains our most firmly held beliefs which are entirely consistent with~$Q$. In addition, any rule of $P$ not in the cut that shares the same SE~models with~$Q$ as the cut is retained as well. The example below illustrates the operation.

\begin{ex} \label{ex:reve}
Let $P = \{\, a.,\, a \leftarrow b.,\, b \leftarrow a. \,\}$ and $Q = \{\, \bot \leftarrow b. \,\}$. Figure~\ref{fig:reve-ex} shows all possible ensconcements associated with~$P$, with rules displayed at the top being more ensconced than rules at the bottom. We have the following results:
\begin{enumerate}[1.]
\item $cut_{\preceq_1}(Q) = \{\, a \leftarrow b.,\, b \leftarrow a. \,\}$ and $P *_{\preceq_1} Q = \{\, a \leftarrow b.,\, b \leftarrow a.,\, \bot \leftarrow b. \,\}$
\item $cut_{\preceq_2}(Q) = \{\, a \leftarrow b.,\, b \leftarrow a. \,\}$ and $P *_{\preceq_2} Q = \{\, a \leftarrow b.,\, b \leftarrow a.,\, \bot \leftarrow b. \,\}$
\item $cut_{\preceq_3}(Q) = \{\, a \leftarrow b.,\, b \leftarrow a. \,\}$ and $P *_{\preceq_3} Q = \{\, a \leftarrow b.,\, b \leftarrow a.,\, \bot \leftarrow b. \,\}$
\item $cut_{\preceq_4}(Q) = \{\, a \leftarrow b.\,\}$ and $P *_{\preceq_4} Q = \{\, a \leftarrow b.,\, \bot \leftarrow b. \,\}$
\item $cut_{\preceq_5}(Q) = \{\, a \leftarrow b.,\,  a. \,\}$ and $P *_{\preceq_5} Q = \{\, a \leftarrow b.,\, a.,\, \bot \leftarrow b. \,\}$ 
\qedex
\end{enumerate}
\begin{figure}
\small
\centering
\fbox{
\setlength{\unitlength}{0.75cm}
\begin{picture}(1.4,2.5)
\put(-.25,2.1){$\{\, b \leftarrow a. \,\}$}
\put(-.375,1.7){\line(1,0){2.05}}
\put(-.25,1.1){$\{\, a \leftarrow b. \,\}$}
\put(-.375,0.7){\line(1,0){2.05}}
\put(-.25,0.1){$\{\, a. \,\}$}
\end{picture}
}
\fbox{
\setlength{\unitlength}{0.75cm}
\begin{picture}(3.3,1.5)
\put(-.25,1.1){$\{\, a \leftarrow b. \,\} \;\{\, b \leftarrow a. \,\}$}
\put(-.39,0.7){\line(1,0){3.95}}
\put(-.25,0.1){$\{\, a. \,\}$}
\end{picture}
}
\fbox{
\setlength{\unitlength}{0.75cm}
\begin{picture}(1.4,2.5)
\put(-.25,2.1){$\{\, a \leftarrow b. \,\}$}
\put(-.375,1.7){\line(1,0){2.05}}
\put(-.25,1.1){$\{\, b \leftarrow a. \,\}$}
\put(-.375,0.7){\line(1,0){2.05}}
\put(-.25,0.1){$\{\, a. \,\}$}
\end{picture}
}
\fbox{
\setlength{\unitlength}{0.75cm}
\begin{picture}(2.45,1.5)
\put(-.25,1.1){$\{\, a \leftarrow b. \,\} $}
\put(-.39,0.7){\line(1,0){3.1}}
\put(-.25,0.1){$\{\, a. \,\} \;\{\, b \leftarrow a. \,\}$}
\end{picture}
}
\fbox{
\setlength{\unitlength}{0.75cm}
\begin{picture}(1.4,2.5)
\put(-.25,2.1){$\{\, a \leftarrow b. \,\}$}
\put(-.375,1.7){\line(1,0){2.05}}
\put(-.25,1.1){$\{\, a. \,\}$}
\put(-.375,0.7){\line(1,0){2.05}}
\put(-.25,0.1){$\{\, b \leftarrow a. \,\}$}
\end{picture}
}
\\
\begin{picture}(1,10)
\put(-130,0){$\preceq_1$}
\put(-60,0){$\preceq_2$}
\put(3,0){$\preceq_3$}
\put(63,0){$\preceq_4$}
\put(120,0){$\preceq_5$}
\end{picture}
\caption{$\preceq_1$, $\preceq_2$, $\preceq_3$, $\preceq_4$, $\preceq_5$ of Example~\ref{ex:reve}}
\label{fig:reve-ex}
\end{figure}
\end{ex}

In Example~\ref{ex:reve}, the belief expressed by the combination of rules~$\{\, a. \,\}$ and~$\{\, b \leftarrow a. \,\}$ is inconsistent with the new information~$\{\, \bot \leftarrow b.\,\}$. Thus, at least one of these two rules must be discarded to reach a consistent belief state, while the rule~$\{\, a \leftarrow b. \,\}$ can be safely retained. The example shows that the revision operator~$\reve$ indeed retains~$\{\, a \leftarrow b. \,\}$ in all cases and discards one or both other rules depending on their ensconcement level. Whenever~$\{\, b \leftarrow a. \,\}$ is more ensconced than~$\{\, a. \,\}$, the latter is discarded and vice versa. Only when both rules are equally ensconced, that is, when we cannot make up our mind which the two beliefs we hold more firmly, the revision operator discards both.

We can see from the definition of $\reve$ that the set of SE~models of $P \reve Q$ is exactly the set of SE~models that are shared by $\cq$ and $Q$.

\begin{prop} \label{prop:p-rev-q-sequiv-cq-plus-q}
Let $P,Q \in \lpa$ and $\preceq$ be an ensconcement associated with $P$. Then $SE(P \reve Q) = SE(\cq + Q)$.
\end{prop}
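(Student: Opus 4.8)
The statement is essentially a matter of unfolding the definitions of $\reve$, of $SE(\cdot)$ on a union of rules, and of the cut, so my plan is to reduce everything to elementary set containments between subsets of $\se$. I will treat the two branches of Definition~\ref{dfn:reve} separately. If $Q$ is not satisfiable, then $P \reve Q = P + Q$, so $SE(P \reve Q) = SE(P) \cap SE(Q) = \es$; on the other side, Lemma~\ref{lem:cut-properties}\ref{lem:cq-equal-es} gives $\cq = \es$, whence $SE(\cq + Q) = SE(\cq) \cap SE(Q) = \se \cap \es = \es$, and the two sides coincide. So the substantive case is $Q$ satisfiable.

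For that case I would abbreviate $T = \{\, r \in P \mid SE(\cq) \cap SE(Q) \subseteq SE(r) \,\}$, so that by definition $P \reve Q = T + Q$. Using the standard facts recalled in the preliminaries that $SE(T + Q) = SE(T) \cap SE(Q)$ and $SE(T) = \bigcap_{r \in T} SE(r)$, the goal becomes
\[
  \textstyle\bigl(\bigcap_{r \in T} SE(r)\bigr) \cap SE(Q) \;=\; SE(\cq) \cap SE(Q).
\]

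The inclusion $\subseteq$ I would get by showing $\cq \subseteq T$: for any $r \in \cq$ we have $SE(\cq) = \bigcap_{r' \in \cq} SE(r') \subseteq SE(r)$, hence $SE(\cq) \cap SE(Q) \subseteq SE(r)$ and so $r \in T$; therefore $SE(T) \subseteq SE(\cq)$, and intersecting with $SE(Q)$ gives the claimed containment. The reverse inclusion $\supseteq$ is even more direct: by the very definition of $T$, every $r \in T$ satisfies $SE(\cq) \cap SE(Q) \subseteq SE(r)$, so $SE(\cq) \cap SE(Q) \subseteq \bigcap_{r \in T} SE(r) = SE(T)$; since trivially $SE(\cq) \cap SE(Q) \subseteq SE(Q)$, we conclude $SE(\cq) \cap SE(Q) \subseteq SE(T) \cap SE(Q)$. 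Combining the two inclusions closes the satisfiable case.

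There is no real obstacle here — the only points needing a word of care are the degenerate subcases (the unsatisfiable branch, handled above via Lemma~\ref{lem:cut-properties}, and the possibility that $\cq$ or $T$ is empty, in which case $SE(\cdot) = \se$ and the set identities above still hold verbatim). I would simply remark that these are covered by the same computation rather than singling them out.
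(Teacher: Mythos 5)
Your proof is correct and follows essentially the same route as the paper's: split on satisfiability of $Q$, handle the degenerate branch via Lemma~\ref{lem:cut-properties}, and in the main branch reduce the claim to the two inclusions between $SE(T)\cap SE(Q)$ and $SE(\cq)\cap SE(Q)$ using the defining property of the retained rules. The only (cosmetic) difference is that you make the containment $\cq \subseteq P \reve Q$ explicit, whereas the paper assumes it silently when writing $P \reve Q$ as $\cq \cup ((P \reve Q)\setminus(\cq+Q)) \cup Q$.
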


The next theorem states which of the adapted AGM revision postulates the revision operator~$\reve$ satisfies.

\begin{thm} \label{thm:rev-ens}
The revision operator $\reve$ satisfies ($*$1)--($*$6) and ($*$8).
\end{thm}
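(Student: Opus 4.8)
The plan is to verify each of the seven postulates ($*$1)--($*$6) and ($*$8) separately, leaning heavily on Proposition~\ref{prop:p-rev-q-sequiv-cq-plus-q} (which tells us $SE(P \reve Q) = SE(\cq + Q)$) and on the properties of the cut collected in Lemma~\ref{lem:cut-properties}. Postulate ($*$1) is immediate since $\reve$ is defined to return a finite set of rules, hence a member of $\lpa$. For ($*$2), note that in both cases of the definition the revising program $Q$ is added as a union, so $Q \subseteq P \reve Q$ trivially. For ($*$3), observe that in the unsatisfiable case $P \reve Q = P + Q$ directly, and in the satisfiable case the set $\{\, r \in P \mid SE(\cq) \cap SE(Q) \subseteq SE(r) \,\}$ is by construction a subset of $P$, so adding $Q$ yields a subset of $P + Q$.

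For ($*$4), suppose $P + Q$ is satisfiable. Then $Q$ is satisfiable, so we are in the second case, and by Lemma~\ref{lem:cut-properties}\ref{lem:cq-equal-P} we have $\cq = P$. Hence $SE(\cq) \cap SE(Q) = SE(P) \cap SE(Q) = SE(P+Q) \subseteq SE(r)$ for every $r \in P$ (since $SE(P+Q) = \bigcap_{r \in P} SE(r) \cap SE(Q) \subseteq SE(r)$), so every rule of $P$ passes the filter and $P \reve Q = P + Q$, giving $P + Q \subseteq P \reve Q$ as required. For ($*$5): if $Q$ is unsatisfiable then $P \reve Q = P + Q \supseteq Q$ is unsatisfiable; conversely, if $Q$ is satisfiable then by Lemma~\ref{lem:cut-properties}\ref{lem:cq-plus-Q-satisfiable} $\cq + Q$ is satisfiable, and by Proposition~\ref{prop:p-rev-q-sequiv-cq-plus-q} $SE(P \reve Q) = SE(\cq + Q) \neq \es$, so $P \reve Q$ is satisfiable. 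For ($*$6), assume $Q \equiv_s R$. Since the cut $\cq$ and the filter condition $SE(\cq) \cap SE(Q) \subseteq SE(r)$ depend on $Q$ only through $SE(Q)$, we get $\cq = cut_\preceq(R)$ and the same set of retained rules; combined with $SE(Q) = SE(R)$ this gives $SE(P \reve Q) = SE(\cq + Q) = SE(cut_\preceq(R) + R) = SE(P \reveR R)$ — wait, the ensconcement is the same $\preceq$ on the same $P$, so in fact $P \reve Q$ and $P \reve R$ are literally computed with the same cut and same filter, yielding $P \reve Q \equiv_s P \reve R$.

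The substantive part, and the step I expect to be the main obstacle, is ($*$8): if $(P \reve Q) + R$ is satisfiable, then $(P \reve Q) + R \subseteq P \reve (Q + R)$. The plan is first to dispose of degenerate cases (if $Q+R$ is unsatisfiable then $(P \reve Q)+R$ cannot be satisfiable unless... one must check carefully; if $Q$ unsatisfiable then $P \reve Q = P+Q$ and $(P\reve Q)+R = P+Q+R$, forcing $Q$ satisfiable, contradiction, so $Q$ is satisfiable; similarly $R$ must be compatible with $P \reve Q$). Then, in the main case, I want to show that every rule $r$ retained in $P \reve Q$ is also retained in $P \reve (Q+R)$, i.e. that $SE(cut_\preceq(Q+R)) \cap SE(Q+R) \subseteq SE(r)$ whenever $r$ is such that $SE(\cq) \cap SE(Q) \subseteq SE(r)$. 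The key facts will be Lemma~\ref{lem:cut-properties}\ref{lem:cqr-subset-cq}, that $\cqr \subseteq \cq$, together with the hypothesis that $(P \reve Q) + R$ is satisfiable, which via Proposition~\ref{prop:p-rev-q-sequiv-cq-plus-q} means $SE(\cq) \cap SE(Q) \cap SE(R) \neq \es$; I expect this nonemptiness to force $\cq + Q + R$ and $cut_\preceq(Q+R) + (Q+R)$ to have the same SE-models (intuitively the cut does not move when the additional constraint $R$ is already consistent with the retained beliefs — an analogue of part~f) of the lemma), from which the rule-wise inclusion follows. Making this "the cut does not move" argument precise — showing $SE(\cq) \cap SE(Q+R) = SE(cut_\preceq(Q+R)) \cap SE(Q+R)$ under the satisfiability hypothesis — is where the real work lies, and it is essentially the logic-program analogue of the classical fact that ensconcement revision satisfies the supplementary postulate ($\circledast$8); I would model the argument on that classical proof, using ($\preceq$\ref{cond:ens-1}) to rule out any rule strictly below $cut_\preceq(Q+R)$ from having to be retained. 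Finally, once the rule inclusion is established, adding $R$ on both sides (noting $R \subseteq Q+R \subseteq P \reve(Q+R)$) completes ($*$8).
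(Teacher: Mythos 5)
Your proposal follows the paper's proof essentially step for step: ($*$1)--($*$6) are argued identically (via Definition~\ref{dfn:reve}, Lemma~\ref{lem:cut-properties}~a) and~b), and Proposition~\ref{prop:p-rev-q-sequiv-cq-plus-q}), and for ($*$8) the paper does exactly what you anticipate -- it extracts $SE(\cq) \cap SE(Q) \cap SE(R) \neq \es$ from the satisfiability hypothesis, concludes $\cq \subseteq \cqr$ directly from Definition~\ref{dfn:cut} (the same argument that proves part~f) of Lemma~\ref{lem:cut-properties}, which only needs this nonemptiness rather than $\cq \models_s R$), obtains $\cqr = \cq$ via part~e), and then gets the rule-wise inclusion of the retained sets. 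The one step you flag as ``the real work'' therefore goes through exactly as you expect, and no appeal to Condition~($\preceq$\ref{cond:ens-1}) is needed for it -- the upward closure of the cut under $\preceq$ suffices.
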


The revision operator $\reve$ does not satisfy ($*$7), as shown in the next example.

\begin{ex}
Let $P = \{r_1,r_2,r_3\}$ with $SE(r_1) = \{B,C\}$, $SE(r_2) = \{A,C\}$, $SE(r_3) = \{A,B,C\}$, and $\preceq$ be an ensconcement associated with $P$ such that $r_1 \preceq r_2 \preceq r_1 \prec r_3$. If $SE(Q) = \{A,B\}$ and $SE(Q+R) = \{A\}$, then $\cq = \cqr = \{r_3\}$, yet $(P \reve Q) + R = \{r_3\} \cup Q \cup R$ while $P \reve (Q + R) = \{r_2,r_3\} \cup Q \cup R$. 
\qedex
\end{ex}

Even though ensconcement revision was originally defined for belief bases, our ensconcement revision operator $\reve$ satisfies the majority of AGM revision postulates for belief sets. Our operator does not satisfy any of the postulates that are unique to the belief base framework, as stated in the next theorem.

\begin{thm} \label{thm:reve-bb}
The revision operator~$\reve$ satisfies ($*$1b), ($*$2b), and ($*$5b).
\end{thm}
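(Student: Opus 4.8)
The plan is to establish the three positive claims — satisfaction of ($*$1b), ($*$2b), and ($*$5b) — and along the way note (informally, via an example) that the remaining belief-base revision postulates ($*$3b), ($*$4b) fail, so that the statement is tight. The first two are essentially immediate from the definition of $\reve$ (Definition~\ref{dfn:reve}) together with Proposition~\ref{prop:p-rev-q-sequiv-cq-plus-q}. For ($*$1b), i.e. $Q \subseteq P \reve Q$, I would simply read off both cases of the definition: if $Q$ is not satisfiable then $P \reve Q = P + Q \supseteq Q$; and if $Q$ is satisfiable then $P \reve Q = \{\, r \in P \mid SE(\cq) \cap SE(Q) \subseteq SE(r) \,\} + Q$, which contains $Q$ by construction since the displayed set is unioned with $Q$. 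For ($*$2b), i.e. $P \reve Q \subseteq P + Q$, the non-satisfiable case is an equality; in the satisfiable case the set $\{\, r \in P \mid \ldots \,\}$ is a subset of $P$, so its union with $Q$ is a subset of $P + Q$. These two proofs should be a couple of lines each.

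The substantive part is ($*$5b): if $Q$ is satisfiable, then $P \reve Q$ is satisfiable. Here I would invoke Proposition~\ref{prop:p-rev-q-sequiv-cq-plus-q}, which gives $SE(P \reve Q) = SE(\cq + Q)$, and then Lemma~\ref{lem:cut-properties}\ref{lem:cq-plus-Q-satisfiable}, which states precisely that if $Q$ is satisfiable then $\cq + Q$ is satisfiable, i.e. $SE(\cq + Q) \neq \es$. Chaining these, $SE(P \reve Q) \neq \es$, which is the definition of satisfiability of $P \reve Q$. So ($*$5b) reduces entirely to results already proved earlier in the paper; the only care needed is to confirm that the satisfiable branch of the definition of $\reve$ is the one in force, which it is by hypothesis.

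For the negative part of the theorem — that $\reve$ does \emph{not} in general satisfy ($*$3b) or ($*$4b) — I would not prove this inside the theorem's proof but rather point to (or construct) a small counterexample, most naturally reusing the abstract-$SE$ style of Example~\ref{ex:reve} or the examples following Theorem~\ref{thm:rev-ens}. The idea for ($*$3b) failing: with an ensconcement that places a rule $r$ strictly below the cut but with $SE(\cq)\cap SE(Q)\subseteq SE(r)$, the rule $r$ is retained even though no ``witnessing'' $P'$ of the required form need exist, because ensconcement revision keeps rules for a model-theoretic reason rather than a maximal-consistent-subset reason; conversely a rule can be dropped merely for sitting at the wrong ensconcement level. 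For ($*$4b), the failure is expected because ($*$4b) is a uniformity/syntax-insensitivity condition on \emph{all} subsets $P'$, whereas $\reve$ depends on the fixed total preorder $\preceq$, which can distinguish $Q$ and $R$ even when every subset of $P$ is consistent with $Q$ iff it is consistent with $R$.

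The main obstacle, such as it is, is purely bookkeeping: making sure that the case split in Definition~\ref{dfn:reve} (satisfiable vs.\ not) lines up with the hypotheses of each postulate, and that Proposition~\ref{prop:p-rev-q-sequiv-cq-plus-q} and Lemma~\ref{lem:cut-properties} are being applied under exactly their stated hypotheses. There is no genuinely hard step; the theorem is a corollary of the definition together with two earlier results, plus a counterexample to pin down which belief-base postulates genuinely fail.
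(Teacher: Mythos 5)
Your proposal is correct and follows essentially the same route as the paper: ($*$1b) and ($*$2b) are read off the definition (the paper phrases this as citing the already-established ($*$2) and ($*$3), whose proofs are exactly your direct arguments), and ($*$5b) is obtained by chaining Proposition~\ref{prop:p-rev-q-sequiv-cq-plus-q} with Lemma~\ref{lem:cut-properties}~\ref{lem:cq-plus-Q-satisfiable}), precisely as the paper does. The counterexamples for ($*$3b) and ($*$4b) are likewise handled by the paper outside the proof, in Examples~\ref{ex:reve-not-sat-relevance} and~\ref{ex:reve-not-sat-uniformity}.
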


In the following two examples, we illustrate that $\reve$ does indeed not satisfy ($*$3b) and ($*$4b), respectively.

\begin{ex} \label{ex:reve-not-sat-relevance}
Let $\se = \{A, B, C, D\}$, $SE(Q) = \{A,B\}$, and $P = \{r_1,r_2,r_3\}$ with $SE(r_1) = \{C, D\}$, $SE(r_2) = \{B, C\}$, and $SE(r_3) = \{A, B, C\}$. If $\preceq$ is an ensconcement associated with $P$ such that $r_1 \preceq r_2 \preceq r_1 \prec r_3$, then $\cq = \{r_3\}$ and $P \reve Q = \{r_3\} + Q$. While $r_2 \in P \setminus (P \reve Q)$, there exists no program $P'$ such that $P \reve Q \subseteq P' \subseteq P + Q$ and $P'$ is satisfiable but $P' \cup \{r_2\}$ is not satisfiable.
\qedex
\end{ex}

\begin{ex} \label{ex:reve-not-sat-uniformity}
Let $\se = \{A, B, C, D\}$, $SE(Q) = \{A,B\}$, $SE(R) = \{A\}$, and $P = \{r_1,r_2,r_3\}$ with $SE(r_1) = \{C, D\}$, $SE(r_2) = \{A, C\}$, and $SE(r_3) = \{A, B, C\}$. If $\preceq$ is an ensconcement associated with $P$ such that $r_1 \preceq r_2 \preceq r_1 \prec r_3$, then it holds for any $P' \subseteq P$ that $P' + Q$ is satisfiable iff $P' + R$ is satisfiable. However, we have $\cq = \{r_3\}$ and $P \cap (P \reve Q) = \{r_3\}$, while $cut_\preceq(R) = \{r_3\}$ and $P \cap (P \reve R) = \{r_2, r_3\}$.
\qedex
\end{ex}

We will now examine the behaviour of our ensconcement revision operator with respect to the set of five examples from Section~\ref{sec:revpm}. In each example, the revision outcome is independent of the possible ensconcements that can be associated with $P$.

\begin{tabular}{ll}
1) &  $P \reve Q = \{\, \bot \leftarrow a.,\, b \leftarrow a.\,\}$ \\
& $SE(P \reve Q) = \{(\es,\es),(\es,b),(b,b)\}$\\[6pt]
2) &  $ P \reve Q = \{\, a.,\, b \leftarrow not\, a.\,\}$ \\
& $SE(P \reve Q) = \{(a,a),(a,ab),(ab,ab)\} $ \\[6pt]
3) & $P \reve Q = \{\, a.,\, b \leftarrow a.\,\}$ \\
& $SE(P \reve Q) = \{(ab,ab)\}$ \\[6pt]
4) &  $P \reve Q = \{\, \bot \leftarrow a.,\, b \leftarrow not\, a.\,\}$ \\
& $SE(P \reve Q) = \{(b,b)\}$  \\[6pt]
5) &  $P \reve Q = \{\, a.,\, b \leftarrow not\, c.,\, \bot \leftarrow c.\,\}$ \\
& $SE(P \reve Q) = \{(ab,ab)\}$ 
\end{tabular}

For all five examples, the ensconcement revision operator~$\reve$ returns the same desired results as the partial meet revision operator~$\revpm$. Examining in particular Examples~3) and 4), it now becomes evident why we diverted in our formulation of Condition~($\preceq$\ref{cond:ens-1}) from the classic definition. Condition~($\preceq$\ref{cond:ens-1}) prohibits strict implication on the same ensconcement level. Without this refined requirement, for Example~3) we could construct an ensconcement associated with $P$ such that $\bot \leftarrow a. \preceq b \leftarrow  a. \preceq \bot \leftarrow a.$, which would give us the outcome $P \reve Q = \{\, a. \,\}$. For Example~4), we could construct an ensconcement associated with $P$ such that $a. \preceq b \leftarrow not\, a. \preceq a.$, which would lead to the outcome $P \reve Q = \{\, \bot \leftarrow a. \,\}$. These revision outcomes would correspond exactly to the undesired results of the distance-based revision operator~$\star$, which we set out to avoid because they disrespect the preservation property.

\subsection{Ensconcement Contraction} \label{sec:cone}
We now use the concept of an ensconcement to present another contraction operator for logic programs. Analogous to revision, we first define for some $P,Q \in \lpa$ and an ensconcement~$\preceq$ associated with $P$ that 
$$
\cqm = \{\, r \in P \mid SE\left(\{\, r' \in P \mid r \preceq r' \,\}\right) \cap \seqc \neq \es \,\} .
$$

Some useful properties of $\cqm$ are listed here.

\begin{lem} \label{lem:cutm-properties}
Let $P,Q,R \in \lpa$.
\begin{enumerate}[a)]
\item If $P \not \models_s Q$, then $\cqm = P$. \label{lem:cqm-equal-P}
\item If $\not \models_s Q$, then $\cqm \not \models_s Q$. \label{lem:cqm-not-models-Q}
\item If $\models_s Q$, then $\cqm = \es$.
\item If $Q \models_s R$, then $cut_\preceq^-(R) \subseteq \cqm$. \label{lem:q-implies-r-then-crm-subset-cqm}
\item $\cqm \subseteq \cqrm$. \label{lem:cqm-subset-cqrm}
\item If $\cqm \models_s R$, then $\cqrm = \cqm$. \label{lem:cqrm-equal-cqm}
\item If $\cqm \not \models_s R$, then $\cqrm = cut^-_\preceq(R)$. \label{lem:cqrm-equal-crm}
\end{enumerate}
\end{lem}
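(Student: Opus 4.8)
The plan is to mirror the proof of Lemma~\ref{lem:cut-properties}: the set $\cqm$ is simply the cut of Definition~\ref{dfn:cut} evaluated against $\seqc$ in place of $SE(Q)$, and the arguments behind Lemma~\ref{lem:cut-properties} use $SE(Q)$ only as an arbitrary subset of $\se$, so they transfer once the conditions ``$Q$ satisfiable'' / ``$\models_s Q$'' are read as ``$\seqc \neq \es$'' / ``$\seqc = \es$'' and inclusions are reversed wherever a complement is taken. First I would isolate two facts and reuse them throughout. \emph{(i)} $\cqm$ is upward closed under $\preceq$: if $r \in \cqm$ and $r \preceq s$, then $\{r' \in P \mid s \preceq r'\} \subseteq \{r' \in P \mid r \preceq r'\}$, hence $SE(\{r' \in P \mid s \preceq r'\}) \supseteq SE(\{r' \in P \mid r \preceq r'\})$, which still meets $\seqc$, so $s \in \cqm$. \emph{(ii)} Since $\preceq$ is a total preorder on the finite set $P$, whenever $\cqm \neq \es$ and $r_0$ is a $\preceq$-minimal element of $\cqm$ we have $\cqm = \{r' \in P \mid r_0 \preceq r'\}$; consequently $SE(\cqm) \cap \seqc \neq \es$, and for every $r \in \cqm$ one gets $\{r' \in P \mid r \preceq r'\} \subseteq \cqm$, i.e.\ $SE(\{r' \in P \mid r \preceq r'\}) \supseteq SE(\cqm)$. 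A third, purely definitional, observation is that the cut is monotone in the set it is intersected against.

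With these in hand, parts a)--c) are immediate. For a), $P \not\models_s Q$ means $SE(P) \cap \seqc \neq \es$; since $SE(P) \subseteq SE(\{r' \in P \mid r \preceq r'\})$ for every $r \in P$, each such set meets $\seqc$, so $\cqm = P$. For c), $\models_s Q$ gives $\seqc = \es$, so no set meets $\seqc$ and $\cqm = \es$. For b), $\not\models_s Q$ gives $\seqc \neq \es$; if $\cqm = \es$ then $SE(\cqm) = \se \supseteq \seqc$, and if $\cqm \neq \es$ then fact (ii) gives $SE(\cqm) \cap \seqc \neq \es$ directly, so in either case $SE(\cqm) \not\subseteq SE(Q)$, that is $\cqm \not\models_s Q$.

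Parts d) and e) are monotonicity arguments. If $Q \models_s R$, then $SE(Q) \subseteq SE(R)$, hence $\overline{SE(R)} \subseteq \seqc$, and monotonicity of the cut gives $cut^-_\preceq(R) \subseteq \cqm$. Since $SE(Q+R) \subseteq SE(Q)$, we have $\seqc \subseteq \seqrc$, and monotonicity gives $\cqm \subseteq \cqrm$.

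Parts f) and g) are the delicate ones and I expect the case analysis here to be the main obstacle. In both, write $\cqrm$ as the cut against $\seqc \cup \overline{SE(R)}$, so that for $r \in \cqrm$ either $SE(\{r' \in P \mid r \preceq r'\})$ meets $\seqc$ (whence $r \in \cqm$) or it meets $\overline{SE(R)}$ (whence $r \in cut^-_\preceq(R)$). For f), assume $\cqm \models_s R$ and take $r \in \cqrm$ with $r \notin \cqm$; by upward closure $r$ lies strictly below the minimal $r_0 \in \cqm$, so $\cqm \subseteq \{r' \in P \mid r \preceq r'\}$ and thus $SE(\{r' \in P \mid r \preceq r'\}) \subseteq SE(\cqm) \subseteq SE(R)$, contradicting that it meets $\overline{SE(R)}$; hence $\cqrm \subseteq \cqm$, and with e) we get $\cqrm = \cqm$ (the degenerate case $\cqm = \es$ is dispatched via c), since then $\models_s R$, hence $\models_s (Q+R)$ and $\cqrm = \es$). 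For g), assume $\cqm \not\models_s R$, i.e.\ $SE(\cqm) \cap \overline{SE(R)} \neq \es$; if $\cqm = \es$ the claim is immediate because then no set meets $\seqc$, so assume $\cqm \neq \es$. If $r \in \cqrm$ falls in the $\seqc$ branch, then $r \in \cqm$, so $SE(\{r' \in P \mid r \preceq r'\}) \supseteq SE(\cqm)$ meets $\overline{SE(R)}$ and $r \in cut^-_\preceq(R)$; this gives $\cqrm \subseteq cut^-_\preceq(R)$, while the reverse inclusion is part d) applied to $Q+R \models_s R$, yielding $\cqrm = cut^-_\preceq(R)$.
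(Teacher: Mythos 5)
Your overall strategy matches the paper's: parts a)--e) are definitional and monotonicity observations, and f), g) hinge on the decomposition $\overline{SE(Q+R)} = \seqc \cup \overline{SE(R)}$ together with the upward closure of cuts under $\preceq$, which the paper uses implicitly via the inclusions $\cqm \subseteq \{\, r' \in P \mid r \preceq r' \,\}$ (for $r \notin \cqm$) and $\{\, r' \in P \mid r \preceq r' \,\} \subseteq \cqm$ (for $r \in \cqm$). The only structural difference is that you argue f) and g) directly where the paper argues by contrapositive; your facts (i) and (ii) are a clean packaging of what the paper leaves tacit.

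One step fails as written: in the degenerate subcase of f) with $\cqm = \es$, you infer ``$\models_s R$, hence $\models_s (Q+R)$''. That inference is invalid: $\models_s (Q+R)$ means $SE(Q) \cap SE(R) = \se$, which additionally requires $\models_s Q$, and $\cqm = \es$ does not give you that (take $P = Q = \{\, a. \,\}$; then $\cqm = \es$ while $\not\models_s Q$). So you cannot dispatch this case via part c). The subcase is still true and the repair is immediate: $\cqm \models_s R$ with $SE(\cqm) = \se$ forces $\overline{SE(R)} = \es$, hence $\overline{SE(Q+R)} = \seqc$, hence $\cqrm = \cqm$ straight from the definition. (The paper's contrapositive formulation of f) sidesteps the case entirely, since $\cqm \subseteq \{\, r' \in P \mid r \preceq r' \,\}$ holds vacuously when $\cqm = \es$.) Everything else checks out.
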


\begin{dfn}[Ensconcement Contraction] \label{dfn:cone}
Let $P \in \lpa$ and~$\preceq$ be an ensconcement associated with $P$. 
An \emph{ensconcement contraction operator}~$\cone$ for~$P$ is defined such that for any $Q \in \lpa$:
$$
P \cone Q = 
\begin{cases}
\; P  &\text{ if } \models_s Q, \\
\; \{\, r \in P \mid SE(\cqm) \cap \seqc \subseteq SE(r) \,\} &\text{ otherwise.}
\end{cases}
$$
\end{dfn}

The contraction operator $\cone$ works in a dual way to the revision operator $\reve$. It relies on $\cqm$ to determine from which level upward in the ensconcement associated with~$P$ elements are retained in the operation, and adds any further parts of~$P$ that do not compromise the set of SE~models of $\cqm$ inconsistent with $Q$.

We can formalise the relationship between the SE~models of $P \cone Q$ and $\cqm$ as follows.

\begin{prop} \label{prop:SEcon-equal-SEcutminus}
Let $P,Q \in \lpa$ and $\preceq$ be an ensconcement associated with $P$. Then $SE(P \cone Q) \cap \seqc = SE(\cqm) \cap \seqc$.
\end{prop}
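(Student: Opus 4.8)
The plan is to unfold the definition of $\cone$ and prove the equality by two inclusions, with everything reducing to the fact that, for a set of rules $R$, $SE(R) = \bigcap_{r \in R} SE(r)$.

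First I would dispose of the degenerate case $\models_s Q$. Then $SE(Q) = \se$, so $\seqc = \es$; moreover $\cqm = \es$ by Lemma~\ref{lem:cutm-properties}c), so $SE(\cqm) = \se$. Hence both $SE(P \cone Q) \cap \seqc$ and $SE(\cqm) \cap \seqc$ are empty and the claim holds trivially. So from now on assume $\not\models_s Q$, in which case $P \cone Q = \{\, r \in P \mid SE(\cqm) \cap \seqc \subseteq SE(r) \,\}$.

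For the inclusion $SE(P \cone Q) \cap \seqc \subseteq SE(\cqm) \cap \seqc$, the key step is to observe that $\cqm \subseteq P \cone Q$: for any $r \in \cqm$ we have $SE(\cqm) = \bigcap_{s \in \cqm} SE(s) \subseteq SE(r)$, so in particular $SE(\cqm) \cap \seqc \subseteq SE(r)$, which together with $r \in P$ (as $\cqm \subseteq P$) is exactly the membership condition for $P \cone Q$. From $\cqm \subseteq P \cone Q$ it follows that $SE(P \cone Q) \subseteq SE(\cqm)$, and intersecting both sides with $\seqc$ yields the desired inclusion.

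For the reverse inclusion $SE(\cqm) \cap \seqc \subseteq SE(P \cone Q) \cap \seqc$, I would argue directly from the definition of $P \cone Q$: every $r \in P \cone Q$ satisfies $SE(\cqm) \cap \seqc \subseteq SE(r)$, hence $SE(\cqm) \cap \seqc \subseteq \bigcap_{r \in P \cone Q} SE(r) = SE(P \cone Q)$; since trivially $SE(\cqm) \cap \seqc \subseteq \seqc$, we conclude $SE(\cqm) \cap \seqc \subseteq SE(P \cone Q) \cap \seqc$. Combining the two inclusions gives the proposition. There is no real obstacle here: the statement is close to a direct consequence of how $\cone$ is defined, and the only mild care needed is the handling of the $\models_s Q$ case and the remark that the cut $\cqm$ is itself among the rules retained by $\cone$.
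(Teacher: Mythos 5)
Your proof is correct and follows essentially the same route as the paper: the paper writes $P \cone Q = \cqm \cup ((P \cone Q) \setminus \cqm)$ and observes that each extra rule $r$ satisfies $SE(\cqm) \cap \seqc \subseteq SE(r)$, which is exactly your two inclusions packaged as one chain of equalities (your explicit check that $\cqm \subseteq P \cone Q$ is implicit in the paper's decomposition). No gaps.
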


The contraction operator~$\cone$ satisfies all AGM contraction postulates except Recovery.

\begin{thm} \label{thm:con-ens}
The contraction operator~$\cone$ satisfies ($\dotminus$1)--($\dotminus$4) and ($\dotminus$6)--($\dotminus$8).
\end{thm}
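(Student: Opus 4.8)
The plan is to verify each postulate in turn, making heavy use of Proposition~\ref{prop:SEcon-equal-SEcutminus} and Lemma~\ref{lem:cutm-properties}. First I would observe that $\cone$ is well-defined as a map from $\lpa \times \lpa$ to $\lpa$: in the nontrivial case, $P \cone Q$ is by construction a subset of $P$, hence a logic program, which simultaneously settles ($\dotminus$1) and ($\dotminus$2). For ($\dotminus$3), suppose $P \not\models_s Q$; then by Lemma~\ref{lem:cutm-properties}\ref{lem:cqm-equal-P} we have $\cqm = P$, and since $SE(P) = SE(\cqm) \supseteq SE(\cqm)\cap\seqc$, every rule $r \in P$ satisfies $SE(\cqm)\cap\seqc \subseteq SE(r)$, so $P\cone Q = P$. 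For ($\dotminus$4), assume $\not\models_s Q$. By Proposition~\ref{prop:SEcon-equal-SEcutminus}, $SE(P\cone Q)\cap\seqc = SE(\cqm)\cap\seqc$, which is nonempty by Lemma~\ref{lem:cutm-properties}\ref{lem:cqm-not-models-Q} (that lemma says $\cqm \not\models_s Q$, i.e.\ $SE(\cqm)\not\subseteq SE(Q)$, i.e.\ $SE(\cqm)\cap\seqc\neq\es$). Hence $SE(P\cone Q)\not\subseteq SE(Q)$, which is exactly $P\cone Q \not\models_s Q$. Postulate ($\dotminus$6) is immediate: if $Q\equiv_s R$ then $SE(Q)=SE(R)$, so $\seqc = \overline{SE(R)}$, and every quantity in the definition of $\cone$ — in particular $\cqm$ and the selection condition — depends on $Q$ only through $SE(Q)$, giving $P\cone Q = P\cone R$.

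The substantive work is in ($\dotminus$7) and ($\dotminus$8), which concern contraction by a union $Q+R$. For ($\dotminus$8), assume $P\cone(Q+R)\not\models_s Q$, i.e.\ $SE(P\cone(Q+R))\not\subseteq SE(Q)$. Using Proposition~\ref{prop:SEcon-equal-SEcutminus} applied to $Q+R$, this means $SE(\cqrm)\cap\seqrc \not\subseteq SE(Q)$; since $\seqrc = \seqc \cup \overline{SE(R)}$, we can extract a point of $SE(\cqrm)$ lying in $\seqc$, so $\cqrm \not\models_s Q$. The key then is to invoke Lemma~\ref{lem:cutm-properties}\ref{lem:cqrm-equal-crm} or the combination of parts (e) and (f): from $\cqm \subseteq \cqrm$ and the assumption we should be able to conclude $\cqrm = \cqm$ (intuitively, if the larger cut $\cqrm$ still fails to imply $Q$, then already the $Q$-cut could not have been cut lower, forcing equality). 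Once $\cqrm = \cqm$, Proposition~\ref{prop:SEcon-equal-SEcutminus} gives $SE(\cqrm)\cap\seqrc$ and $SE(\cqm)\cap\seqc$, and one checks that the selection condition "$SE(\cqm)\cap\seqc \subseteq SE(r)$" is at least as permissive as "$SE(\cqrm)\cap\seqrc \subseteq SE(r)$" because $\seqc \subseteq \seqrc$ — so every rule retained in $P\cone(Q+R)$ is retained in $P\cone Q$, yielding $P\cone(Q+R)\subseteq P\cone Q$.

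For ($\dotminus$7), I would prove the stronger trichotomy coming from ($\dotminus$7b): $P\cone(Q+R)$ equals $P\cone Q$, or $P\cone R$, or $(P\cone Q)\cap(P\cone R)$, which immediately implies $(P\cone Q)\cap(P\cone R)\subseteq P\cone(Q+R)$. Using $\cqm \subseteq \cqrm$ and the symmetric $cut^-_\preceq(R) \subseteq \cqrm$, together with the fact that $\preceq$ is a total preorder (so the cuts are totally ordered "initial-segment-like" sets of rules), one of $\cqm \models_s R$, $cut^-_\preceq(R)\models_s Q$, or neither must hold; Lemma~\ref{lem:cutm-properties}\ref{lem:cqrm-equal-cqm} and \ref{lem:cqrm-equal-crm} then pin down $\cqrm$ to $\cqm$, to $cut^-_\preceq(R)$, or to the appropriate meet, and Proposition~\ref{prop:SEcon-equal-SEcutminus} transfers this to the contraction outputs. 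The main obstacle I anticipate is exactly this case analysis on how the three cuts $\cqm$, $cut^-_\preceq(R)$, $\cqrm$ sit relative to one another — getting the "neither" case right, where $\cqrm$ is the strict intersection and one must show the retained-rule sets intersect correctly, will require careful bookkeeping with the SE-model equalities rather than any deep new idea. Everything else reduces to monotonicity of $\overline{SE(\cdot)}$ under $SE(Q+R)=SE(Q)\cap SE(R)$ and the already-established Proposition~\ref{prop:SEcon-equal-SEcutminus}.
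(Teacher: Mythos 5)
Your proposal is correct, and for ($\dotminus$1)--($\dotminus$4), ($\dotminus$6), and ($\dotminus$8) it follows essentially the paper's own argument; your treatment of ($\dotminus$4) is in fact slightly cleaner, since you make explicit via Lemma~\ref{lem:cutm-properties}~\ref{lem:cqm-not-models-Q}) the nonemptiness of $SE(\cqm) \cap \seqc$ that the paper's proof uses only implicitly, and you route everything through Proposition~\ref{prop:SEcon-equal-SEcutminus} instead of the paper's case split on whether $P \models_s Q$. The one genuine divergence is ($\dotminus$7): the paper proves it directly with no case analysis at all, by noting that for $r \in (P \cone Q) \cap (P \cone R)$ the inclusions $\cqm \subseteq \cqrm$ and $cut_{\preceq}^-(R) \subseteq \cqrm$ from Lemma~\ref{lem:cutm-properties}~\ref{lem:cqm-subset-cqrm}) yield $SE(\cqrm) \cap \seqc \subseteq SE(r)$ and $SE(\cqrm) \cap \overline{SE(R)} \subseteq SE(r)$, and then $\seqrc = \seqc \cup \overline{SE(R)}$ gives $r \in P \cone (Q+R)$. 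You instead propose to prove the full trichotomy of ($\dotminus$7b) and read off ($\dotminus$7) as a corollary; this is sound (and is exactly how the paper later proves Theorem~\ref{thm:cone-bb}), and it buys the stronger belief-base property as a byproduct, but it forces you through the three-way case analysis on $\cqm \models_s R$, $cut_{\preceq}^-(R) \models_s Q$, or neither --- precisely the bookkeeping you flag as delicate and which the direct monotonicity argument avoids entirely (note also that in the ``neither'' case the three cuts coincide outright rather than forming a ``meet''). For ($\dotminus$8), your key step that $\cqrm \not\models_s Q$ forces $\cqrm = \cqm$ does go through, either by the paper's maximality argument for $\cqm$ or by first passing to $cut_{\preceq}^-(R) \not\models_s Q$ via part~\ref{lem:cqm-subset-cqrm}) and then applying part~\ref{lem:cqrm-equal-crm}) with the roles of $Q$ and $R$ exchanged, so nothing essential is missing there.
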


The next example shows why $\cone$ does not satisfy the Recovery postulate ($\dotminus$5).

\begin{ex}
Consider again $P$ and $Q$ from Example~\ref{ex:conpm}. For any ensconcement~$\preceq$ associated with $P$, it holds that $P \cone Q = \{\, b \leftarrow a. \,\}$. Thus, $P = \{\, a.,\, b \leftarrow a. \,\} \nsubseteq \{\, b \leftarrow a.,\, a \leftarrow b. \,\} = (P \cone Q) + Q$.
\qedex
\end{ex}

The main reason for non-satisfaction of Recovery is that our ensconcement contraction operator $\cone$ operates on programs that are not logically closed, and Recovery is a key AGM postulate that characterises contractions of logically closed belief sets. On the other hand, our ensconcement contraction operator satisfies the same set of belief base postulates as its classic counterpart.

\begin{thm} \label{thm:cone-bb}
The contraction operator $\cone$ satisfies ($-$1b), ($-$2b) and ($-$5b)--($-$8b).
\end{thm}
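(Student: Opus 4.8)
The plan is to split the six postulates of this theorem into those that are inherited from the AGM-style analysis and the two that need a fresh argument. As the paper already records, ($\dotminus$1b) coincides with ($\dotminus$2), ($\dotminus$2b) with ($\dotminus$4), ($\dotminus$5b) with ($\dotminus$3), and ($\dotminus$6b) with ($\dotminus$6); hence all four are handed to us directly by Theorem~\ref{thm:con-ens}, which establishes that $\cone$ satisfies ($\dotminus$1)--($\dotminus$4) and ($\dotminus$6)--($\dotminus$8). So the real burden falls on the per-rule disjunctive-elimination postulate ($\dotminus$8b) and the decomposition postulate ($\dotminus$7b). Note that Proposition~\ref{prop:relevance-to-disjelim} does not help for ($\dotminus$8b), because $\cone$ does not satisfy Relevance ($\dotminus$3b) --- which is precisely why ($\dotminus$3b) and ($\dotminus$4b) are absent from the statement.

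For ($\dotminus$8b) I would argue directly from Definition~\ref{dfn:cone} and Proposition~\ref{prop:SEcon-equal-SEcutminus}. Suppose $r \in P \setminus (P \cone Q)$. Then $\not\models_s Q$ (otherwise $P \cone Q = P$ and the hypothesis is vacuous), so $\cone$ is given by its second branch and $SE(\cqm) \cap \seqc \nsubseteq SE(r)$; choose an SE interpretation $(X,Y)$ in $(SE(\cqm) \cap \seqc) \setminus SE(r)$. By Proposition~\ref{prop:SEcon-equal-SEcutminus}, $SE(P \cone Q) \cap \seqc = SE(\cqm) \cap \seqc$, so $(X,Y) \in SE(P \cone Q)$; moreover $(X,Y) \in \seqc$ gives $(X,Y) \notin SE(Q)$, while $(X,Y) \notin SE(r)$ by choice. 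Hence $(X,Y) \in SE(P \cone Q) \setminus (SE(Q) \cup SE(r))$, which is exactly $SE(P \cone Q) \nsubseteq SE(Q) \cup SE(r)$, i.e.\ ($\dotminus$8b).

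For ($\dotminus$7b), the pivotal identity is $\seqrc = \seqc \cup \overline{SE(R)}$, which follows from $SE(Q+R) = SE(Q) \cap SE(R)$. If $\models_s (Q+R)$, then $\models_s Q$ and $\models_s R$ as well, and $P \cone (Q+R) = P = P \cone Q = P \cone R$. Otherwise I would case-split using Lemma~\ref{lem:cutm-properties}(e),(f),(g) together with their mirror images obtained by interchanging $Q$ and $R$ (legitimate since $SE(Q+R) = SE(R+Q)$). If $\cqm \models_s R$, Lemma~\ref{lem:cutm-properties}(f) gives $\cqrm = \cqm$ and $SE(\cqm) \cap \overline{SE(R)} = \es$, so $SE(\cqrm) \cap \seqrc = SE(\cqm) \cap \seqc$ and therefore $P \cone (Q+R) = P \cone Q$. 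Symmetrically, if $cut^-_\preceq(R) \models_s Q$ then $P \cone (Q+R) = P \cone R$. In the remaining case $\cqm \not\models_s R$ and $cut^-_\preceq(R) \not\models_s Q$, Lemma~\ref{lem:cutm-properties}(g) and its mirror give $\cqrm = cut^-_\preceq(R)$ and $\cqrm = \cqm$, so all three cuts coincide; expanding $SE(\cqrm) \cap \seqrc = (SE(\cqm) \cap \seqc) \cup (SE(\cqm) \cap \overline{SE(R)})$, a rule $r \in P$ lies in $P \cone (Q+R)$ iff it lies in both $P \cone Q$ and $P \cone R$, i.e.\ $P \cone (Q+R) = (P \cone Q) \cap (P \cone R)$. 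In every case one of the three alternatives of ($\dotminus$7b) holds.

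I expect ($\dotminus$7b) to be the main obstacle. The crux is the decomposition $\seqrc = \seqc \cup \overline{SE(R)}$ and the realisation that the behaviour of $\cqrm$ relative to $\cqm$ and $cut^-_\preceq(R)$ --- already packaged in Lemma~\ref{lem:cutm-properties} --- forces exactly the three-way disjunction of ($\dotminus$7b), with the case in which the two cuts collapse to a common value being the one that produces the intersection outcome. Some care is needed that the degenerate situations ($Q+R$, $Q$, or $R$ strongly equivalent to a tautology, so that a $\cone$ reduces to its first branch) are absorbed into the general argument, but given Theorem~\ref{thm:con-ens}, Lemma~\ref{lem:cutm-properties}, and Proposition~\ref{prop:SEcon-equal-SEcutminus}, everything else is routine.
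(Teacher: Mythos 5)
Your proposal is correct and follows essentially the same route as the paper's proof: the first four postulates are discharged via Theorem~\ref{thm:con-ens}, ($\dotminus$7b) is handled by the identical three-way case split on $\cqm \models_s R$, $cut^-_\preceq(R) \models_s Q$, and neither, using Lemma~\ref{lem:cutm-properties}(f),(g), and ($\dotminus$8b) rests on Proposition~\ref{prop:SEcon-equal-SEcutminus}. The only cosmetic differences are that you argue ($\dotminus$8b) directly by exhibiting a witness interpretation where the paper argues by contrapositive, and your Case~3 of ($\dotminus$7b) spells out both inclusions where the paper writes out only one.
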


The following two examples demonstrate that the contraction operator~$\cone$ does not satisfy~($\dotminus$3b) and ($\dotminus$4b), respectively.

\begin{ex} \label{ex:cone-not-sat-relevance}
Let $\se = \{A, B, C, D\}$, $SE(Q) = \{C\}$, and consider again $P$ and $\preceq$ from Example~\ref{ex:reve-not-sat-relevance}. Then $\cqm = \{r_3\} = P \cone Q$. While $r_2 \in P \setminus (P \cone Q)$, there exists no program $P'$ such that $P \cone Q \subseteq P' \subseteq P$ and $P' \not \models_s Q$ but $P' \cup \{r_2\} \models_s Q$.
\qedex
\end{ex}

\begin{ex} \label{ex:cone-not-sat-uniformity}
Let $\se = \{A, B, C, D\}$, $SE(Q) = \{C\}$, $SE(R) = \{B, C\}$, and consider again $P$ and $\preceq$ from Example~\ref{ex:reve-not-sat-uniformity}. Then it holds for any $P' \subseteq P$ that $P' \not \models_s Q$ iff $P' \not \models_s R$. However, we have $\cqm = \{r_3\}$ and $P \cone Q = \{r_3\}$, while $cut_\preceq^-(R) = \{r_3\}$ and $P \cone R = \{r_2, r_3\}$.
\qedex
\end{ex}

\section{Connections between the Operators} \label{sec:conn-btw-operators}
Having defined partial meet revision and contraction operators and ensconcement revision and contraction operators in the previous sections, we now establish the formal connections between them. We first relate partial meet revision to ensconcement revision and partial meet contraction to ensconcement contraction. We then investigate whether the granularity of ensconcements, that is, whether an ensconcement is defined over rules or subsets of a program, influences that relationship. Finally, we connect partial meet revision to partial meet contraction and ensconcement revision to ensconcement contraction via the Levi and Harper identities.

\subsection{Relating Partial Meet Operators to Ensconcement Operators}
We already saw from 
the set of postulates that~$\reve$ and~$\revpm$ satisfy, that partial meet revision and ensconcement revision share similar properties. In the following characterisation theorem we state the exact relationship between the two.
\begin{thm} \label{thm:revpm-reve}
Let $P,Q \in \lpa$. For any selection function $\gamma$, there exists an ensconcement $\preceq$ associated with $P$ such that $P \revpm Q = P \reve Q$.
\end{thm}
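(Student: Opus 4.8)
The plan is to take an arbitrary selection function $\gamma$ for $P$ and use the set $\gpq$ of selected compatible sets to build an ensconcement $\preceq$ on $P$ whose cut $\cq$ realises the same intersection $\bigcap \gpq$. The key observation is that $\bigcap \gpq$ is a subset of $P$ that is consistent with $Q$ (whenever $Q$ is satisfiable), so by Lemma~\ref{lem:cutm-properties}-style reasoning on cuts (here Lemma~\ref{lem:cut-properties}) one can hope to place exactly the rules of $\bigcap \gpq$ at or above the cut level. Concretely, I would partition $P$ into three blocks: the top block $T = \bigcap \gpq$, a middle block, and a bottom block, and define $\preceq$ so that $T$ is strictly most ensconced, everything in $P \setminus T$ is strictly below, and within each block rules are ordered only as forced by Condition~($\preceq$\ref{cond:ens-2}) (strongly equivalent rules tied) — but being careful that Condition~($\preceq$\ref{cond:ens-1}) is not violated, i.e.\ no rule strictly implied by the conjunction of the rules weakly above it can sit at the same level.

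First I would handle the trivial case: if $Q$ is not satisfiable, both operators return $P + Q$, so any ensconcement works. Assume $Q$ is satisfiable. Then $P \revpm Q = \bigcap \gpq + Q$ and $P \reve Q = \{\, r \in P \mid SE(\cq) \cap SE(Q) \subseteq SE(r) \,\} + Q$, so it suffices to exhibit $\preceq$ with $\{\, r \in P \mid SE(\cq) \cap SE(Q) \subseteq SE(r) \,\} = \bigcap \gpq$. The natural choice is to make $\cq = \bigcap \gpq$: if I can arrange the ensconcement so that the cut is precisely $T = \bigcap \gpq$, then I must separately check that $\{\, r \in P \mid SE(T) \cap SE(Q) \subseteq SE(r) \,\} = T$, i.e.\ that no rule of $P \setminus T$ has $SE$ containing $SE(T)\cap SE(Q)$. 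This last point is the crux: it need not hold for an arbitrary layering, so the ordering on $P\setminus T$ must be chosen to push any such "redundant" rule up into the cut — but then it would be in $T$, contradiction, unless $T$ was not actually $\bigcap\gpq$. The resolution is that any rule $r \in P \setminus T$ with $SE(T) \cap SE(Q) \subseteq SE(r)$ satisfies $SE(T \cup \{r\}) \cap SE(Q) = SE(T) \cap SE(Q) \neq \es$, so $T \cup \{r\}$ is also consistent with $Q$; since $T = \bigcap \gpq$ and each element of $\gpq$ is a maximal compatible set, I would argue such an $r$ lies in every selected compatible set, hence in $T$ — giving the needed contradiction and showing the set actually equals $T$. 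So really the work is: (i) build $\preceq$ with $\cq = \bigcap\gpq$, and (ii) invoke this maximality argument for the second equality.

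For step (i), I would order $P$ so that: all rules in $T$ are at the single top level (legitimate by ($\preceq$\ref{cond:ens-1}) provided $T$ itself has no internal strict-implication clash among equally-placed rules — which I may need to refine by sub-stratifying $T$ by $\models_s$, since ($\preceq$\ref{cond:ens-1}) forbids strict implication at the same level); and rules of $P \setminus T$ are placed at strictly lower levels, stratified finely enough that Condition~($\preceq$\ref{cond:ens-1}) holds throughout and that the first level below $T$ at which adding rules breaks consistency with $Q$ is immediately below $T$. By Lemma~\ref{lem:cut-properties}\ref{lem:cq-plus-Q-satisfiable} and the definition of the cut, $\cq$ is the largest upward-closed set of the ensconcement consistent with $Q$; arranging $T$ at the top and ensuring $T \cup \{r\}$ is inconsistent with $Q$ for every $r$ in the level just below $T$ (which holds because $r \notin T$ means $r$ is absent from some selected compatible set, and by maximality of that set $SE(T \cup \{r\}) \cap SE(Q) = \es$ — same maximality fact as above) forces $\cq = T$.

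The main obstacle I anticipate is reconciling Condition~($\preceq$\ref{cond:ens-1}) with the desired block structure: strict implications among rules of $P$ (e.g.\ the $a.$ vs.\ $a \leftarrow b.$ phenomenon flagged after Definition~\ref{dfn:ens}) force certain rules strictly above others, and I must check that these forced constraints are compatible with putting $T$ on top and with getting the cut exactly at $T$. In particular one must verify that no rule $r \notin T$ is forced by ($\preceq$\ref{cond:ens-1}) to be at least as ensconced as some rule in $T$ in a way that would drag $r$ into $\cq$. This should follow because such a forcing would require $SE(\{\,r' : r \preceq r'\,\}) \subset SE(r)$ to fail, and the rules above $r$ include all of $T$, whose models intersected with $SE(Q)$ already sit inside $SE(r)$ only if $r \in T$ by the maximality argument — so the two requirements are consistent. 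Once this compatibility bookkeeping is done, the two displayed sets coincide and the theorem follows; the remaining case distinctions (e.g.\ $\gpq$ a singleton, $P + Q$ satisfiable so $T = P$ and $\cq = P$ by Lemma~\ref{lem:cut-properties}\ref{lem:cq-equal-P}) are routine.
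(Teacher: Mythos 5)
Your proposal follows essentially the same route as the paper's own proof: set $T=\bigcap\gpq$, build an ensconcement that places $T$ strictly above $P\setminus T$ so that $\cq=T$, and then use exactly the paper's maximality argument (any $r\in P\setminus T$ with $SE(T)\cap SE(Q)\subseteq SE(r)$ could be consistently added to every selected compatible set, hence by maximality lies in each of them and so in $T$, a contradiction) to conclude that $\{\,r\in P\mid SE(\cq)\cap SE(Q)\subseteq SE(r)\,\}=T$. The one obstacle you flag --- whether Condition~($\preceq$1) actually permits an ensconcement with all of $T$ strictly on top and with the cut landing exactly at $T$ --- is precisely the step the paper also leaves implicit (its proof simply posits an ensconcement ``with a minimal number of levels'' realising the two blocks and declares $\cq=S$ ``clearly''), so your sketch is no less complete than the published argument on that point.
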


Theorem~\ref{thm:revpm-reve} asserts that $\revpm$ can be characterised in terms of $\reve$. The other direction is not possible, as shown in the example below.

\begin{ex} \label{ex:no-gamma-for-ensc}
Let $P = \{\, a.,\, b.,\, c. \,\}$, $Q = \{\, \bot \leftarrow a. \,\}$, and~$\preceq$ be the ensconcement associated with $P$ as shown in Figure~\ref{fig:preceq-revpm-reve}. It follows that $\cq = \{\, c. \,\}$ and $P \reve Q = \{\, c.,\, \bot \leftarrow a. \,\}$. Yet $\pq = \{\, \{\, b.,\, c. \,\}\,\} = \gpq$, for any selection function $\gamma$, so that $P \revpm Q = \{\, b.,\, c.,\, \bot \leftarrow a. \,\}$. We have $P \reve Q \neq P \revpm Q$.
\begin{figure}
\small
\centering
\fbox{
\setlength{\unitlength}{0.75cm}
\begin{picture}(1.8,1.5)
\put(-.1,1.1){$\{\, c. \,\}$}
\put(-.38,0.7){\line(1,0){2.45}}
\put(-.1,.1){$\{\, a. \,\} \; \{\, b. \,\}  \,$}
\end{picture}
}
\caption{$\preceq$ of Example~\ref{ex:no-gamma-for-ensc}}
\label{fig:preceq-revpm-reve}
\end{figure}
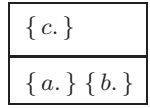
\qedex
\end{ex}

On the one hand, the requirement $SE(\cq) \cap SE(Q) \subseteq SE(R)$ in Definition~\ref{dfn:reve} requires any subset $R$ of $P$ that is not part of the cut to have \emph{all} SE~models shared by the cut and~$Q$, in order to be included in the revision outcome. In the previous example, the SE~models shared by the cut and~$Q$ are $(c,c), (c,bc)$, and $(bc,bc)$. Since $(c,c) \not \in SE(\{\, b. \,\})$ (and also $(c,bc) \not \in SE(\{\, b. \,\})$), it follows that $\{\, b. \,\} \nsubseteq P \reve Q$. 
On the other hand, the definition of partial meet revision is based on compatible sets, which are required to be maximal and to share only a minimum of one SE~model with $Q$ (Definition~\ref{dfn:compatible-set}). This requirement limits the result of a partial meet revision for this example to the one above, regardless of the the type of selection function employed.

We also find that the partial meet contraction operator $\conpm$ can be characterised in terms of the ensconcement contraction operator~$\cone$, formalised in the next theorem.

\begin{thm} \label{thm:conpm-cone}
Let $P,Q \in \lpa$. For any selection function $\gamma$, there exists an ensconcement $\preceq$ associated with $P$ such that $P \conpm Q = P \cone Q$.
\end{thm}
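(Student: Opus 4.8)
The plan is to mirror the proof of Theorem~\ref{thm:revpm-reve} (partial meet revision characterised by ensconcement revision), adapting it to the contraction setting. Fix $P,Q \in \lpa$ and a selection function $\gamma$ for $P$. If $\models_s Q$, then both $P \conpm Q = P$ and $P \cone Q = P$ for every ensconcement, so any $\preceq$ works; assume from now on $\not\models_s Q$. The target set to realise is $S := P \conpm Q = \bigcap \gpqm$, the intersection of the selected remainder sets. The idea is to build an ensconcement $\preceq$ associated with $P$ that places every rule of $S$ strictly above every rule of $P \setminus S$, and then verify that $\cqm$, together with the closure step in Definition~\ref{dfn:cone}, reproduces exactly $S$.

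First I would construct $\preceq$. Partition $P$ into the ``kept'' rules $S$ and the ``dropped'' rules $D := P \setminus S$, refine each block by strong-equivalence classes (to respect Condition~($\preceq$\ref{cond:ens-2})), and stratify the $S$-block internally so that Condition~($\preceq$\ref{cond:ens-1}) holds — concretely, order rules within $S$ so that no rule is strictly implied by the conjunction of rules ranked at least as high as it; all rules of $D$ go strictly below all rules of $S$. One must check that this $\preceq$ is a genuine ensconcement: ($\preceq$\ref{cond:ens-2}) is immediate from refining by $\equiv_s$-classes, and ($\preceq$\ref{cond:ens-1}) needs that for each $r$, $SE(\{r' \in P\setminus\{r\} \mid r \preceq r'\}) \not\subset SE(r)$; for $r \in D$ this set is a subset of (or equal to) $SE(S)$ and the stratification of $D$ must be chosen carefully so this holds, while for $r \in S$ it follows from the internal stratification of $S$. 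This is the step I expect to require the most care: showing that a linear refinement satisfying ($\preceq$\ref{cond:ens-1}) actually exists given the constraints imposed by wanting $S$ entirely on top — essentially an argument that the subset-strict-implication relation on $P$ has no ``cycles'' that would force a rule of $S$ below a rule of $D$, which follows because $S = \bigcap\gpqm$ and each element of $\gpqm$ is already maximal with $SE(\cdot)\cap\seqc\neq\es$.

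With $\preceq$ in hand, the remaining work is to show $\cqm = S$ and then that the closure step adds nothing new, i.e. $\{r \in P \mid SE(\cqm)\cap\seqc \subseteq SE(r)\} = S$. For $\cqm = S$: by definition $\cqm = \{r \in P \mid SE(\{r'\in P \mid r \preceq r'\})\cap\seqc\neq\es\}$. For $r\in S$, the rules ranked $\succeq r$ are all in $S$, hence contain $S$ or a sub-collection; one shows $SE(S)\cap\seqc\neq\es$ (since $S = \bigcap\gpqm$ and each selected remainder set meets $\seqc$, a small argument using that $SE(S) \supseteq SE(\text{any single remainder set})$ gives $SE(S)\cap\seqc\neq\es$), and monotonicity upward gives $r\in\cqm$. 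For $r\in D$: the rules ranked $\succeq r$ include all of $S$ plus possibly some of $D$; I would argue that adding $r$ to (a suitable superset within) $S$ destroys the intersection with $\seqc$, because $r$ belongs to every relevant remainder set's complement in the sense that no remainder set containing $r$ survives — more precisely, since $r \notin \bigcap\gpqm$, there is a selected remainder set $R_0$ with $r\notin R_0$, and by maximality $SE(R_0\cup\{r\})\cap\seqc=\es$; pushing this through the stratification yields $r\notin\cqm$. Finally, for the closure step, $SE(\cqm)\cap\seqc = SE(S)\cap\seqc$, and one checks that a rule $r$ satisfies $SE(S)\cap\seqc\subseteq SE(r)$ iff $r\in S$: the ``$\Leftarrow$'' direction is trivial since $SE(S)\subseteq SE(r)$ for $r\in S$; for ``$\Rightarrow$'', if $r\in D$ then using the $R_0$ above with $SE(R_0)\cap\seqc\neq\es$ but $SE(R_0)\cap SE(r)\cap\seqc = SE(R_0\cup\{r\})\cap\seqc = \es$, we get a point of $SE(S)\cap\seqc$ (lying in $SE(R_0)\cap\seqc$, since $S\subseteq R_0$ gives $SE(R_0)\subseteq SE(S)$ — wait, this needs $SE(S)\supseteq SE(R_0)$, which holds as $S\subseteq R_0$) outside $SE(r)$, a contradiction. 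Assembling these equalities gives $P\conpm Q = \cqm\text{-closure} = S = P\cone Q$, completing the proof.
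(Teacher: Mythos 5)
Your plan is essentially the paper's own proof: the paper disposes of Theorem~\ref{thm:conpm-cone} with ``follows analogously to Theorem~\ref{thm:revpm-reve}'', and that proof does exactly what you propose --- set $S = \bigcap \gpqm$, build an ensconcement with every rule of $P \setminus S$ strictly below every rule of $S$ (``with a minimal number of levels''), observe that $\cqm = S$, and rule out any extra rule $r'$ in the closure step by arguing that $SE(\cqm) \cap \seqc \subseteq SE(r')$ would force $r'$ into every selected remainder set by maximality, contradicting $r' \notin S$. If anything you are more scrupulous than the paper, since you explicitly flag the two points it elides --- verifying Conditions~($\preceq$1)--($\preceq$2) for the constructed ordering and pushing the maximality argument through a stratified $P \setminus S$ --- so no substantive divergence to report.
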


The other direction of this theorem does not hold. Consider again $P$ and $\preceq$ from Example~\ref{ex:no-gamma-for-ensc} and let $Q = \{\, a. \,\}$. It is easy to see that $P \conpm Q \neq P \cone Q$, for any selection function~$\gamma$.

\subsection{Granularity of Ensconcements}
For our partial meet construction, we determined the outcome of a revision or contraction operation by employing a function that selects among \emph{subsets} of a program. For our ensconcement construction, we then used an ordering over individual rules of a program to select the \emph{rules} to retain during a revision or contraction operation. Given the Characterisation Theorems~\ref{thm:revpm-reve} and~\ref{thm:conpm-cone} that hold only in one direction, it is worth investigating whether this difference in granularity, subsets or rules, plays a critical role in determining revision or contraction outcomes. To do so, we will now consider program subsets as the objects of change for our ensconcement revision and contraction operators. We begin with the definition of an ensconcement over subsets of a program.

\begin{dfn}[Ensconcement over Subsets] \label{dfn:subset-ens}
Given $P \in \lpa$, a \emph{subset-ensconcement associated with} $P$ is any total preorder~$\preceq^R$ on $2^P$ that satisfies the following conditions:
\begin{enumerate}[($\preceq^R$1)]
\item For any $R \subseteq P$: $SE(\{\, R' \subseteq P \setminus R \mid R \preceq R' \,\}) \not \subset SE(R)$
\item For any $R,R' \subseteq P$: $R \preceq R' \preceq R$ iff $R \equiv_s R'$
\end{enumerate}
\end{dfn}

We define revision and contraction operators based on $\preceq^R$ as follows.

\begin{dfn}[Subset-Ensconcement Revision] \label{dfn:reveR}
Let $P \in \lpa$ and $\preceq^R$ be a subset-ensconcement associated with~$P$. A \emph{subset-ensconcement revision operator} $\reveR$ for $P$ is defined such that for any $Q \in \lpa$:
$$
P \reveR Q =
\begin{cases}
\; P + Q &\text{ if } Q  \text{ is not satisfiable,} \\
\; \{\, R \subseteq P \mid SE(cut_{\preceq^R}(Q)) \cap SE(Q) \subseteq SE(R) \,\} + Q &\text{ otherwise,}
\end{cases}
$$
where $cut_{\preceq^R}(Q) = \left\{\, R \subseteq  P \mid SE\left(\{\, R' \subseteq P \mid R \preceq R' \,\}\right) \cap SE(Q) \neq \es \,\right\}$.
\end{dfn}

\begin{dfn}[Subset-Ensconcement Contraction] \label{dfn:coneR}
Let $P \in \lpa$ and $\preceq^R$ be a subset-ensconcement associated with~$P$. A \emph{subset-ensconcement contraction operator} $\coneR$ for $P$ is defined such that for any $Q \in \lpa$:
$$
P \coneR Q =
\begin{cases}
\; P &\text{ if } \models_s Q, \\
\; \{\, R \subseteq P \mid SE(cut_{\preceq^R}^-(Q)) \cap \seqc \subseteq SE(R) \,\} &\text{ otherwise,}
\end{cases}
$$
where $cut_{\preceq^R}^-(Q) = \{\, R \subseteq  P \mid SE\left(\{\, R' \subseteq P \mid R \preceq R' \,\}\right) \cap \seqc \neq \es \,\}$.
\end{dfn}

It turns out that it does not matter whether an ensconcement over subsets of a program or only over the individual rules is used to determine a revision or contraction outcome, provided that the individual rules are ordered in the same way in both ensconcements, as stated in the following theorem.


\begin{thm} \label{thm:subset-equal-rule-based-change}
Let $P,Q \in \lpa$, $\preceq$ be an ensconcement associated with $P$, and $\preceq^R$ a subset-ensconcement associated with~$P$ such that $\{r\} \preceq^R \{r'\}$ iff $r \preceq r'$ for all $r,r' \in P$. Then $P \reve Q = P \reveR Q$ (or $P \cone Q = P \coneR Q$, alternatively).
\end{thm}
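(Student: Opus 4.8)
The plan is to reduce the theorem to a single fact about cuts, namely that the rule-based cut $\cq$ and the subset-based cut $cut_{\preceq^R}(Q)$ carry the same SE~models, i.e.\ $\bigcup cut_{\preceq^R}(Q)=\cq$; once this is available, the equality of the two revision outcomes is a short computation, and the contraction case is the verbatim dual (replace $SE(Q)$ by $\seqc$, $\cq$ by $\cqm$, $cut_{\preceq^R}(Q)$ by $cut_{\preceq^R}^-(Q)$, and the satisfiability split by the split on $\models_s Q$). First I would dispose of the trivial branch: if $Q$ is not satisfiable then $P\reve Q=P+Q=P\reveR Q$ straight from Definitions~\ref{dfn:reve} and~\ref{dfn:reveR}, so from here on assume $Q$ is satisfiable.

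The heart of the argument is to compute, for each rule $r\in P$, the program $\bigcup\{\,R'\subseteq P\mid\{r\}\preceq^R R'\,\}$ and to show it equals $\{\,r'\in P\mid r\preceq r'\,\}$. The inclusion ``$\supseteq$'' is immediate from the hypothesis $\{r\}\preceq^R\{r'\}$ iff $r\preceq r'$. For ``$\subseteq$'' I would first prove the auxiliary fact that $r''\in R'$ implies $R'\preceq^R\{r''\}$: since $r''\in R'$ and $SE(R')=\bigcap_{s\in R'}SE(s)$, we have $SE(R')\subseteq SE(\{r''\})$; if that inclusion is strict then $\{r''\}\prec^R R'$ is impossible by ($\preceq^R$1), and if it is an equality then $R'\equiv_s\{r''\}$ and $R'\preceq^R\{r''\}$ by ($\preceq^R$2), so in either case $R'\preceq^R\{r''\}$. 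Now if $\{r\}\preceq^R R'$ and $r''\in R'$, then $\{r\}\preceq^R R'\preceq^R\{r''\}$, hence $r\preceq r''$ by compatibility, which gives ``$\subseteq$''. Consequently $SE\big(\bigcup\{R'\mid\{r\}\preceq^R R'\}\big)=SE(\{\,r'\mid r\preceq r'\,\})$, so by the definitions of the two cuts $\{r\}\in cut_{\preceq^R}(Q)$ iff $r\in\cq$; in particular $\cq\subseteq\bigcup cut_{\preceq^R}(Q)$. For the reverse inclusion, take $r\in R\in cut_{\preceq^R}(Q)$: the auxiliary fact gives $R\preceq^R\{r\}$, so by transitivity $\bigcup\{R''\mid\{r\}\preceq^R R''\}\subseteq\bigcup\{R''\mid R\preceq^R R''\}$ and hence $SE\big(\bigcup\{R''\mid R\preceq^R R''\}\big)\subseteq SE(\{\,r'\mid r\preceq r'\,\})$; the former meets $SE(Q)$ because $R\in cut_{\preceq^R}(Q)$, so the latter does too and $r\in\cq$. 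This establishes $\bigcup cut_{\preceq^R}(Q)=\cq$, and therefore $SE(cut_{\preceq^R}(Q))=SE(\cq)$.

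To finish, observe that with $SE(cut_{\preceq^R}(Q))=SE(\cq)$ a rule $r$ lies in $\bigcup\{\,R\subseteq P\mid SE(cut_{\preceq^R}(Q))\cap SE(Q)\subseteq SE(R)\,\}$ exactly when some $R\ni r$ has $SE(\cq)\cap SE(Q)\subseteq SE(R)$; by $SE(R)\subseteq SE(r)$ this forces $SE(\cq)\cap SE(Q)\subseteq SE(r)$, and conversely the choice $R=\{r\}$ shows the condition on $r$ alone suffices. Hence that union is precisely $\{\,r\in P\mid SE(\cq)\cap SE(Q)\subseteq SE(r)\,\}$, and adjoining $Q$ yields $P\reveR Q=P\reve Q$; replacing $SE(Q)$ by $\seqc$ throughout gives $P\cone Q=P\coneR Q$ in the same way.

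The step I expect to be the main obstacle is the auxiliary fact $r''\in R'\Rightarrow R'\preceq^R\{r''\}$ together with the up-set computation around it: this is the only point where ($\preceq^R$1) does real work, and it has to be invoked in the strengthened form analogous to our Condition~($\preceq$\ref{cond:ens-1}) (prohibiting strict implication among distinct subsets at or above a given ensconcement level) rather than a direct transcription of the classical Condition~($\preccurlyeq$1) -- it is exactly this strengthening that renders the granularity of the ensconcement, rules versus subsets, immaterial. The remaining ingredients -- the trivial branch, the final book-keeping, and the dualisation to contraction -- are mechanical.
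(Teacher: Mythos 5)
Your proposal is correct and follows essentially the same route as the paper's proof: your auxiliary fact ($r''\in R'$ implies $R'\preceq^R\{r''\}$) is exactly the paper's Lemma~\ref{lem:subset-not-above-rule}, from which the paper likewise identifies the rule-based and subset-based cuts and then closes with the observation that $SE(R)\subseteq SE(r)$ for $r\in R$. You merely spell out in more detail the steps the paper leaves implicit (the up-set computation and the equality $\bigcup cut_{\preceq^R}(Q)=\cq$).
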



\subsection{Relating Revision Operators to Contraction Operators}
We now formalise the connection between partial meet revision and partial meet contraction with the help of the Levi and Harper identities as given in Definition~\ref{dfn:levi-harper}.

\begin{prop} \label{prop:revpm-via-conpm}
Let $P \in \lpa$, $\gamma$ be a selection function for $P$, and $*$ an operator for $P$ such that for any $Q \in \lpa$: $P * Q = (P \conpm \overline{Q}) + Q$. Then $P * Q = P \revpm Q$.
\end{prop}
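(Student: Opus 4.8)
The plan is to unfold the definitions of $\revpm$ (Definition~\ref{dfn:revpm}) and $\conpm$ (Definition~\ref{dfn:conpm}) on both sides of the claimed equality and to reduce everything to the single complementation identity $\overline{SE(\overline{Q})} = SE(Q)$ built into the definition of $\overline{Q}$.

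The first step is to record the observation that the remainder sets of $P$ with respect to $\overline{Q}$ are exactly the compatible sets of $P$ with respect to $Q$, i.e.\ $\mathbb{P}_{\overline{Q}}^- = \pq$. This is immediate: by definition $SE(\overline{Q}) = \seqc = \se \setminus SE(Q)$, so $\overline{SE(\overline{Q})} = \se \setminus SE(\overline{Q}) = SE(Q)$, and substituting this into the defining condition of $\mathbb{P}_{\overline{Q}}^-$ (Definition~\ref{dfn:remainder-set}) gives verbatim the defining condition of $\pq$ (Definition~\ref{dfn:compatible-set}). Since $\conpm$ and $\revpm$ are determined by the same selection function $\gamma$ for $P$, it follows that $\gamma(\mathbb{P}_{\overline{Q}}^-) = \gpq$ and hence $\bigcap \gamma(\mathbb{P}_{\overline{Q}}^-) = \bigcap \gpq$.

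The second step is a case split on the satisfiability of $Q$. If $Q$ is not satisfiable, then $SE(Q) = \es$, so $SE(\overline{Q}) = \se$ and therefore $\models_s \overline{Q}$; by Definition~\ref{dfn:conpm} we obtain $P \conpm \overline{Q} = P$, whence $P * Q = (P \conpm \overline{Q}) + Q = P + Q = P \revpm Q$ by the first branch of Definition~\ref{dfn:revpm}. If $Q$ is satisfiable, then $SE(Q) \neq \es$, so $SE(\overline{Q}) = \seqc \subsetneq \se$ and therefore $\not\models_s \overline{Q}$; by Definition~\ref{dfn:conpm} we obtain $P \conpm \overline{Q} = \bigcap \gamma(\mathbb{P}_{\overline{Q}}^-) = \bigcap \gpq$ using the first step, whence $P * Q = (P \conpm \overline{Q}) + Q = \bigcap \gpq + Q = P \revpm Q$ by the second branch of Definition~\ref{dfn:revpm}.

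I do not expect a genuine obstacle here: once the identity $\overline{SE(\overline{Q})} = SE(Q)$ is noted, the rest is bookkeeping. The only point deserving attention is that the case distinctions of the two definitions must be checked to align, that is, that non-satisfiability of $Q$ on the revision side corresponds precisely to $\models_s \overline{Q}$ on the contraction side, and satisfiability of $Q$ to $\not\models_s \overline{Q}$; this is exactly what the complementation guarantees. It is also worth noting in passing that $P \conpm \overline{Q}$ depends on $\overline{Q}$ only through $SE(\overline{Q})$, so the argument is insensitive to the particular program chosen to represent $\seqc$.
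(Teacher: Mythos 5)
Your proof is correct and follows essentially the same route as the paper's: the same identification $\mathbb{P}_{\overline{Q}}^- = \pq$ via $\overline{SE(\overline{Q})} = SE(Q)$, and the same case split (the paper phrases it as $\seqc = \se$ versus $\seqc \neq \se$, which is exactly your satisfiability dichotomy for $Q$). No gaps.
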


\begin{prop} \label{prop:conpm-via-revpm}
Let $P \in \lpa$, $\gamma$ be a selection function for $P$, and $\dotminus$ an operator for $P$ such that for any $Q \in \lpa$: $P \dotminus Q = P \cap (P \revpm \overline{Q})$. Then $P \dotminus Q = P \conpm Q$.
\end{prop}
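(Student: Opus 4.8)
The plan is to unwind both sides of the identity directly, using the same satisfiability case split employed throughout the paper, with $\revpm$ and $\conpm$ both taken to be the operators determined by the fixed selection function $\gamma$. First I would dispose of the case $\models_s Q$: here $SE(\overline{Q}) = \seqc = \es$, so $\overline{Q}$ is not satisfiable, and Definition~\ref{dfn:revpm} gives $P \revpm \overline{Q} = P + \overline{Q} = P \cup \overline{Q}$; intersecting with $P$ yields $P \cap (P \revpm \overline{Q}) = P$, which is exactly $P \conpm Q$ by Definition~\ref{dfn:conpm}.

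For the main case $\not\models_s Q$, the key observation is that the compatible sets of $P$ with respect to $\overline{Q}$ are precisely the remainder sets of $P$ with respect to $Q$: since $SE(\overline{Q}) = \seqc$, Definition~\ref{dfn:compatible-set} applied to $\overline{Q}$ becomes verbatim Definition~\ref{dfn:remainder-set} applied to $Q$, so $\mathbb{P}_{\overline{Q}} = \pqm$. Because $\overline{Q}$ is then satisfiable, Definition~\ref{dfn:revpm} gives $P \revpm \overline{Q} = \bigcap \gamma(\pqm) \cup \overline{Q}$. Since $P$ is finite and $\not\models_s Q$, the empty program witnesses $\pqm \neq \es$, hence $\gamma(\pqm) \neq \es$ and $\bigcap \gamma(\pqm) \subseteq P$; therefore $P \cap (P \revpm \overline{Q}) = \bigcap \gamma(\pqm) \cup (P \cap \overline{Q})$.

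It remains to show $P \cap \overline{Q} \subseteq \bigcap \gamma(\pqm)$, and this is the only step that calls for an argument rather than bookkeeping. Fix $r \in P \cap \overline{Q}$; then $\seqc = SE(\overline{Q}) \subseteq SE(r)$. For an arbitrary $R \in \pqm$, if $r \notin R$ then maximality of $R$ forces $SE(R \cup \{r\}) \cap \seqc = \es$; but $SE(R \cup \{r\}) \cap \seqc = SE(R) \cap SE(r) \cap \seqc = SE(R) \cap \seqc$, which is nonempty since $R$ is a remainder set --- a contradiction. Hence $r$ lies in every remainder set, so $r \in \bigcap \pqm \subseteq \bigcap \gamma(\pqm)$. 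This yields $P \cap (P \revpm \overline{Q}) = \bigcap \gamma(\pqm) = P \conpm Q$, finishing the proof. I expect this last inclusion to be the only real content; an alternative but longer route would be to invoke the representation theorems (Theorems~\ref{thm:rev-pm-bb} and~\ref{thm:con-pm-bb}) together with the postulate-level interdefinability of partial meet base revision and contraction, rather than computing directly.
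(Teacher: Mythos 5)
Your proof is correct and follows essentially the same route as the paper's: the same case split on $\models_s Q$, the identification $\mathbb{P}_{\overline{Q}} = \pqm$, and the same maximality argument showing that every rule of $P \cap \overline{Q}$ (whose SE~models contain $\seqc$) must belong to every (selected) remainder set. Your version is marginally more careful in noting $\pqm \neq \es$ when $\not\models_s Q$, but the substance is identical.
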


The characterisation via Levi and Harper identities also holds for ensconcement revision and ensconcement contraction.

\begin{prop} \label{prop:reve-via-cone}
Let $P \in \lpa$, $\preceq$ be an ensconcement associated with $P$, and $*$ an operator for $P$ such that for any $Q \in \lpa$: $P * Q = (P \cone \overline{Q}) + Q$. Then $P * Q = P \reve Q$.
\end{prop}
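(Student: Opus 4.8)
The plan is to prove the identity by unfolding $P \cone \overline{Q}$ from Definition~\ref{dfn:cone} and comparing with Definition~\ref{dfn:reve}, splitting on whether $Q$ is satisfiable. First I would note that $P \cone \overline{Q}$ depends on $\overline{Q}$ only through $SE(\overline{Q}) = \seqc$ — both the guard $\models_s \overline{Q}$ and the set $cut^-_\preceq(\overline{Q})$, and hence the whole output, are determined by $SE(\overline{Q})$ — so $*$ is well defined irrespective of which program $\overline{Q}$ realising $\seqc$ is chosen. In the case where $Q$ is not satisfiable we have $SE(Q) = \es$, hence $SE(\overline{Q}) = \se$, i.e.\ $\not \models_s \overline{Q}$ fails and $\models_s \overline{Q}$ holds; then $P \cone \overline{Q} = P$ by Definition~\ref{dfn:cone}, giving $P * Q = P + Q$, which is exactly $P \reve Q$ in this case.

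For the case where $Q$ is satisfiable, $SE(Q) \neq \es$ forces $SE(\overline{Q}) = \seqc \neq \se$, so $\not \models_s \overline{Q}$ and we are in the second branch of Definition~\ref{dfn:cone}:
$$
P \cone \overline{Q} = \{\, r \in P \mid SE(cut^-_\preceq(\overline{Q})) \cap \overline{SE(\overline{Q})} \subseteq SE(r) \,\}.
$$
The crucial observation is the double complement $\overline{SE(\overline{Q})} = \overline{\seqc} = SE(Q)$. Feeding this into the defining condition of $cut^-_\preceq(\overline{Q})$ — namely $SE(\{\, r' \in P \mid r \preceq r' \,\}) \cap \overline{SE(\overline{Q})} \neq \es$ — turns it verbatim into the defining condition of $\cq$, so $cut^-_\preceq(\overline{Q}) = \cq$. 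Substituting both identities yields $P \cone \overline{Q} = \{\, r \in P \mid SE(\cq) \cap SE(Q) \subseteq SE(r) \,\}$, whence $P * Q = (P \cone \overline{Q}) + Q = P \reve Q$ by Definition~\ref{dfn:reve}.

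There is no substantial obstacle here; the argument is pure definition chasing. The only points requiring care are the double complement $\overline{SE(\overline{Q})} = SE(Q)$, which is precisely what collapses $cut^-_\preceq(\overline{Q})$ to $\cq$, and the fact that the guard $\models_s \overline{Q}$ of ensconcement contraction corresponds exactly to ``$Q$ is not satisfiable'', the guard of ensconcement revision. This mirrors Proposition~\ref{prop:revpm-via-conpm} for the partial meet operators, where the analogous bridge was a relation between the remainder sets of $\overline{Q}$ and the compatible sets of $Q$; for ensconcements the bridge is the cleaner identity $cut^-_\preceq(\overline{Q}) = \cq$.
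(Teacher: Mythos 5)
Your proof is correct and follows essentially the same route as the paper's: a case split on the satisfiability of $Q$ (equivalently, on whether $\seqc = \se$), with the double complement $\overline{SE(\overline{Q})} = SE(Q)$ collapsing $cut^-_\preceq(\overline{Q})$ to $\cq$ in the main case. The added remark on well-definedness of $*$ with respect to the choice of $\overline{Q}$ is a correct and welcome refinement, but the argument is otherwise the paper's own.
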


\begin{prop} \label{prop:cone-via-reve}
Let $P \in \lpa$, $\preceq$ be an ensconcement associated with $P$, and $\dotminus$ an operator for $P$ such that for any $Q \in \lpa$: $P \dotminus Q = P \cap (P \reve \overline{Q})$. Then $P \dotminus Q = P \cone Q$.
\end{prop}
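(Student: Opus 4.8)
The plan is to compute the right-hand side of the Harper identity $P \dotminus Q = P \cap (P \reve \overline{Q})$ explicitly, using Definition~\ref{dfn:reve}, and to verify it coincides with $P \cone Q$ as given by Definition~\ref{dfn:cone}. Everything reduces to the observation that revising $P$ by $\overline{Q}$ is governed by exactly the cut $\cqm$ that drives the contraction of $P$ by $Q$. The argument splits on whether $\models_s Q$.

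If $\models_s Q$, then $\seqc = \es$, so any program $\overline{Q}$ with $SE(\overline{Q}) = \seqc$ is unsatisfiable and the first clause of Definition~\ref{dfn:reve} gives $P \reve \overline{Q} = P + \overline{Q} = P \cup \overline{Q}$. Hence $P \dotminus Q = P \cap (P \cup \overline{Q}) = P$, which is precisely $P \cone Q$ by the first clause of Definition~\ref{dfn:cone}.

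Now suppose $\not\models_s Q$. Then $SE(\overline{Q}) = \seqc \neq \es$, so $\overline{Q}$ is satisfiable and the second clause of Definition~\ref{dfn:reve} applies. The first step here is to note that $cut_\preceq(\overline{Q}) = \cqm$: unfolding Definition~\ref{dfn:cut} with $SE(\overline{Q}) = \seqc$ produces exactly the set $\{\, r \in P \mid SE(\{\, r' \in P \mid r \preceq r' \,\}) \cap \seqc \neq \es \,\}$ that was used to define $\cqm$. Substituting this into Definition~\ref{dfn:reve} and comparing with Definition~\ref{dfn:cone} yields
\[
P \reve \overline{Q} = \{\, r \in P \mid SE(\cqm) \cap \seqc \subseteq SE(r) \,\} \cup \overline{Q} = (P \cone Q) \cup \overline{Q}.
\]
Intersecting with $P$ and using $P \cone Q \subseteq P$ (immediate from Definition~\ref{dfn:cone}) gives $P \dotminus Q = (P \cone Q) \cup (P \cap \overline{Q})$.

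The step that needs care — the main obstacle — is disposing of the residual term $P \cap \overline{Q}$, which a priori could depend on the particular program $\overline{Q}$ chosen to represent $\seqc$. I would show $P \cap \overline{Q} \subseteq P \cone Q$: if $r \in \overline{Q}$ then $\seqc = SE(\overline{Q}) = \bigcap_{s \in \overline{Q}} SE(s) \subseteq SE(r)$, so in particular $SE(\cqm) \cap \seqc \subseteq SE(r)$, placing any such $r \in P$ inside $P \cone Q$. Consequently $(P \cone Q) \cup (P \cap \overline{Q}) = P \cone Q$, which completes the identity and also confirms that the outcome does not depend on the choice of $\overline{Q}$.
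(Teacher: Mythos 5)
Your proposal is correct and follows essentially the same route as the paper's proof: split on whether $\models_s Q$, identify $cut_\preceq(\overline{Q})$ with $\cqm$ in the main case, and then absorb the residual rules of $\overline{Q}$ by noting that $SE(\overline{Q}) = \seqc \subseteq SE(r)$ for every $r \in \overline{Q}$. The only (cosmetic) difference is that you argue the absorption step directly as an inclusion $P \cap \overline{Q} \subseteq P \cone Q$, where the paper phrases the same observation as a proof by contradiction.
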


\section{Localised Belief Change} \label{sec:localisedbc}
In Sections~\ref{sec:pm} and~\ref{sec:ens}, we introduced two new sets of belief change operators for logic programs. While the definitions of our operators are based on classic declarative constructions, such formulations may not be optimal for practical implementations. In particular, the formation of a set of compatible sets to conduct a partial meet revision or contraction requires that all possible combinations of all rules in a program are evaluated with respect to their sets of SE~models. When dealing with logic programs that contain a large number of rules, where only a small number of them are actually affected by the change operation, this procedure entails unreasonable costs. In this section, we present an algorithm to minimise these costs. We begin by identifying the subsets of a program, called \emph{modules}, relevant to another program.

\begin{dfn}[Module] \label{dfn:mod}
Let $P \in \lpa$ and $a \in \mathcal{A}$. For any rule $r \in P$ with $a \in At(r)$, we recursively construct
$M(P)^r_i\oa$ as 
$$
\{r\} \cup \{\, r' \in P \mid At(r') \cap \left(At(r) \cup At(M(P)^r_{i-1}\oa)\right) \setminus \{a\}  \neq \es \,\}
$$
for $i > 0$ and $M(P)^r_0\oa = \es$.

Since $P$ is finite and $M(P)^r_i\oa$ is monotonic with respect to $i$, the sequence $\bigcup_{i=0}^\infty M(P)^r_i\oa$ will reach a fixpoint. We denote the fixpoint by $M(P)^r\oa$ and call it the \emph{module of $P$ related to $r$ including $a$} (or the \emph{$r$-module including $a$}, if~$P$ is clear from the context).
\end{dfn}

\begin{ex} \label{ex:mod}
Let $r_1$: $a.$, $r_2$: $b \leftarrow a.$, $r_3$: $c \leftarrow not\, b.$, and $P = \{r_1, r_2, r_3\}$. The modules that can be constructed from $P$ are: $M(P)^{r_1}\oa = \{r_1\}, M(P)^{r_2}\oa = \{r_2,r_3\}, M(P)^{r_2}|_b = \{r_1,r_2\}, M(P)^{r_3}|_b = \{r_3\}$, and $M(P)^{r_3}|_c = \{r_1,r_2, r_3\}$.
\qedex
\end{ex}

Starting with a given atom $a$ and a given rule $r$ from $P$, the recursive definition first finds all rules in $P$ that share atoms with $r$ except for $a$. Then it finds all rules in $P$ that share atoms with $r$ or any of the rules found in the first step except for $a$, and so on. It does not matter whether atoms appear in the head or the body of a rule, or whether they occur with or without default negation. The resulting module is the collection of rules in $P$ that are related to $r$ through shared atoms. The reason for excluding $a$ will become clear after the following definition of a set of relevant modules.

\begin{dfn}[Relevant Module] \label{dfn:relmods}
Let $P \in \lpa$. Given an atom $a \in \mathcal{A}$, we define the \emph{set of all modules of $P$ including $a$} as:
$$
\mathcal{M}(P)\oa = \{\, M(P)^r\oa \mid r \in P \text{ and } a \in At(r) \,\}.
$$
Given $Q \in \lpa$, we define the \emph{set of all modules of $P$ relevant to $Q$} as:
$$
\mpq = \{\, M(P)^r\oa \mid r \in P \text{ and } a \in At(r) \cap At(Q) \,\}.
$$
\end{dfn}

Essentially, the definition of a set of modules extracts those rules from a program that may be affected during a revision or contraction by another program. It thus aims for the same goal as the language-splitting technique in propositional logic \cite{parikh1999beliefs}, which splits a knowledge base into several partitions either relevant or irrelevant to a belief change. However, a distinct feature in the previous definitions is the construction of a module based on \emph{each} rule in which a certain atom occurs. This feature allows us to get a closer look at which rules may conflict with some given information. Consider the program $\{\, a \leftarrow b.,\, \bot \leftarrow b. \,\}$. If we were to add the information that \lq\lq $b$ holds\rq\rq\ to this program, it would conflict with the latter rule but not with the first one. By creating a module for each occurrence of $b$, we split the program into two modules (one for each rule) and can assess the compatibility of each module with the new information separately. This also separates our approach from the method to compute \emph{compartments} \cite{hansson2002local,wassermann2000resource}. That method assumes a graph representation of a belief base, where each sentence of the belief base is a node and edges connect sentences that share at least one atom. The parts of a belief base relevant to a given sentence $\phi$ for a change operation are the sentences that can be reached from $\phi$. Thus, it does not distinguish between occurrences of $\phi$ as in our method. Furthermore, our method constructs modules for each individual atom occurring in $Q$ and thus ensures that we are dealing with minimal units of $P$ in a change operation. Obviously, a module may not be unique to a certain rule or a certain given atom so that modules may overlap or coincide.


We say that a set of rules $R$ \emph{conflicts} with some program $Q$ if $SE(R) \cap SE(Q) = \emptyset$. All rules of $P$ that conflict with~$Q$ are included in some module or combination of modules from $\mpq$.

\begin{prop} \label{prop:conflict}
Let $P,Q \in \lpa$ and $SE(P) \neq \es \neq SE(Q)$. For any $R \subseteq P$, if $SE(R) \cap SE(Q) = \emptyset$ and for all $R' \subset R$ it holds that $SE(R') \cap SE(Q) \neq \emptyset$, then there exists $\mathbb{M} \in 2^{\mpq}$ such that $R \subseteq \bigcup \mathbb{M}$.
\end{prop}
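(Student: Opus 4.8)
The plan is to prove the stronger statement $R \subseteq \bigcup \mpq$, so that the witness $\mathbb{M} = \mpq$ works. Two preliminary facts are used throughout. First, since $R' \subseteq P$ implies $P \models_s R'$ and hence $SE(P) \subseteq SE(R')$, the hypothesis $SE(P) \neq \es$ gives $SE(S) \neq \es$ for every $S \subseteq P$. Second, whether an SE interpretation $(X,Y)$ is an SE model of a rule $r$ depends only on $X \cap At(r)$ and $Y \cap At(r)$; consequently, given a set of atoms $S_1 \subseteq \mathcal{A}$ and SE interpretations $(X_1,Y_1),(X_2,Y_2)$, the pair $(X,Y) = ((X_1 \cap S_1)\cup(X_2 \setminus S_1),\ (Y_1\cap S_1)\cup(Y_2\setminus S_1))$ is an SE interpretation that satisfies (in the SE sense) exactly the rules with $At(r)\subseteq S_1$ that $(X_1,Y_1)$ does, and exactly the rules with $At(r)\cap S_1 = \es$ that $(X_2,Y_2)$ does.

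Fix an arbitrary $s \in R$ and let $C$ be the connected component of $s$ in the graph on $R$ in which two rules are adjacent iff they share an atom. The first key step is to show $At(C) \cap At(Q) \neq \es$. Assume not. As $C$ is a connected component, $At(C)$ is also disjoint from $At(R \setminus C)$, and $R \setminus C \subsetneq R$ since $s \in C$; so minimality of $R$ gives some $(X_2,Y_2) \in SE(R \setminus C)\cap SE(Q)$, and we pick any $(X_1,Y_1) \in SE(C)$. Combining them along $S_1 = At(C)$ as above yields an SE interpretation in $SE(C)\cap SE(R\setminus C)\cap SE(Q) = SE(R)\cap SE(Q)$, contradicting $SE(R)\cap SE(Q)=\es$. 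Hence $C$ contains a rule mentioning an atom of $Q$.

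The second key step places $s$ inside a module. Since $C$ is connected and contains a rule meeting $At(Q)$, choose a path $s = s_0,s_1,\dotsc,s_k$ inside $C$ of minimum total length with $At(s_k)\cap At(Q)\neq\es$, and fix $a \in At(s_k)\cap At(Q)$ (the case $k=0$, i.e. $At(s)\cap At(Q)\neq\es$, is immediate since a module always contains its generating rule). Minimality of $k$ forces $At(s_j)\cap At(Q)=\es$ for all $j<k$, so $a \notin At(s_j)$ for $j<k$; therefore for each $1\le j\le k$ the atom witnessing the edge $\{s_{j-1},s_j\}$ is different from $a$ — for $j<k$ both endpoints avoid $a$, and for $j=k$ the shared atom lies in $At(s_{k-1})$, which avoids $a$. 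Unwinding the recursion in Definition~\ref{dfn:mod} shows $M(P)^{s_k}\oa$ is precisely the set of rules of $P$ reachable from $s_k$ by steps that share an atom other than $a$; hence $s \in M(P)^{s_k}\oa$, and since $s_k \in P$ and $a \in At(s_k)\cap At(Q)$ we have $M(P)^{s_k}\oa \in \mpq$. As $s$ was arbitrary, $R \subseteq \bigcup \mpq$, and $\mathbb{M} = \mpq$ completes the proof.

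The main obstacle is the interplay between the excluded atom $a$ in the module construction and the (lack of) connectivity of $R$: a minimal conflicting set need not be connected at all — e.g. $\{\bot \leftarrow a.,\ \bot \leftarrow b.\}$ against $\{a;b.\}$ — and, even within a connected component $C$, a careless path from $s$ to a rule mentioning a $Q$-atom could be forced to traverse that very atom, which would fail to keep $s$ in the module generated by that rule. The shortest-path choice in the second step is exactly what rules this out; the "reachability" reading of $M(P)^r\oa$ extracted from Definition~\ref{dfn:mod} is a routine induction on the recursion index that I would fold into the argument rather than state separately.
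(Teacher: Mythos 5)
Your proof is correct, and its overall strategy matches the paper's: locate rules of $R$ that mention atoms of $Q$, take the modules they generate, and argue that the recursive closure of Definition~\ref{dfn:mod} sweeps up the rest of $R$. The difference is one of rigour rather than of route. The paper's own proof is three sentences long: it asserts that some atom of $At(Q)$ occurs in a rule of $R$ and then claims that membership of the remaining rules of $R$ in the union of the resulting modules ``follows from Definition~\ref{dfn:mod}''; notably, it never visibly uses the minimality hypothesis on $R$. You supply the two arguments that make this actually go through. First, your splicing step --- combining an SE~model of a connected component $C$ of $R$ with an SE~model of $(R\setminus C)+Q$ along $At(C)$ --- shows that every component of $R$ must meet $At(Q)$; this is precisely where minimality of $R$ and $SE(P)\neq\es$ enter, and it correctly handles the fact that a minimal conflicting set need not be connected. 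Second, your shortest-path argument addresses the subtlety that $M(P)^{s_k}\oa$ excludes chains passing through the atom $a$ itself, which the paper's bare appeal to the definition glosses over. Both additions are genuine improvements over the published argument; the only deferred detail is the routine induction identifying $M(P)^r\oa$ with reachability from $r$ via shared atoms other than $a$, which is exactly as easy as you say.
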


\begin{cor} \label{cor:conflict}
Let $P,Q \in \lpa$ and $SE(P) \neq \es$. Then $SE(P) \cap SE(Q) = \emptyset$ if and only if $SE \left(\bigcup \mpq \right) \cap SE(Q) = \emptyset$.
\end{cor}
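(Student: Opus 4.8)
The plan is to obtain Corollary~\ref{cor:conflict} as an essentially immediate consequence of Proposition~\ref{prop:conflict} together with the antitonicity of $\models_s$ with respect to the subset relation on programs. The one structural fact I would record first is that $\bigcup \mpq$, being a union of modules $M(P)^r\oa \subseteq P$ (see Definition~\ref{dfn:relmods}), satisfies $\bigcup \mpq \subseteq P$, and hence $SE(P) \subseteq SE(\bigcup \mpq)$ by antitonicity.

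For the ``if'' direction I would note that $SE(\bigcup \mpq) \cap SE(Q) = \es$ immediately forces $SE(P) \cap SE(Q) \subseteq SE(\bigcup \mpq) \cap SE(Q) = \es$, using only the inclusion $SE(P) \subseteq SE(\bigcup \mpq)$ just recorded; this half does not even use the hypothesis $SE(P) \neq \es$.

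For the ``only if'' direction, suppose $SE(P) \cap SE(Q) = \es$. If $SE(Q) = \es$ the conclusion is trivial, so I would assume $SE(Q) \neq \es$, which, together with the standing hypothesis $SE(P) \neq \es$, places us in the setting of Proposition~\ref{prop:conflict}. Since $P$ is finite and $SE(P) \cap SE(Q) = \es$, the family $\{\, R \subseteq P \mid SE(R) \cap SE(Q) = \es \,\}$ is nonempty (it contains $P$) and therefore has a subset-minimal element $R$; moreover $R \neq \es$, since $SE(\es) \cap SE(Q) = SE(Q) \neq \es$. By minimality, every proper subset $R' \subset R$ satisfies $SE(R') \cap SE(Q) \neq \es$, so $R$ meets exactly the antecedent of Proposition~\ref{prop:conflict}, which yields some $\mathbb{M} \in 2^{\mpq}$ with $R \subseteq \bigcup \mathbb{M} \subseteq \bigcup \mpq$. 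Antitonicity then gives $SE(\bigcup \mpq) \subseteq SE(R)$, so $SE(\bigcup \mpq) \cap SE(Q) \subseteq SE(R) \cap SE(Q) = \es$, which completes the argument.

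I do not anticipate a genuine obstacle here, since all the combinatorial work is carried by Proposition~\ref{prop:conflict}; the only points needing a little care are (i) peeling off the degenerate case $SE(Q) = \es$ so that the hypothesis $SE(Q) \neq \es$ of Proposition~\ref{prop:conflict} is available, and (ii) confirming that a subset-minimal conflicting $R$ exists (by finiteness of $P$) and is nonempty, so that it genuinely matches the antecedent of that proposition.
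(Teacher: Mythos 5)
Your proposal is correct and follows essentially the same route as the paper: the ``if'' direction via the inclusion $SE(P) \subseteq SE\left(\bigcup \mpq\right)$, and the ``only if'' direction by reducing to Proposition~\ref{prop:conflict} when $Q$ is satisfiable and noting the unsatisfiable case is trivial. You merely make explicit the step the paper leaves implicit, namely extracting a subset-minimal conflicting $R \subseteq P$ so that the antecedent of Proposition~\ref{prop:conflict} applies, and that filling-in is accurate.
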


\begin{algorithm}
\KwIn{a set $\mathcal{M}$ of modules, an operator $\circ$, a program $Q$}
\KwOut{the set $\mathcal{M}$ of changed modules}
$n \gets1$\;
\While{$n \leq \lvert \mathcal{M} \rvert$}{
	\ForEach{$\mathbb{M} \subseteq \mathcal{M}$ such that $\lvert \mathbb{M} \rvert = n$}{ \label{ln:comb}
  		\uIf{$\circ$ is a revision operator and $SE(\bigcup \mathbb{M}) \cap SE(Q) = \es$}{
      		\ForEach{$M \in \mathbb{M}$} {
        replace $M$ with $(\bigcup \mathbb{M} \circ Q)\setminus Q$ in $\mathcal{M}$\;
			}
		}
		\uElseIf{$\circ$ is a contraction operator and $SE(\bigcup \mathbb{M}) \cap \seqc = \es$}{
      		\ForEach{$M \in \mathbb{M}$} {
        replace $M$ with $\bigcup \mathbb{M} \circ Q$ in $\mathcal{M}$\;
			}
		}
	}
  $n \gets n + 1$\;
}
\Return{$\mathcal{M}$}\;
\caption{\textsc{ModChange}}
\label{alg:modrev}
\end{algorithm}

We are now ready to introduce an optimisation algorithm for logic program revision and contraction based on modules. Algorithm~\ref{alg:modrev} resolves potential conflicts for all possible combinations of modules by applying revision or contraction on a modular level. It performs a bottom-up construction by first taking all 1-combinations (singleton sets of modules) of~$\mathcal{M}$ and substituting a module with its changed version if they are not the same. It then takes all 2-combinations of $\mathcal{M}$, which may now contain some changed modules, and replaces each module of the combination with the changed version of the combination if required. Replacing \emph{each} module of a combination with the outcome guarantees that the algorithm considers all possible combinations. The algorithm terminates after handling the combination of all modules in $\mathcal{M}$. The algorithm performs $\lvert \mathcal{M} \rvert^{{\lvert \mathcal{M} \rvert}/2}$ operations in the worst case, so its complexity is exponential. Since the formation of $\pq$ or $\pqm$ requires $\lvert 2^P \rvert^{\lvert 2^P \rvert/2}$ operations in the worst case, the algorithm performs better whenever $\lvert \mathcal{M} \rvert$ is less than $\lvert 2^P \rvert$. 

The next theorem states that the algorithm \textsc{ModChange} reduces a partial meet revision or contraction operation on a logic program to the revision or contraction operation on the relevant subsets of that program, given a suitable selection function~$\gamma$. In the following, let $P \setminus \mpq = \{\, r \in P \mid \text{for all } M \in \mpq: r \not \in M\,\}$ and $\mpq^\circ$ denote the output of Algorithm~\ref{alg:modrev} for the inputs $\mpq$, $\circ \in \{\revpm,\conpm,\reve,\cone\}$, and~$Q$.

\begin{thm} \label{thm:alg-same-as-op}
For any $P,Q \in \lpa$, there exists a selection function $\gamma$ for $P$ such that $P \revpm Q = P \setminus \mpq + \bigcup \mpq^{\revpm} + Q$ (or $P \conpm Q = P \setminus \mpq + \bigcup \mpq^{\conpm}$, respectively).
\end{thm}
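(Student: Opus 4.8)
The plan is to show that the right-hand side $P \setminus \mpq + \bigcup \mpq^{\revpm} + Q$ is itself a legitimate partial meet revision outcome, i.e.\ that it has the form $\bigcap \gpq + Q$ for some selection function $\gamma$ for $P$, whenever $Q$ is satisfiable (the unsatisfiable case being trivial since then both sides equal $P + Q$). First I would record the structural facts about modules: by Corollary~\ref{cor:conflict}, $SE(P) \cap SE(Q) = \es$ iff $SE(\bigcup \mpq) \cap SE(Q) = \es$, and more importantly, by Proposition~\ref{prop:conflict} every \emph{minimal} conflicting subset of $P$ is contained in $\bigcup \mathbb{M}$ for some $\mathbb{M} \in 2^{\mpq}$. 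The key consequence I want to extract is that the rules in $P \setminus \mpq$ never participate in any conflict with $Q$ or with the changed modules, so they survive untouched in \emph{every} compatible set; formally, for every $R \in \pq$ we have $P \setminus \mpq \subseteq R$. This lets me factor the problem: a compatible set of $P$ w.r.t.\ $Q$ is exactly $(P \setminus \mpq)$ together with a maximal subset of $\bigcup \mpq$ consistent with $Q$.

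The second and central step is to analyse what Algorithm~\ref{alg:modrev} actually computes. Running \textsc{ModChange} on inputs $\mpq$, $\revpm$, $Q$ performs, for each combination $\mathbb{M}$ of modules processed in order of increasing size, a local partial meet revision $\bigcup\mathbb{M} \revpm Q$ whenever $\bigcup\mathbb{M}$ conflicts with $Q$, and writes the result (minus $Q$) back into every module of $\mathbb{M}$. I would argue by induction on the number of combinations processed that after the algorithm finishes, $\bigcup \mpq^{\revpm}$ is a maximal subset of $\bigcup \mpq$ that is consistent with $Q$ --- consistency holds because every conflicting combination has been resolved by a local partial meet revision (which by definition produces something consistent with $Q$), and Corollary~\ref{cor:conflict} applied to the resolved modules guarantees no residual conflict survives; maximality holds because each local operation only discards rules that lie in some minimal conflicting set, and by Proposition~\ref{prop:conflict} every rule that \emph{must} be discarded lies in such a set within some combination that the algorithm examines. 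Combining this with the factoring from the first step, $P \setminus \mpq + \bigcup \mpq^{\revpm}$ is a maximal subset of $P$ consistent with $Q$, hence an element of $\pq$.

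Finally I would close the loop by invoking the representation theorem, Theorem~\ref{thm:rev-pm-bb}: since $P \setminus \mpq + \bigcup \mpq^{\revpm} + Q$ equals $R + Q$ for some $R \in \pq$, it satisfies ($*$1b)--($*$5b) when viewed as the output of an operator on $P$, and therefore by Theorem~\ref{thm:rev-pm-bb} there is a selection function $\gamma$ with $P \revpm Q = P \setminus \mpq + \bigcup \mpq^{\revpm} + Q$ --- concretely, $\gamma(\pq)$ can be taken to be the singleton $\{\,P \setminus \mpq + \bigcup \mpq^{\revpm}\,\}$, provided this set genuinely belongs to $\pq$, which is exactly what the previous step established. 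The contraction case is entirely dual, using remainder sets $\pqm$, $\seqc$ in place of compatible sets, Lemma on $\cqm$-type facts where needed, and Theorem~\ref{thm:con-pm-bb} for the final representation step.

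The main obstacle I anticipate is the induction in the second step: proving that the order-of-increasing-size, write-back-to-every-module bookkeeping in \textsc{ModChange} really does explore \emph{all} minimal conflicting subsets of $\bigcup\mpq$ and never over-deletes. The subtlety is that after a low-level combination is resolved, the modules it touched are mutated, so a later combination operates on already-changed modules; I need to verify that this does not cause the algorithm either to miss a conflict (because Proposition~\ref{prop:conflict} still applies to the union of the current modules, which only shrank) or to delete a rule that some larger-$SE$ choice would have kept (which is where the maximality clause of the local $\revpm$ definition, together with the fact that deletions are confined to minimal conflicting sets, must be used carefully). Making this monotonicity-and-coverage argument precise --- essentially showing $\bigcup\mpq^{\revpm}$ is independent of the processing order and equals a maximal $Q$-consistent subset of $\bigcup\mpq$ --- is the technical heart of the proof.
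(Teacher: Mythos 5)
Your overall strategy differs from the paper's: you try to exhibit $P \setminus \mpq + \bigcup \mpq^{\revpm}$ as a single element of $\pq$ and then take $\gamma$ to be the singleton selection function picking exactly that compatible set, whereas the paper compares the set $Z$ of rules eliminated globally by $\revpm$ with the set $Z'$ eliminated by \textsc{ModChange} and shows $Z = Z'$ under matched choices of selection functions (full meet, $\gamma(M)=M$, for the inclusion $Z' \subseteq Z$; a run in which the only effective local revision is the one on $\bigcup\mpq$ itself, with $\gamma$ chosen so that $R \cap \bigcup\mpq \in \gamma(\mathbb{M}_Q)$ for all $R \in \gpq$, for the other). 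Your outer scaffolding is sound: the factoring step ($P\setminus\mpq \subseteq R$ for every $R \in \pq$, via Proposition~\ref{prop:conflict}) is correct when $SE(P) \neq \es \neq SE(Q)$, and the final step of choosing a singleton $\gamma$ is legitimate once membership in $\pq$ is established.

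The genuine gap is the central claim that $\bigcup \mpq^{\revpm}$ is a \emph{maximal} $Q$-consistent subset of $\bigcup\mpq$. You do not prove it --- you explicitly defer the induction and label it the ``technical heart'' --- and as stated it is false for arbitrary local selection functions: if the local revisions $\bigcup\mathbb{M} \revpm Q$ inside \textsc{ModChange} use, say, full-meet selection, each write-back is an intersection of several compatible sets and the final union is in general a proper subset of every maximal $Q$-consistent subset of $\bigcup\mpq$, so $P\setminus\mpq + \bigcup\mpq^{\revpm}$ fails the maximality clause of Definition~\ref{dfn:compatible-set} and does not lie in $\pq$ at all. To make your route work you must (i) restrict the local operators to maxichoice selection, and (ii) show that the choices can be coordinated across combinations --- in particular that a local revision of a small combination never discards a rule that a compatible-set-level choice over $\bigcup\mpq$ would retain, despite the fact that later combinations operate on already-mutated modules. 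That coordination argument is precisely what is missing; without it the proposal restates the difficulty rather than resolving it. Note also that the theorem's existential over $\gamma$ is really an existential over a \emph{joint} choice of the global selection function and the local ones used by the algorithm, which is the degree of freedom the paper's proof exploits and which your write-up leaves implicit.
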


The reason why Theorem~\ref{thm:alg-same-as-op} does not hold for any arbitrary selection function is that during the operation of \textsc{ModChange} a selection function chooses from subsets of modules, while it chooses from subsets of a program during the operation of $\revpm$ per Definition~\ref{dfn:revpm} ($\conpm$ per Definition~\ref{dfn:conpm}, respectively). Consequently, the result of the former operation may not in all cases correspond to the result of the latter operation.

We have already established in Theorems~\ref{thm:revpm-reve} and~\ref{thm:conpm-cone} that~$\revpm$ and~$\conpm$ can be characterised in terms of~$\reve$ and~$\cone$, respectively. Therefore, we can directly extend Theorem~\ref{thm:alg-same-as-op} to ensconcement revision and contraction.

\begin{cor} \label{cor:ens-alg}
For any $P,Q \in \lpa$, there exists an ensconcement $\preceq$ associated with $P$ such that $P \reve Q = P \setminus \mpq + \bigcup \mpq^{\reve} + Q$ (or $P \cone Q = P \setminus \mpq + \bigcup \mpq^{\cone}$, respectively).
\end{cor}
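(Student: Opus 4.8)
The plan is to obtain Corollary~\ref{cor:ens-alg} by composing Theorem~\ref{thm:alg-same-as-op} with the characterisation Theorems~\ref{thm:revpm-reve} and~\ref{thm:conpm-cone}, so that no fresh module-level bookkeeping is needed. For the revision statement, Theorem~\ref{thm:alg-same-as-op} first provides a selection function $\gamma$ for $P$ with $P \revpm Q = P \setminus \mpq + \bigcup \mpq^{\revpm} + Q$, and Theorem~\ref{thm:revpm-reve}, applied to that same $\gamma$, then provides an ensconcement $\preceq$ associated with $P$ with $P \revpm Q = P \reve Q$. It therefore remains to verify $\bigcup \mpq^{\revpm} = \bigcup \mpq^{\reve}$, where the copy of $\reve$ that Algorithm~\ref{alg:modrev} invokes on a combination $\bigcup \mathbb{M}$ of modules is understood to use the restriction $\preceq|_{\bigcup \mathbb{M}}$.

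The heart of the argument is a \emph{localised} form of Theorem~\ref{thm:revpm-reve}. First, for any $R \subseteq P$ the restriction $\preceq|_R$ is again an ensconcement associated with $R$: shrinking the domain of the preorder can only shrink the set $\{\, r' \in R \setminus \{r\} \mid r \preceq r' \,\}$ and hence only enlarge its set of SE~models, so a failure of strict inclusion in $SE(r)$ is inherited, giving ($\preceq$1); ($\preceq$2) restricts trivially. Second --- and this is the step that must be extracted from the proof of Theorem~\ref{thm:revpm-reve} --- the ensconcement that construction builds from $\gamma$ can be taken so that $R \revpm Q = R \reve Q$ holds simultaneously for all $R \subseteq P$ under the induced restrictions of $\gamma$ and $\preceq$. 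Granting this, one runs \textsc{ModChange} with $\circ = \revpm$ and with $\circ = \reve$ side by side and shows by induction on the outer counter $n$ that the working set $\mathcal{M}$ is identical after every pass: whenever both runs reach the same subfamily $\mathbb{M}$, each replaces every $M \in \mathbb{M}$ by $(\bigcup \mathbb{M} \circ Q) \setminus Q$, and the localised characterisation makes the two replacements coincide. All rules ever inserted remain rules occurring in $P$, since every compatible set of $\bigcup \mathbb{M}$ with respect to $Q$ is a subset of $\bigcup \mathbb{M}$, so the restricted $\gamma$ and $\preceq$ stay defined on them throughout. Hence $\mpq^{\revpm} = \mpq^{\reve}$, and chaining the displayed equalities yields $P \reve Q = P \revpm Q = P \setminus \mpq + \bigcup \mpq^{\reve} + Q$.

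The main obstacle is precisely this localised version of Theorem~\ref{thm:revpm-reve}: one has to reopen its proof and confirm that the rank the construction assigns to a rule $r$ depends only on $r$ and on the sub-program currently under consideration, so that it survives restriction to every $\bigcup \mathbb{M}$ produced by the algorithm. If the published construction does not already have this property, the remedy is to fix $\gamma$ as in Theorem~\ref{thm:alg-same-as-op}, define $\preceq$ by that construction, and then check directly that the cut $\cq$ taken inside each $\bigcup \mathbb{M}$ reproduces the corresponding partial-meet intersection $\bigcap \gpq$ formed for that sub-program. The contraction case is entirely parallel: use Theorem~\ref{thm:conpm-cone} in place of Theorem~\ref{thm:revpm-reve}, replace $\revpm$ by $\conpm$ and $\reve$ by $\cone$, replace compatible sets by remainder sets, drop the ``$+\,Q$'' tail, and repeat the same induction over the iterations of \textsc{ModChange} to conclude $P \cone Q = P \conpm Q = P \setminus \mpq + \bigcup \mpq^{\cone}$.
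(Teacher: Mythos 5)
Your proposal is correct and follows essentially the same route as the paper, whose entire proof of this corollary is the single line ``Follows directly from Theorems~\ref{thm:revpm-reve}, \ref{thm:conpm-cone}, and~\ref{thm:alg-same-as-op}.'' The extra care you take in distinguishing $\bigcup \mpq^{\revpm}$ from $\bigcup \mpq^{\reve}$ and in arguing that the ensconcement must localise to every module combination touched by \textsc{ModChange} addresses a genuine subtlety that the paper's one-line proof leaves entirely implicit, so your version is, if anything, more complete than the published one.
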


\section{Discussion} \label{sec:disc-classic-bc}
From our investigations in the previous sections, we can extract two main findings. Firstly, the belief change operators for logic programs that we proposed here are able to address the unintuitive  behaviour of the distance-based approach. The latter takes a holistic, program-level view on logic programs and their revisions, by assuming a belief state to be the set of SE~models of the entire program. Due to its focus on the program-level, the distance-based approach neglects information about relationships between atoms that is only captured on the rule-level, by the individual rules of a program, and therefore violates the properties of preservation and support. 

For our approach, we adapted partial meet and ensconcement constructions, which allowed us to define operators that are more sensitive with respect to the information expressed by the individual rules of a program. In particular, we considered the rules of a program in their syntactic form as a belief state and thus as the objects of change, which made it possible to preserve necessary information on the syntactic level during a change operation. This characteristic turned out to be key for satisfying the preservation and support properties. It should be noted, however, that our operators are still model-based and not purely syntactic operators. Our operators rely on satisfaction defined under SE~model semantics to determine compatible sets, remainder sets, ensconcements, and cuts, not on syntactic transformations of program components. Rather, our operators bridge the gap between purely semantic and purely syntactic methods, which is why we call them syntax-preserving.

Secondly, we found that our operators fit properly into the belief base framework for belief change.
Tables~\ref{tbl:AGM-rev},~\ref{tbl:AGM-con},~\ref{tbl:bb-rev}, and~\ref{tbl:bb-con} provide an overview of the postulates that are satisfied by each operator. Our partial meet revision operator $\revpm$ satisfies all basic AGM revision postulates ($*$1)--($*$6). However, the partial meet revision operator does not satisfy the supplementary postulates ($*$7)--($*$8), even with further restrictions on the selection function, which allow operators under propositional logic to satisfy the supplementary postulates. A similar situation exists for our partial meet contraction operator~$\conpm$. The partial meet contraction operator satisfies all basic postulates ($\dotminus$1)--($\dotminus$6) with the exception of the controversial Recovery postulate ($\dotminus$5). It satisfies ($\dotminus$7) but not ($\dotminus$8) of the supplementary postulates when the selection function is restricted to be determined by a maximised transitive relation. 

\begin{table}
\tbl{AGM revision postulates satisfied by $*_{\gamma'}$ and $\reve$ \label{tbl:AGM-rev}} 
{
\begin{tabular}{|c|c|c|c|c|c|c|c|c|}
\hline
&($*$1)&($*$2)&($*$3)&($*$4)&($*$5)&($*$6)&($*$7)&($*$8)\\
\hline
$*_{\gamma'}$ &\checkmark &\checkmark &\checkmark &\checkmark &\checkmark &\checkmark & &  \\
\hline
$\reve$ &\checkmark &\checkmark &\checkmark &\checkmark &\checkmark &\checkmark & &\checkmark \\
\hline
\end{tabular}
}
\vspace*{.25cm}
\tbl{AGM contraction postulates satisfied by $\dotminus_{\gamma'}$ and $\cone$ \label{tbl:AGM-con}} 
{
\begin{tabular}{|c|c|c|c|c|c|c|c|c|}
\hline
&($\dotminus$1) &($\dotminus$2) &($\dotminus$3) &($\dotminus$4) &($\dotminus$5) &($\dotminus$6) &($\dotminus$7) &($\dotminus$8) \\
\hline
$\dotminus_{\gamma'}$ &\checkmark &\checkmark &\checkmark &\checkmark  & &\checkmark  &\checkmark & \\
\hline
$\cone$ &\checkmark &\checkmark &\checkmark &\checkmark & &\checkmark &\checkmark &\checkmark  \\
\hline
\end{tabular}
}
\end{table}

\begin{table}
\tbl{Base revision postulates satisfied by $\revpm$ and $\reve$ \label{tbl:bb-rev}} 
{
\begin{tabular}{|c|c|c|c|c|c|}
\hline
&($*$1b)&($*$2b)&($*$3b)&($*$4b)&($*$5b)\\
\hline
$\revpm$ &\checkmark &\checkmark &\checkmark &\checkmark &\checkmark  \\
\hline
$\reve$ &\checkmark &\checkmark & & &\checkmark \\
\hline
\end{tabular}
}
\vspace*{.25cm}
\tbl{Base contraction postulates satisfied by $\conpm$ and $\cone$ \label{tbl:bb-con}} 
{
\begin{tabular}{|c|c|c|c|c|c|c|c|c|}
\hline
&($\dotminus$1b) &($\dotminus$2b) &($\dotminus$3b) &($\dotminus$4b) &($\dotminus$5b) &($\dotminus$6b) &($\dotminus$7b) &($\dotminus$8b) \\
\hline
$\conpm$ &\checkmark &\checkmark &\checkmark &\checkmark  &\checkmark &\checkmark  & &\checkmark \\
\hline
$\cone$ &\checkmark &\checkmark & & &\checkmark &\checkmark &\checkmark &\checkmark  \\
\hline
\end{tabular}
}
\end{table}

On the other hand, evaluating our partial meet revision and contraction operators with respect to the belief base postulates showed that they exhibit the same characteristics as the partial meet base revision and contraction operators for propositional logic \cite{hansson1993reversing}. The partial meet revision operator is represented by ($*$1b)--($*$5b) (Theorem~\ref{thm:rev-pm-bb}) and the partial meet contraction operator by ($\dotminus$1b)--($\dotminus$4b) (Theorem~\ref{thm:con-pm-bb}), and thus both operators fit neatly into the belief base framework.

Our ensconcement revision operator $\cone$ satisfies the same basic AGM postulates ($*$1)--($*$6) as our partial meet revision operator and in addition ($*$8), but also disrespects ($*$7). Our ensconcement contraction operator~$\cone$ satisfies all AGM postulates ($\dotminus$1)--($\dotminus$8) with the exception of ($\dotminus$5). Since an ensconcement-based construction is essentially geared towards belief bases \cite{williams1994logic}, our ensconcement operators should align well with the belief base framework. Indeed, our ensconcement contraction operator~$\cone$ satisfies the same set of belief base postulates as its counterpart for propositional logic \cite{ferme2008axiomatic}, that is, ($\dotminus$1b), ($\dotminus$2b), and ($\dotminus$5b)--($\dotminus$8b). As the classic belief base revision postulates ($\divideontimes$1)--($\divideontimes$5) have originally been proposed to characterise partial meet base revision operations, their applicability to characterise ensconcement revision operations is limited. It would be interesting for future work to define a set of belief base revision postulates that can exactly characterise ensconcement revision operations. Until then, we can use our adaptation of the Harper identity to show that any ensconcement contraction operator determined by our ensconcement revision operator~$\reve$ satisfies ($\dotminus$1b), ($\dotminus$2b), and ($\dotminus$5b)--($\dotminus$8b).

It is now left to examine how our operators compare to the distance-based operator in terms of the AGM and belief base postulates. Tables~\ref{tbl:AGM-star} and~\ref{tbl:bb-star} display the postulates satisfied by the distance-based operator $\star$. We can see that both our partial meet revision operator~$\revpm$ and our ensconcement revision operator~$\reve$ are better-behaved than~$\star$ on the scale of AGM postulates as well as on the scale of belief base postulates. It should be noted, however, that~$\star$ satisfies ($*$4m), which has a stricter antecedent than~($*$6), but neither~$\revpm$ nor~$\reve$ satisfies ($*$4m). 

\begin{table}
\tbl{AGM revision postulates satisfied by $\star$ \label{tbl:AGM-star}} 
{
\begin{tabular}{|c|c|c|c|c|c|c|c|c|}
\hline
&($*$1)&($*$2)&($*$3)&($*$4)&($*$5)&($*$6)&($*$7)&($*$8)\\
\hline
$\star$ &\checkmark & & & &\checkmark &\checkmark & & \\
\hline
\end{tabular}
}
\vspace*{.25cm}
\tbl{Base revision postulates satisfied by $\star$ \label{tbl:bb-star}} 
{
\centering
\begin{tabular}{|c|c|c|c|c|c|}
\hline
&($*$1b)&($*$2b)&($*$3b)&($*$4b)&($*$5b)\\
\hline
$\star$ & & & & &\checkmark \\
\hline
\end{tabular}
}
\end{table}

%
%

\section{Conclusion} \label{sec:concl}

In this work, we presented two new constructions of belief change in logic programs. Our specific aim was to overcome the drawbacks of existing semantic revision operators, namely, that they do not satisfy the properties of preservation and support. These are fundamental properties for a logic program belief change operator to return intuitive results. For this purpose, we chose to adapt partial meet and ensconcement constructions from classic belief change, which allowed us to define syntax-preserving belief change operators for logic programs that satisfy preservation and support.

Our approach is novel in that the partial meet and ensconcement constructions not only enabled our operators to preserve more information from a logic program during a change operation than purely semantic operators, but they also facilitated natural definitions of contraction operators for logic programs, the first in the field to the best of our knowledge.

In order to evaluate the rationality of our operators, we translated the revision and contraction postulates from the classic AGM and belief base frameworks to logic programs. 
We established that our operators fit properly within the belief base framework and showed their interdefinability. We also demonstrated that our operators align more closely to the AGM and belief base frameworks than the distance-based revision operators and that they generalise the screened semi-revision operator. We further presented an algorithm to optimise our revision and contraction operations. In future work, we aim to develop and implement an algorithm for constructing the relevant modules of a program. 

While our partial meet and ensconcement operators specify how a program changes during a revision or contraction operation, they do not specify how the associated selection function or ensconcement relation changes. A selection function or ensconcement is associated with a program before the change operation. During the change operation, some rules may be discarded from the program and some new rules may be added to it in the case of revision, and any effects on the initial selection function or ensconcement or entrenchment should be taken into consideration. Such an extension to our approach would be worthwhile to pursue in the future.
\appendix
\section*{APPENDIX: PROOFS}
\setcounter{section}{1}

\textsc{Proposition~\ref{prop:relevance-to-disjelim}}:
Let $\dotminus$ be a contraction operator on $\lpa$. If $\dotminus$ satisfies ($\dotminus$3b), then it satisfies ($\dotminus$8b).
\begin{proof}
Proof by contrapositive: Let $r \in P \setminus (P \dotminus Q)$. Assume $SE(P \dotminus Q) \subseteq SE(Q) \cup SE(r)$. Then for all $P'$ such that $P \dotminus Q \subseteq P' \subseteq P$ with $SE(P') \nsubseteq SE(Q)$: $SE(P') \cap SE(r) \nsubseteq SE(Q)$, due to the assumption and due to $SE(P') \subseteq SE(P \dotminus Q)$.
\end{proof}

\textsc{Theorem~\ref{thm:rev-pm}}:
The revision operator $\revpm$ satisfies ($*$1)--($*$6).

\begin{proof}
\\
($*$1): Follows directly from Definition~\ref{dfn:revpm}. \\
($*$2): Follows directly from Definition~\ref{dfn:revpm}. \\
($*$3): If $Q$ is not satisfiable, then $P \revpm Q = P + Q$. Otherwise, since $\bigcap \gpq \subseteq P$ we have $\bigcap \gpq + Q \subseteq P + Q$. \\
($*$4): If $P + Q$ is satisfiable, then $\pq = \{P\} = \gpq$, for any selection function~$\gamma$,  and thus $P \revpm Q = P + Q$. \\
($*$5): If $Q$ is not satisfiable, then $P \revpm Q = P + Q$ is not satisfiable. If $Q$ is satisfiable, then for any $R \in \pq$, $R + Q$ is satisfiable, which implies $P \revpm Q$ is satisfiable. \\
($*$6): Follows directly from Definition~\ref{dfn:revpm}.
\end{proof}


\begin{lem} \label{lem:PQ-shared-rules-in-result}
Let $P,Q \in \lpa$ and $\gamma$ be a selection function for $P$. If $r \in P \cap Q$, then $r \in \bigcap \gpq$.
\end{lem}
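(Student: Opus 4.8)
The plan is to prove the stronger claim that $r$ belongs to \emph{every} compatible set $R \in \pq$. Since every selection function satisfies $\gpq \subseteq \pq$ by Definition~\ref{dfn:selfun}, membership of $r$ in each element of $\pq$ immediately forces $r \in \bigcap \gpq$, which is the assertion of the lemma. (The degenerate case $\pq = \es$, which occurs precisely when $Q$ is not satisfiable and in which $\bigcap \gpq$ is not actually invoked by Definition~\ref{dfn:revpm}, is vacuous: $\bigcap \es$ under the usual convention contains every subset of $P$, so it contains $r$ trivially.)

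So I would fix an arbitrary $R \in \pq$ and argue by contradiction, assuming $r \notin R$. Because $r \in P$, we then have $R \subset R \cup \{r\} \subseteq P$, so the maximality clause of Definition~\ref{dfn:compatible-set}, applied with $R' = R \cup \{r\}$, yields $SE(R \cup \{r\}) \cap SE(Q) = \es$. On the other hand, $r \in Q$ means $\{r\} \subseteq Q$, and since $\models_s$ is antitonic with respect to the program subset relation we obtain $SE(Q) \subseteq SE(r)$. Combining this with $SE(R \cup \{r\}) = SE(R) \cap SE(r)$ gives
\[
SE(R \cup \{r\}) \cap SE(Q) = SE(R) \cap SE(r) \cap SE(Q) = SE(R) \cap SE(Q),
\]
the last equality holding because the intersection with $SE(r)$ is redundant once $SE(Q) \subseteq SE(r)$. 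But $R \in \pq$ guarantees $SE(R) \cap SE(Q) \neq \es$, which contradicts the left-hand side being empty. Hence $r \in R$, and since $R$ was arbitrary, $r \in \bigcap \gpq$.

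There is no serious obstacle here: the whole argument is a direct application of the maximality built into the definition of a compatible set together with the antitonicity of $\models_s$. The only point needing a moment's attention is the trivial bookkeeping around the empty-family convention noted in the first paragraph; the remaining steps are routine set manipulations.
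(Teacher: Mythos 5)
Your proof is correct and follows essentially the same route as the paper's: both arguments derive a contradiction with the maximality clause of Definition~\ref{dfn:compatible-set} by observing that $r \in Q$ forces $SE(Q) \subseteq SE(r)$, hence $SE(R \cup \{r\}) \cap SE(Q) = SE(R) \cap SE(Q) \neq \es$. The only cosmetic difference is that you establish membership of $r$ in every element of $\pq$ rather than just of $\gpq$, which subsumes the paper's statement.
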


\begin{proof}
Let $P,Q \in \lpa$. Assume there exists $r \in (P \cap Q) \setminus (\bigcap \gpq)$. Then there exists $R \in \gpq : r \not \in R$. It follows that $SE(R \cup \{r\}) \cap SE(Q) \neq \es$ since $SE(R) \cap SE(Q) \neq \es$ by Definition~\ref{dfn:compatible-set} and $r \in Q$ implies $SE(Q) \subseteq SE(r)$. This is a contradiction because $R$ is maximal by Definition~\ref{dfn:compatible-set}.
\end{proof}

\textsc{Theorem~\ref{thm:rev-pm-bb}}:
An operator $\revpm$ is a partial meet revision operator for $P \in \lpa$ determined by a selection function $\gamma$ for $P$ iff $\revpm$ satisfies ($*$1b)--($*$5b).

\begin{proof} We first show that a partial meet revision operator $\revpm$ for $P$ determined by a given selection function $\gamma$ for $P$ satisfies ($*$1b)--($*$5b). \\
($*$1b): Since ($*$1b) $=$ ($*$2) and $\revpm$ satisfies ($*$2), $\revpm$ also satisfies ($*$1b). \\
($*$2b): Since ($*$2b) $=$ ($*$3) and $\revpm$ satisfies ($*$3), $\revpm$ also satisfies ($*$2b). \\
($*$3b): Let $r \in P$. Assume that for all $P'$ with $P \revpm Q \subseteq P' \subset P + Q$ and $P'$ being satisfiable, it holds that $P' \cup \{r\}$ is satisfiable. In particular, for each $R \in \pq$ with $P \revpm Q \subseteq R \cup Q$, this implies $R \cup Q \cup \{r\}$ is satisfiable. As each $R$ is subset-maximal, it follows that $r \in R$ and thus $r \in \bigcap \gpq$. From Definition~\ref{dfn:revpm} we can then conclude $r \not \in P \setminus (P \revpm Q)$. \\
($*$4b): For all $P' \subseteq P$, let $P' + Q$ be satisfiable iff $P' + R$ is satisfiable. Then $\pq = \mathbb{P}_R$ by Definition~\ref{dfn:compatible-set} and so $\bigcap \gpq = \bigcap \gamma(\mathbb{P}_R)$ as well as $P \cap \bigcap \gpq = P \cap \bigcap \gamma(\mathbb{P}_R)$. By Lemma~\ref{lem:PQ-shared-rules-in-result} we obtain $(P \cap \bigcap \gpq) \cup (P \cap Q) = (P \cap \bigcap \gamma(\mathbb{P}_R)) \cup (P \cap R)$. This means $P \cap (\bigcap \gpq \cup Q) = P \cap (\bigcap \gamma(\mathbb{P}_R) \cup R)$. Thus, $P \cap (P \revpm Q) = P \cap (P \revpm R)$. \\
($*$5b): If $Q$ is satisfiable, then for any $R \in \pq$, $R + Q$ is satisfiable, which implies $P \revpm Q$ is satisfiable.

We now show that any operator $\cg$ for $P$ satisfying ($*$1b)--($*$5b) is a partial meet revision operator for $P$ determined by some selection function for $P$. We first find a selection function $\gamma$ for $P$. Let $\gamma$ be such that (i) if $\pq = \es$, then $\gpq = \es$ and (ii) $\gpq = \{\, R \in \pq \mid P \cap (P \cg Q) \subseteq R \,\}$ otherwise. 

We begin by showing that $\gamma$ is a function. If $\pq = \mathbb{P}_R$, then $P \cap (P \cg Q) = P \cap (P \cg R)$ by ($*$4b). This means $\gpq = \gamma(\mathbb{P}_R)$ according to our definition of $\gamma$.

We next show that $\gamma$ is a selection function. Clearly, $\gpq \subseteq \pq$ by our definition of $\gamma$. If $\pq \neq \es$, then $Q$ is satisfiable by Definition~\ref{dfn:compatible-set} and thus $P \cg Q$ is satisfiable by ($*$5b). Since $Q \subseteq P \cg Q$ by ($*$1b) and $P \cg Q \subseteq P \cup Q$ by ($*$2b), it follows that $(P \cap (P \cg Q)) \cup Q$ is satisfiable. This means that there exists $R \in \pq$ such that $P \cap (P \cg Q) \subseteq R$. From our definition of $\gamma$ we therefore obtain that $\gpq \neq \es$. 

Finally, we show that $\cg$ is a partial meet revision operator for $P$, that is, $P \cg Q = P \cup Q$ if $Q$ is not satisfiable and $P \cg Q = \bigcap \gpq \cup Q$ otherwise. Consider first the limiting case that $Q$ is not satisfiable. If $r \in P \setminus (P \cg Q)$, then there exists $P'$ such that $P \cg Q \subseteq P' \subset P \cup Q$ and $P'$ is satisfiable but $P' \cup \{r\}$ is not satisfiable by ($*$3b). This is a contradiction since $Q \subseteq P'$ by ($*$1b). Therefore, it holds for all $r \in P$ that  $r \in P \cg Q$, that is, $P \subseteq P \cg Q$. Since $Q \subseteq P \cg Q$ by ($*$1b) and $P \cg Q \subseteq P \cup Q$ by ($*$2b), we can conclude $P \cg Q = P \cup Q$. 

Assume now that $Q$ is satisfiable. Let $r \in P \setminus (P \cg Q)$. If $\pq = \es$, then it follows from ($*$1b) and ($*$3b) that $P \cg Q = Q$. Since $\gpq = \es$ by our definition of $\gamma$, we thus have $P \cg Q = Q = \bigcap \gpq \cup Q$. If $\pq \neq \es$, then it follows directly from our definition of $\gamma$ that $P \cap (P \cg Q) \subseteq \bigcap \gpq$. From ($*$1b) and ($*$2b) we then obtain $P \cg Q \subseteq \bigcap \gpq \cup Q$. To show the converse inclusion, first assume the case that $P \cup Q$ is satisfiable. This implies that for any $P' \subseteq P \cup Q$ it holds that $P'$ is satisfiable. Applying ($*$3b), we obtain $P \setminus (P \cg Q) = \es$ and thus $P \subseteq P \cg Q$. From ($*$1b) and ($*$2b) it follows that $P \cg Q = P \cup Q$. Moreover, due to the assumption that $P \cup Q$ is satisfiable and Definition~\ref{dfn:compatible-set}, we have $\pq = \{P\}$. By our definition of $\gamma$, we obtain $\gpq = \{P\}$ and thus  $\bigcap \gpq = P$ and can conclude $P \cg Q = \bigcap \gpq \cup Q$. Lastly, assume the case that $P \cup Q$ is not satisfiable. We will show that $r \not \in P \cg Q$ implies $r \not \in \bigcap \gpq \cup Q$. If $r \not \in P$, then $r \not \in (P \cg Q) \setminus Q$ by ($*$1b) and ($*$2b) and $r \not \in \bigcap \gpq$ by Definition~\ref{dfn:compatible-set}. Since $r \not \in P \cg Q$ implies $r \not \in Q$ by ($*$1b), it follows that $r \not \in ((P \cg Q) \setminus Q) \cup Q) = P \cg Q$ and $r \in \bigcap \gpq \cup Q$. Now assume $r \in P \setminus (P \cg Q)$. According to ($*$3b), then there exists $P'$ such that $P \cg Q \subseteq P' \subset P \cup Q$ and $P'$ is satisfiable but $P' \cup \{r\}$ is not satisfiable. This means that there exists $R \in \pq$ such that $P \cap P' \subseteq R$ and $r \not \in R$. Since $P \cap (P \cg Q) \subseteq P \cap P' \subseteq R$, we obtain from our definition of $\gamma$ that $R \in \gpq$. We can thus conclude from $r \not \in  R$ that $r \not \in \bigcap \gpq$.
\end{proof}

\textsc{Proposition~\ref{prop:prop:revpm-same-as-consolidation-SE}}:
Let $P,Q \in \lpa$. For any maxichoice selection function $\gamma_P$ for $P$, there exists a selection function $\gamma$ for $P$ such that $(P \cup Q) \scrse Q = P \revpm Q$.

\begin{proof}
Let $P,Q \in \lpa$ and $\gamma_P$ be a maxichoice selection function for $P$. We will prove by cases.

Case 1: $Q$ is not satisfiable. This means $(P \cup Q) \botse Q = \es$ by definition of $P \botse Q$. Then $\gamma_P((P \cup Q) \botse Q) = P \cup Q$ by definition of $\gamma_P$. It follows that $(P \cup Q) \scrse Q = P \cup Q = P \revpm Q$ by definition of $\scrse$ and Definition~\ref{dfn:revpm}.

Case 2: $Q$ is satisfiable and for all $R \subseteq P: SE(R \cup Q) = \es$. Then $(P \cup Q) \botse Q = \{Q\}$ by definition of $\botse$ and $\pq = \es = \bigcap \gpq$ by Definition~\ref{dfn:compatible-set}. It follows that $(P \cup Q) \scrse Q = \gamma_P ((P \cup Q) \botse Q) = Q = P \revpm Q$ by definition of $\scrse$ and Definition~\ref{dfn:revpm}.

Case 3: $Q$ is satisfiable and there exists $R \subseteq P: SE(R \cup Q) \neq \es$ such that for all $R'$ with $R \subset R' \subseteq P: SE(R') = \es$. Then $R \cup Q \in (P \cup Q) \botse Q$ by definition of $\botse$ and $R \in \pq$ by Definition~\ref{dfn:compatible-set}. Let $\gamma$ be a maxichoice selection function for $P$ such that $\gamma(2^P) \cup Q = \gamma_P(\{\, R \cup Q \mid R \in 2^P \,\})$. This implies $\gamma_P((P \cup Q) \botse Q) = \gpq \cup Q = \bigcap \gpq \cup Q$. Thus, $(P \cup Q) \scrse Q = P \revpm Q$ by definition of $\scrse$ and Definition~\ref{dfn:revpm}.
\end{proof}

\textsc{Proposition~\ref{prop:revpm-same-as-consolidation-AS}}:
Let $P,Q \in \lpa$ and $\gamma^1$, $\gamma_P$ be single-choice and maxichoice selection functions, respectively, for $P$. If $\gamma^1(2^P) \cup Q = \gamma_P(\{\, R \cup Q \mid R \in 2^P \,\})$ for any $\gamma_P$, then $P \revpmas Q = (P \cup Q) \scr Q$.

\begin{proof}
Let $P,Q \in \lpa$ and $\gamma^1$, $\gamma_P$ be single-choice and maxichoice selection functions, respectively, for $P$. Assume that $\gamma^1(2^P) \cup Q = \gamma_P(\{\, R \cup Q \mid R \in 2^P \,\})$ for any $\gamma_P$. We will prove by cases.

Case 1: $AS(Q) = \es$ and for all $R \subseteq P: AS(R \cup Q) = \es$. This implies $\pqas = \es$ by definition of $\pqas$ and thus $P \revpm Q = P \cup Q$ by Definition~\ref{dfn:revpmas}. It also implies $(P \cup Q) \bot_! Q = \es$ by definition of $P \bot_! Q$ and therefore $\gamma_P((P \cup Q) \bot_! Q) = P \cup Q$ by definition of $\gamma_P$. We can conclude $(P \cup Q) \scr Q = P \cup Q$ by Definition~\ref{dfn:scr}.

Case 2: $AS(Q) \neq \es$ and for all $R \subseteq P: AS(R \cup Q) = \es$. Then $\pqas = \es = \gamma^1(\pqas)$ by definitions of $\pqas$ and $\gamma^1$. We also have $(P \cup Q) \bot_! Q = \{Q\} = \gamma_P((P \cup Q) \bot_! Q)$ by definitions of $\bot_!$ and $\gamma_P$. It follows that $P \revpmas Q = Q = (P \cup Q) \scr Q$ by Definitions~\ref{dfn:revpmas} and~\ref{dfn:scr}.

Case 3: There exists $R \subseteq P: AS(R \cup Q) \neq \es$ and for all $R'$ with $R \subset R' \subseteq P : AS(R' \cup Q) = \es$. Then $R \in \pqas$ by definition of $\pqas$ and $R \cup Q \in (P \cup Q) \bot_! Q$ by definition of $\bot_!$. Due to the assumption, it holds that $\gamma^1(\pqas) \cup Q = \gamma_P((P \cup Q) \bot_! Q)$. Thus, $P \revpmas Q = (P \cup Q) \scr Q$ by Definitions~\ref{dfn:revpmas} and~\ref{dfn:scr}.
\end{proof}

\textsc{Theorem~\ref{thm:con-pm}}:
The contraction operator $\conpm$ satisfies ($\dotminus$1)--($\dotminus$4) and ($\dotminus$6).

\begin{proof}
\\
($\dotminus$1): Follows directly from Definition~\ref{dfn:conpm}. \\
($\dotminus$2): Follows directly from Definition~\ref{dfn:conpm}. \\
($\dotminus$3): If $P \not \models_s Q$, then $\pqm = \{P\} = \gpqm$, for any selection function $\gamma$, and thus $P \conpm Q = P$. \\
($\dotminus$4): Let $\not \models_s Q$. For any $R \in \pqm$, $R \not \models_s Q$, which implies $P \conpm Q \not \models_s Q$. \\
($\dotminus$6): Follows directly from Definition~\ref{dfn:conpm}.
\end{proof}


\begin{lem} \label{lem:pqrm-subset-pq-union-pr}
Let $P \in \lpa$. For any $Q,R \in \lpa$, it holds that $\pqrm \subseteq \pqm \cup \prm$.
\end{lem}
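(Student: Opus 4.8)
The plan is to reduce the whole statement to the single set-theoretic identity
$\seqrc = \seqc \cup \overline{SE(R)}$. This holds because $SE(Q+R) = SE(Q) \cap SE(R)$ (since $SE(\cdot)$ turns union of programs into intersection of SE~model sets), so taking complements in $\se$ and applying De Morgan's law gives the identity; in particular both $\seqc \subseteq \seqrc$ and $\overline{SE(R)} \subseteq \seqrc$.

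Next I would take an arbitrary $S \in \pqrm$ (if $\pqrm = \es$ the inclusion is trivial) and unpack the definition of a remainder set: (i) $SE(S) \cap \seqrc \neq \es$, and (ii) for every $S'$ with $S \subset S' \subseteq P$ we have $SE(S') \cap \seqrc = \es$. Note that $S \subseteq P$ already. Using the identity above, condition~(i) splits into a disjunction: either $SE(S) \cap \seqc \neq \es$ or $SE(S) \cap \overline{SE(R)} \neq \es$. This gives two cases.

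In the first case I would show $S \in \pqm$: the first half of the disjunction already gives $SE(S) \cap \seqc \neq \es$, and for the maximality clause, since $\seqc \subseteq \seqrc$, condition~(ii) for $S$ with respect to $\seqrc$ immediately yields $SE(S') \cap \seqc = \es$ for every $S'$ with $S \subset S' \subseteq P$; hence $S$ meets both defining conditions of $\pqm$. Symmetrically, in the second case the same argument with $\overline{SE(R)}$ in place of $\seqc$ shows $S \in \prm$. Either way $S \in \pqm \cup \prm$, which completes the proof.

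There is no genuinely hard step here: the only point worth stating carefully is that the maximality condition is \emph{monotone} in the target set, so the stronger emptiness requirement attached to membership in $\pqrm$ descends to the smaller sets $\seqc$ and $\overline{SE(R)}$. Accordingly, I would highlight the identity $\seqrc = \seqc \cup \overline{SE(R)}$ as the one computation that needs to be made explicit, and otherwise keep the two-case verification terse.
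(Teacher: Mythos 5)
Your proof is correct and rests on the same two facts as the paper's own argument: the De Morgan identity $\seqrc = \seqc \cup \overline{SE(R)}$ and the observation that the maximality clause in the definition of a remainder set is monotone in the target set, so it descends from $\seqrc$ to each of $\seqc$ and $\overline{SE(R)}$. The paper states this more tersely by asserting the stronger characterisation that $\pqrm$ is exactly the set of maximal elements of $\pqm \cup \prm$, but the element-level case split you give is precisely the verification that claim leaves implicit.
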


\begin{proof}
Let $P,Q,R \in \lpa$. It follows from $\overline{SE(Q+R)} = \seqc \cup \overline{SE(R)}$ and the definition of $\pqrm$ that $\pqrm = \{\, S \in \pqm \cup \prm \mid S \not \subset S' \text{ for any } S' \in \pqm \cup \prm \,\}$. Thus, $\pqrm \subseteq \pqm \cup \prm$.
\end{proof}


\begin{lem} \label{lem:gammapqrm-subset-gammapq-union-gammapr}
Let $P \in \lpa$ and $\gamma'$ be determined by a maximised transitive relation. For any $Q,R \in \lpa$, it holds that $\gamma'(\pqrm) \subseteq \gamma'(\pqm) \cup \gamma'(\prm)$.
\end{lem}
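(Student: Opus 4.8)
The plan is to pick an arbitrary $T \in \gamma'(\pqrm)$ and show that $T \in \gamma'(\pqm)$ or $T \in \gamma'(\prm)$. If $\pqrm = \es$ then $\gamma'(\pqrm) = \es$ and the inclusion is trivial, so assume $\pqrm \neq \es$. By Lemma~\ref{lem:pqrm-subset-pq-union-pr} we have $T \in \pqm \cup \prm$, and since $Q$ and $R$ play symmetric roles it suffices to treat the case $T \in \pqm$ and prove $T \in \gamma'(\pqm)$, i.e.\ that $S \unlhd T$ for every $S \in \pqm$.

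So fix an arbitrary $S \in \pqm$. The easy subcase is $S \in \pqrm$: then $S \unlhd T$ holds directly, because $T \in \gamma'(\pqrm)$ and $\gamma'$ selects the $\unlhd$-greatest members of a set. The remaining subcase is $S \in \pqm \setminus \pqrm$, and here I would invoke the explicit description of $\pqrm$ extracted in the proof of Lemma~\ref{lem:pqrm-subset-pq-union-pr}, namely that $\pqrm$ is exactly the collection of $\subseteq$-maximal members of $\pqm \cup \prm$. Since $S \in \pqm \cup \prm$ is not among them, $S$ is strictly contained in some member of $\pqm \cup \prm$, hence—using finiteness of $P$—in a $\subseteq$-maximal such member $S'$; thus $S \subsetneq S'$ and $S' \in \pqrm$.

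The two defining properties of $\gamma'$ now close the argument. Because $\unlhd$ is \emph{maximised} and $S \subsetneq S'$, we get $S \lhd S'$, in particular $S \unlhd S'$; because $S' \in \pqrm$ and $T \in \gamma'(\pqrm)$, we get $S' \unlhd T$; and transitivity of $\unlhd$ yields $S \unlhd T$. As $S \in \pqm$ was arbitrary, this gives $T \in \gamma'(\pqm) \subseteq \gamma'(\pqm) \cup \gamma'(\prm)$, completing the proof.

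The main obstacle is the bookkeeping in the non-trivial subcase: one must be sure that a $\subseteq$-maximal superset of $S$ taken inside $\pqm \cup \prm$ really lands in $\pqrm$, which is precisely where the reformulation of $\pqrm$ from Lemma~\ref{lem:pqrm-subset-pq-union-pr} together with finiteness of $P$ is needed; and one must note that it is exactly the maximisation hypothesis on $\unlhd$ that upgrades the strict inclusion $S \subsetneq S'$ to $S \unlhd S'$—an ordinary transitively relational selection function would not suffice, which is why the lemma assumes $\gamma'$ is determined by a \emph{maximised} transitive relation.
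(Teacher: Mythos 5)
Your proof is correct, and it uses the same ingredients as the paper's own argument: Lemma~\ref{lem:pqrm-subset-pq-union-pr}, the maximisation hypothesis on $\unlhd$, and transitivity. However, your execution is genuinely more careful than the paper's. The paper argues by contradiction and, in its Case~1, claims that from $S \in \prm$ and $S \notin \gamma'(\prm)$ ``it follows that there exists $S' \in \prm$ with $S \subset S'$.'' As written this is a non sequitur: $S \notin \gamma'(\prm)$ only yields some $S'' \in \prm$ with $S'' \not\unlhd S$, and in any case no member of $\prm$ can strictly contain another, since all members of $\prm$ are $\subseteq$-maximal by Definition~\ref{dfn:remainder-set}. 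The correct repair is precisely the chain you supply: a witness $S''$ to non-selection cannot lie in $\pqrm$ (otherwise $S'' \unlhd S$ would follow from $S \in \gamma'(\pqrm)$), so by the description of $\pqrm$ as the $\subseteq$-maximal members of $\pqm \cup \prm$ (established inside the proof of Lemma~\ref{lem:pqrm-subset-pq-union-pr}) and finiteness of $P$, $S''$ is strictly contained in some $S' \in \pqrm$; then maximisation gives $S'' \lhd S'$, membership of $S$ in $\gamma'(\pqrm)$ gives $S' \unlhd S$, and transitivity closes the loop. Your direct (rather than contradiction-based) formulation of this argument is complete, and you correctly pinpoint where finiteness and the maximisation hypothesis are each indispensable.
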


\begin{proof}
Let $P,Q,R \in \lpa$. Assume there exists $S \in \gamma'(\pqrm): S \not \in  \gamma'(\pqm) \cup \gamma'(\prm)$. Then $S \not \in  \gamma'(\pqm)$ and $S \not \in \gamma'(\prm)$. Case 1: If $S \not \in \pqm$, this means that $S \in \prm$ by Lemma~\ref{lem:pqrm-subset-pq-union-pr}. From Definition~\ref{dfn:mtr-sel-func} it follows that there exists $S' \in \prm: S \subset S'$, a contradiction since $S \in \pqrm$. Case 2: $S \in \pqm$. Follows analogously as Case 1.
\end{proof}

\textsc{Theorem~\ref{thm:conpm-sat-7}}:
Let $\gamma'$ be determined by a maximised transitive relation. The contraction operator $\dotminus_{\gamma'}$ satisfies ($\dotminus$7).

\begin{proof}
Let $P,Q,R \in \lpa$ and $r \in (P \dotminus_{\gamma'} Q) \cap (P \dotminus_{\gamma'} R)$. This means that $r \in \bigcap \gamma'(\pqm)$ and $r \in \bigcap \gamma'(\prm)$. By Lemma~\ref{lem:gammapqrm-subset-gammapq-union-gammapr}, we have for all $S \in \gamma'(\pqrm): r \in S$, so that $r \in \bigcap \gamma'(\pqrm)$. Thus, $(P \dotminus_{\gamma'} Q) \cap (P \dotminus_{\gamma'} R) \subseteq P \dotminus_{\gamma'} (Q+R)$.
\end{proof}

\textsc{Theorem~\ref{thm:con-pm-bb}}:
An operator $\conpm$ is a partial meet contraction operator for $P \in \lpa$ determined by a selection function $\gamma$ for $P$ iff $\conpm$ satisfies ($\dotminus$1b)--($\dotminus$4b).

\begin{proof} We first show that a partial meet contraction operator $\conpm$ for $P$ determined by a given selection function $\gamma$ for $P$ satisfies ($\dotminus$1b)--($\dotminus$4b). \\
($\dotminus$1b): Follows from ($\dotminus$1b) $=$ ($\dotminus$2) and satisfaction of ($\dotminus$2). \\
($\dotminus$2b): Follows from ($\dotminus$2b) $=$ ($\dotminus$4) and satisfaction of ($\dotminus$4). \\
($\dotminus$3b): Let $r \in P$. Assume that for all $P'$ with $P \conpm Q \subseteq P' \subset P$ and $P' \not \models_s Q$, it holds that $P' \cup \{r\} \not \models_s Q$. In particular, for each $R \in \pqm$ with $P \conpm Q \subseteq R$, this implies $R \cup \{r\} \not \models_s Q$. As each $R$ is subset-maximal by Definition~\ref{dfn:remainder-set}, it follows that $r \in R$ and thus $r \in P \conpm Q$. \\
($\dotminus$4b): For all $P' \subseteq P$, let $P' \not \models_s Q$ iff $P' \not \models_s R$. Then $\pqm = \prm$ by Definition~\ref{dfn:remainder-set} and so $\gpqm = \gamma(\prm)$ as well as $\bigcap \gpqm = \bigcap \gamma(\prm)$. Thus, $P \conpm Q = P \conpm R$ by Definition~\ref{dfn:conpm}. 

We now show that any operator $\cg$ for $P$ satisfying ($\dotminus$1b)--($\dotminus$4b) is a partial meet contraction operator for $P$ determined by some selection function for $P$. We first find a selection function $\gamma$ for P. Let $\gamma$ be such that (i) if $\pqm = \es$, then $\gpqm = \es$ and (ii) $\gpqm = \{\, R \in \pqm \mid P \cg Q \subseteq R \,\}$ otherwise. 

We begin by showing that $\gamma$ is a function. If $\pqm = \mathbb{P}_R^-$, then $P \cg Q = P \cg R$ by ($\dotminus$4b). This means $\gpqm = \gamma(\mathbb{P}_R^-)$ according to our definition of $\gamma$.

We next show that $\gamma$ is a selection function. Clearly, $\gpqm \subseteq \pqm$ by our definition of $\gamma$. If $\pqm \neq \es$, then $\not \models_s Q$ by Definition~\ref{dfn:remainder-set} and thus $P \cg Q \not \models_s Q$ by ($\dotminus$2b). It follows from $P \cg Q \subseteq P$ due to ($\dotminus$1b) that there exists $R \in \pqm$ such that $P \cg Q \subseteq R$. From our definition of $\gamma$ we therefore obtain that $\gpqm \neq \es$. 

Finally, we show that $\cg$ is a partial meet contraction operator for $P$, that is, $P \cg Q = P $ if $\models_s Q$ and $P \cg Q = \bigcap \gpqm$ otherwise. Consider first the limiting case that $\models_s Q$. If $r \in P \setminus (P \cg Q)$, then there exists $P'$ such that $P \cg Q \subseteq P' \subset P$ and $P' \not \models_s Q$ but $P' \cup \{r\} \models_s Q$ by ($\dotminus$3b). This is a contradiction since $\models_s Q$. Therefore, it holds for all $r \in P$ that $r \in P \cg Q$, that is, $P \subseteq P \cg Q$. Since $P \cg Q \subseteq P$ by ($\dotminus$1b), we can conclude $P \cg Q = P$. 

Assume now that $\not \models_s Q$. Let $r \in P \setminus (P \cg Q)$. If $\pqm = \es$, then it follows from ($\dotminus$2b) and ($\dotminus$3b) that $P \cg Q = \es$. Since $\gpqm = \es$ by our definition of $\gamma$, we thus have $P \cg Q = \bigcap \gpqm$. If $\pqm \neq \es$, then it follows directly from our definition of $\gamma$ that $P \cg Q \subseteq \bigcap \gpqm$. To show the converse inclusion, first assume the case that $P \not \models_s Q$. This implies that for any $P' \subseteq P$ it holds that $P' \not \models_s Q$. Applying ($\dotminus$3b), we obtain $P \setminus (P \cg Q) = \es$ and thus $P \subseteq P \cg Q$. From ($\dotminus$1b) it follows that $P \cg Q = P$. Moreover, due to the assumption that $P \not \models_s Q$ and Definition~\ref{dfn:remainder-set}, we have $\pqm = \{P\}$. By our definition of $\gamma$, we obtain $\gpqm = \{P\}$ and thus $\bigcap \gpqm = P$ and can conclude $P \cg Q = \bigcap \gpqm$. Lastly, assume the case that $P \models_s Q$. We will show that $r \not \in P \cg Q$ implies $r \not \in \bigcap \gpqm$. If $r \not \in P$, then $r \not \in P \cg Q$ by ($\dotminus$1b) and $r \not \in \bigcap \gpqm$ by Definition~\ref{dfn:remainder-set}. Now assume $r \in P \setminus (P \cg Q)$. According to ($\dotminus$3b), then there exists $P'$ such that $P \cg Q \subseteq P' \subset P$ and $P' \not \models_s Q$ but $P' \cup \{r\} \models_s Q$. This means that there exists $R \in \pqm$ such that $ P' \subseteq R$ and $r \not \in R$. Since $P \cg Q \subseteq P' \subseteq R$, we obtain from our definition of $\gamma$ that $R \in \gpqm$. We can thus conclude from $r \not \in  R$ that $r \not \in \bigcap \gpqm$.
\end{proof}

\textsc{Proposition~\ref{prop:con-pm-bb-5--8}}:
The contraction operator $\conpm$ satisfies ($\dotminus$5b), ($\dotminus$6b), and ($\dotminus$8b).

\begin{proof}
\\
($\dotminus$5b): Since ($\dotminus$5b) $=$ ($\dotminus$3) and $\conpm$ satisfies ($\dotminus$3), $\conpm$ also satisfies ($\dotminus$5b). \\
($\dotminus$6b): Since ($\dotminus$6b) $=$ ($\dotminus$6) and $\conpm$ satisfies ($\dotminus$6), $\conpm$ also satisfies ($\dotminus$6b). \\
($\dotminus$8b): Follows from satisfaction of ($\dotminus$3b) and Proposition~\ref{prop:relevance-to-disjelim}.
\end{proof}

\textsc{Lemma~\ref{lem:cut-properties}}:
Let $P,Q,R \in \lpa$ and $\preceq$ be an ensconcement associated with $P$.
\begin{enumerate}[a)]
\item If $P + Q$ is satisfiable, then $\cq = P$. 
\item If $Q$ is satisfiable, then $\cq + Q$ is satisfiable. 
\item If $Q$ is not satisfiable, then $\cq = \es$. 
\item If $Q \models_s R$, then $\cq \subseteq cut_\preceq(R)$. 
\item $\cqr \subseteq \cq$. 
\item If $\cq \models_s R$, then $\cqr = \cq$.
\end{enumerate}

\begin{proof}
\\
a) -- d) Follow directly from Definition~\ref{dfn:cut}. \\
e) Follows directly from~\ref{lem:q-implies-r-then-cq-subset-cr}). \\
f) Let $\cq \models_s R$. It follows that $SE(\cq) \cap SE(Q) \subseteq SE(R)$, which implies $SE(\cq) \cap SE(Q) \cap SE(R) \neq \es$ since $SE(\cq) \cap SE(Q) \neq \es$ by Definition~\ref{dfn:cut}. We can rewrite this as $SE(\cq) \cap SE(Q + R) \neq \es$. Thus, $\cq \subseteq \cqr$ by Definition~\ref{dfn:cut}. By Lemma~\ref{lem:cut-properties} \ref{lem:cqr-subset-cq}), we obtain $\cq = \cqr$.
\end{proof}

\textsc{Proposition~\ref{prop:p-rev-q-sequiv-cq-plus-q}}:
Let $P,Q \in \lpa$ and $\preceq$ be an ensconcement associated with $P$. Then $SE(P \reve Q) = SE(\cq + Q)$.

\begin{proof}
Let $P,Q \in \lpa$ and $\preceq$ be an ensconcement associated with $P$. If $Q$ is not satisfiable, then by Definition~\ref{dfn:reve} we have $SE(P \reve Q) = SE(P + Q) = \es$ and by Lemma~\ref{lem:cut-properties} \ref{lem:cq-equal-es}) we also have $SE(\cq + Q) = SE(Q) = \es$. Otherwise, by Definition~\ref{dfn:reve}, $P \reve Q = \cq \cup (P \reve Q) \setminus (\cq + Q) \cup Q$, which means $SE(P \reve Q) = SE(\cq) \cap SE((P \reve Q) \setminus (\cq + Q)) \cap SE(Q)$. Since for any $r \in (P \reve Q) \setminus (\cq + Q): SE(\cq) \cap SE(Q) \subseteq SE(r)$, we obtain $SE(P \reve Q) = SE(\cq) \cap SE(Q)$.
\end{proof}

\textsc{Theorem~\ref{thm:rev-ens}}:
The revision operator $\reve$ satisfies ($*$1)--($*$6) and ($*$8).

\begin{proof}
\\
($*$1): Follows directly from Definition~\ref{dfn:reve}. \\
($*$2): Follows directly from Definition~\ref{dfn:reve}. \\
($*$3): If $Q$ is not satisfiable, then $P \reve Q = P + Q$ by Definition~\ref{dfn:reve}. Otherwise, for any $r \in P \reve Q$ it holds that $r \in P \cup Q$, which implies $P \reve Q \subseteq P + Q$. \\
($*$4): If $P + Q$ is satisfiable, then $\cq = P$ by Lemma~\ref{lem:cut-properties} \ref{lem:cq-equal-P}). Since $SE(P) \subseteq SE(r)$ for all $r \in P$, it follows from Definition~\ref{dfn:reve} that $P \reve Q = P + Q$.\\
($*$5): If $Q$ is not satisfiable, then $P \reve Q = P + Q$ is not satisfiable. Now assume~$Q$ is satisfiable. By Lemma~\ref{lem:cut-properties} \ref{lem:cq-plus-Q-satisfiable}) it holds that $\cq + Q$ is satisfiable. Since $P\reve Q \equiv_s \cq + Q$ by Proposition~\ref{prop:p-rev-q-sequiv-cq-plus-q}, it follows that $P \reve Q$ is satisfiable. \\
($*$6): Follows directly from Definition~\ref{dfn:reve}. \\
($*$8): Let $(P \reve Q) + R$ be satisfiable. By Definition~\ref{dfn:reve}, this means $SE(\cq) \cap SE((P \reve Q) \setminus \cq) \cap SE(Q) \cap SE(R) \neq \es$ and thus $SE(\cq) \cap  SE(Q) \cap SE(R) \neq \es$. From Lemma~\ref{lem:cut-properties} \ref{lem:cqr-subset-cq}) and Definition~\ref{dfn:cut} it follows that $\cqr = \cq$. Then obviously $SE(\cqr) \cap SE(Q) = SE(\cq) \cap SE(Q)$, which implies $SE(\cqr) \cap SE(Q) \cap SE(R) \subseteq SE(\cq) \cap SE(Q)$. It thus also holds that $\{\, r \in P \mid SE(\cq) \cap SE(Q) \subseteq SE(r) \,\} \subseteq \{\, r \in P \mid SE(\cqr) \cap SE(Q+R) \subseteq SE(r) \,\}$, from which we can conclude that $(P \reve Q) + R \subseteq P \reve (Q + R)$.
\end{proof}

\textsc{Theorem~\ref{thm:reve-bb}}:
The revision operator~$\reve$ satisfies ($*$1b), ($*$2b), and ($*$5b).

\begin{proof}
\\
($*$1b): Since ($*$1b) $=$ ($*$2) and $\reve$ satisfies ($*$2), $\reve$ also satisfies ($*$1b). \\
($*$2b): Since ($*$2b) $=$ ($*$3) and $\reve$ satisfies ($*$3), $\reve$ also satisfies ($*$2b). \\
($*$5b): If $Q$ is satisfiable, then by Lemma~\ref{lem:cut-properties} \ref{lem:cq-plus-Q-satisfiable}) it holds that $\cq + Q$ is satisfiable. Since $P\reve Q \equiv_s \cq + Q$ by Proposition~\ref{prop:p-rev-q-sequiv-cq-plus-q}, it follows that $P \reve Q$ is satisfiable.
\end{proof}

\textsc{Lemma~\ref{lem:cutm-properties}}:
Let $P,Q,R \in \lpa$.
\begin{enumerate}[a)]
\item If $P \not \models_s Q$, then $\cqm = P$. 
\item If $\not \models_s Q$, then $\cqm \not \models_s Q$. 
\item If $\models_s Q$, then $\cqm = \es$.
\item If $Q \models_s R$, then $cut_\preceq^-(R) \subseteq \cqm$. 
\item $\cqm \subseteq \cqrm$.
\item If $\cqm \models_s R$, then $\cqrm = \cqm$. 
\item If $\cqm \not \models_s R$, then $\cqrm = cut^-_\preceq(R)$.
\end{enumerate}

\begin{proof}
\\
a) -- d) Follow directly from the definition of~$\cqm$. \\
e) Follows directly from~\ref{lem:q-implies-r-then-crm-subset-cqm}). \\
f) Assume $\cqrm \neq \cqm$. Then $\cqm \subset \cqrm$ by Lemma~\ref{lem:cutm-properties}~\ref{lem:cqm-subset-cqrm}). Let $r \in \cqrm \setminus \cqm$. This means $SE(\{\, r' \in P \mid r \preceq r' \,\}) \cap \overline{SE(Q+R)} \neq \es$ and $SE(\{\, r' \in P \mid r \preceq r' \,\}) \cap \seqc = \es$ by Definition of $\cqm$. Thus, $SE(\{\, r' \in P \mid r \preceq r' \,\}) \cap \overline{SE(R)} \neq \es$. Furthermore, $\cqm \subseteq \{\, r' \in P \mid r \preceq r' \,\}$ by Definition of $\cqm$. We therefore obtain $SE(\cqm) \cap \overline{SE(R)} \neq \es$, which implies $\cqm \not \models_s R$. \\
g) Assume $\cqrm \neq cut^-_\preceq(R)$. Then $cut^-_\preceq(R) \subset \cqrm$ by Lemma~\ref{lem:cutm-properties}~\ref{lem:cqm-subset-cqrm}). Let $r \in \cqrm \setminus cut^-_\preceq(R)$. This means $SE(\{\, r' \in P \mid r \preceq r' \,\}) \cap \overline{SE(Q+R)} \neq \es$ and $SE(\{\, r' \in P \mid r \preceq r' \,\}) \cap \overline{SE(R)} = \es$ by Definition of $cut^-_\preceq(R)$. Thus, $SE(\{\, r' \in P \mid r \preceq r' \,\}) \cap \seqc \neq \es$ and so $\{\, r' \in P \mid r \preceq r' \,\} \subseteq \cqm$ by Definition of $\cqm$. We therefore obtain $SE(\cqm) \cap \overline{SE(R)} = \es$, which implies $\cqm \models_s R$.
\end{proof}

\textsc{Proposition~\ref{prop:SEcon-equal-SEcutminus}}:
Let $P,Q \in \lpa$ and $\preceq$ be an ensconcement associated with $P$. Then $SE(P \cone Q) \cap \seqc = SE(\cqm) \cap \seqc$.

\begin{proof}
If $\models_s Q$, then $\seqc = \es$ and $SE(P \cone Q) \cap \seqc = SE(\cqm) \cap \seqc$. Otherwise, by Definition~\ref{dfn:cone}, $P \cone Q = \cqm \cup (P \cone Q) \setminus \cqm$, which means $SE(P \cone Q) \cap \seqc = SE(\cqm) \cap SE((P \cone Q) \setminus \cqm) \cap \seqc$. For all $r \in (P \cone Q) \setminus \cqm$ we have $SE(\cqm) \cap \seqc \subseteq SE(r)$, so that we obtain $SE(P \cone Q) \cap \seqc = SE(\cqm) \cap \seqc$.
\end{proof}

\textsc{Theorem~\ref{thm:con-ens}}:
The contraction operator~$\cone$ satisfies ($\dotminus$1)--($\dotminus$4) and ($\dotminus$6)--($\dotminus$8).

\begin{proof}
\\
($\dotminus$1): Follows directly from Definition~\ref{dfn:cone}. \\
($\dotminus$2): Follows directly from Definition~\ref{dfn:cone}. \\
($\dotminus$3): If $P \not \models_s Q$, then $\cqm = P$ by Lemma~\ref{lem:cutm-properties} \ref{lem:cqm-equal-P}). Since $SE(P) \subseteq SE(r)$ for all $r \in P$, this means $SE(\cqm) \cap \seqc \subseteq SE(r)$ for all $r \in P$ and by Definition~\ref{dfn:cone} we thus have $P \cone Q = P$. \\
($\dotminus$4): Let $\not \models_s Q$. If $P \not \models_s Q$, then $P \cone Q = P \not \models_s Q$ by~($\dotminus$3). Now assume $P \models_s Q$ and let $S = SE(\cqm) \cap \seqc$. For each $r \in P$, if $r \in P \cone Q$, then $S \subseteq SE(r)$ by Definition~\ref{dfn:cone}, and thus $S \subseteq SE(P \cone Q)$. Since $S \cap SE(Q) = \emptyset$, we obtain $P \cone Q \not \models_s Q$. \\
($\dotminus$6): Follows directly from Definition~\ref{dfn:cone}. \\
($\dotminus$7): For all $r \in (P \cone Q) \cap (P \cone R): SE(\cqm) \cap \seqc \subseteq SE(r)$ and $SE(cut_{\preceq}^-(R)) \cap \overline{SE(R)} \subseteq SE(r)$. This implies $SE(cut_{\preceq}^-(Q+R)) \cap \seqc \subseteq SE(r)$ since $\cqm \subseteq cut_{\preceq}^-(Q+R)$ by Lemma~\ref{lem:cutm-properties}~\ref{lem:cqm-subset-cqrm}) and $SE(cut_{\preceq}^-(Q+R)) \cap \overline{SE(R)} \subseteq SE(r)$ since $cut_{\preceq}^-(R) \subseteq cut_{\preceq}^-(Q+R)$ by Lemma~\ref{lem:cutm-properties}~\ref{lem:cqm-subset-cqrm}). From $\overline{SE(Q+R)} = \seqc \cup \overline{SE(R)}$ we obtain $SE(cut_{\preceq}^-(Q+R)) \cap \overline{SE(Q+R)} \subseteq SE(r)$ and thus $r \in P \cone (Q+R)$ by Definition~\ref{dfn:cone}. \\
($\dotminus$8): Assume $SE(P \cone (Q+R)) \nsubseteq SE(Q)$. Then, $SE(cut_{\preceq}^-(Q+R)) \cap SE((P \cone (Q+R)) \setminus cut_{\preceq}^-(Q+R)) \nsubseteq SE(Q)$, which means $SE(cut_{\preceq}^-(Q+R)) \cap \seqc \neq \es$ (i). Recall that $\cqm$ is maximal and $\cqm \subseteq cut_{\preceq}^-(Q+R)$ (ii) by Lemma~\ref{lem:cutm-properties}~\ref{lem:cqm-subset-cqrm}). From (i) and (ii) it follows that $\cqm = cut_{\preceq}^-(Q+R)$. Since $\seqc \subseteq \overline{SE(Q+R)}$, we have $SE(\cqm) \cap \seqc \subseteq SE(cut_{\preceq}^-(Q+R)) \cap \overline{SE(Q+R)}$, which implies $P \cone (Q+R) \subseteq P \cone Q$ by Definition~\ref{dfn:cone}.
\end{proof}

\textsc{Theorem~\ref{thm:cone-bb}}:
The contraction operator $\cone$ satisfies ($-$1b), ($-$2b) and ($-$5b)--($-$8b).

\begin{proof}
\\
($\dotminus$1b): Follows from ($\dotminus$1b) $=$ ($\dotminus$2) and satisfaction of ($\dotminus$2). \\
($\dotminus$2b): Follows from ($\dotminus$2b) $=$ ($\dotminus$4) and satisfaction of ($\dotminus$4). \\
($\dotminus$5b): Follows from ($\dotminus$5b) $=$ ($\dotminus$3) and satisfaction of ($\dotminus$3). \\
($\dotminus$6b): Follows from ($\dotminus$6b) $=$ ($\dotminus$6) and satisfaction of ($\dotminus$6). \\
($\dotminus$7b): If $\models_s Q + R$, then $P \cone (Q + R) = P$ by Definition~\ref{dfn:cone} and $\models_s Q$ and $\models_s R$, which means $P \cone Q = P$ and $P \cone R = P$ by Definition~\ref{dfn:cone}. Now let $\not \models_s Q + R$. We proceed by cases.

Case 1: $\cqm \models_s R$. Then $\cqrm = \cqm$ by Lemma~\ref{lem:cutm-properties}~\ref{lem:cqrm-equal-cqm}). Let $r \in P \cone (Q+R)$. This means $r \in P$ by ($\dotminus$2) and $SE(\cqrm) \cap \seqrc \subseteq SE(r)$ by Definition~\ref{dfn:cone}. It follows that $SE(\cqrm) \cap \seqc \subseteq SE(r)$. Due to the case assumption, we obtain $SE(\cqm) \cap \seqc \subseteq SE(r)$ and thus $r \in P \cone Q$ by Definition~\ref{dfn:cone}. Now let $r \in P \cone Q$. This means $r \in P$ by ($\dotminus$2) and $SE(\cqm) \cap \seqc \subseteq SE(r)$ by Definition~\ref{dfn:cone}. Then $SE(\cqrm) \cap \seqc \subseteq SE(r)$ due to the case assumption. It also follows from the case assumption that $SE(\cqm) \cap \overline{SE(R)} = \es$ and $SE(\cqrm) \cap \overline{SE(R)} = \es$. We thus have $SE(\cqrm) \cap (\seqc \cup \overline{SE(R)}) \subseteq SE(r)$, that is, $SE(\cqrm) \cap \seqrc \subseteq SE(r)$. Therefore, $r \in P \cone (Q + R)$ by Definition~\ref{dfn:cone}.

Case 2: $cut_\preceq^-(R) \models_s Q$. Follows analogous to Case 1 so that $P \cone (Q + R) = P \cone R$.

Case 3: $\cqm \not \models_s R$ and $cut_\preceq^-(R) \not \models_s Q$. Then $\cqrm = \cqm = cut_\preceq^-(R)$ by Lemma~\ref{lem:cutm-properties}~\ref{lem:cqrm-equal-crm}). Let $r \in P \cone (Q + R)$. This means $r \in P$ by ($\dotminus$2) and $SE(\cqrm) \cap \seqrc \subseteq SE(r)$ by Definition~\ref{dfn:cone}. We thus have $SE(\cqrm) \cap \seqc \subseteq SE(r)$ and $SE(\cqrm) \cap \overline{SE(R)} \subseteq SE(r)$. From the case assumption it follows that $SE(\cqm) \cap \seqc \subseteq SE(r)$ and $SE(cut_\preceq^-(R)) \cap \overline{SE(R)} \subseteq SE(r)$. This means $r \in P \cone Q$ and $r \in P \cone R$ by Definition~\ref{dfn:cone} and therefore $r \in (P \cone Q) \cap (P \cone R)$.
\\
($\dotminus$8b): Assume $SE(P \cone Q) \subseteq SE(Q) \cup SE(r)$, that is, $SE(P \cone Q) \cap \seqc \subseteq SE(r)$. By Proposition~\ref{prop:SEcon-equal-SEcutminus}, $SE(P \cone Q) \cap \seqc = SE(\cqm) \cap \seqc$, so that we obtain $SE(\cqm) \cap \seqc \subseteq SE(r)$. This implies $r \in P \cone Q$ by Definition~\ref{dfn:cone}. We can conclude $r \in P$ by~($\dotminus$2).
\end{proof}

\textsc{Theorem~\ref{thm:revpm-reve}}:
Let $P,Q \in \lpa$. For any selection function $\gamma$, there exists an ensconcement $\preceq$ associated with $P$ such that $P \revpm Q = P \reve Q$.

\begin{proof}
Let $P,Q \in \lpa$ and $\gamma$ be a selection function that determines the outcome of $P \revpm Q$. By $S = (P \revpm Q) \cap P = \bigcap \gpq$ we denote the subset of $P$ that is retained in the revision and by $S' = P \setminus S$ the subset of $P$ that is discarded. We can then create an ensconcement $\preceq$ associated with $P$ that has a minimal number of levels, such that for all $r \in S$ and for all $r' \in S'$: $r' \prec r$. We now show that $(P \reve Q) \cap P = S$. Clearly, $\cq = S$ by Definition~\ref{dfn:cut}, which implies $S \subseteq (P \reve Q) \cap P$. Assume that there exists an $r' \in S'$ with $SE(\cq) \cap SE(Q) \subseteq SE(r')$. Then for each selected compatible set $R \in \gpq$ it would hold that $r' \in R$ because $R$ is maximal by the definition of $\pq$. Yet this implies $r' \in S$, a contradiction.
\end{proof}

\textsc{Theorem~\ref{thm:conpm-cone}}:
Let $P,Q \in \lpa$. For any selection function $\gamma$, there exists an ensconcement $\preceq$ associated with $P$ such that $P \conpm Q = P \cone Q$.

\begin{proof}
Follows analogously to the proof of Theorem~\ref{thm:revpm-reve}.
\end{proof}

\begin{lem} \label{lem:subset-not-above-rule}
Let $\preceq^R$ be a subset-ensconcement associated with some $P \in \lpa$ and $R \subseteq P$. For any rule $r \in R$, it holds that $R \preceq^R \{r\}$.
\end{lem}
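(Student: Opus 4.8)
The plan is to argue by contradiction and to exploit the elementary fact that an individual rule is semantically no stronger than any set of rules containing it. So fix $R \subseteq P$ and a rule $r \in R$, and suppose towards a contradiction that $R \not\preceq^R \{r\}$. Since $\preceq^R$ is a total preorder on $2^P$ (Definition~\ref{dfn:subset-ens}), this assumption is equivalent to $\{r\} \prec^R R$, that is, $R$ is strictly more ensconced than $\{r\}$. The degenerate case $R = \{r\}$ is handled at once, since then $R \preceq^R \{r\}$ holds by reflexivity; so we may assume $\{r\} \subset R$.

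The next step records the semantic relationship $SE(R) \subseteq SE(\{r\})$, which is immediate from $r \in R$ together with $SE(R) = \bigcap_{r' \in R} SE(r')$. I would then sharpen this to a \emph{proper} inclusion: were $SE(R) = SE(\{r\})$, we would have $R \equiv_s \{r\}$, and the second defining condition of a subset-ensconcement would force $R \preceq^R \{r\} \preceq^R R$, contradicting $\{r\} \prec^R R$. Hence $SE(R) \subset SE(\{r\})$.

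The crux is to turn this into a violation of the first defining condition of a subset-ensconcement instantiated at the singleton $\{r\}$. That condition bounds the combined SE~models of the subsets of $P$ that are at least as ensconced as $\{r\}$ and are eligible to appear in the relevant collection: their intersection must not be a proper subset of $SE(\{r\})$. But $\{r\} \prec^R R$ places $R$ among these subsets, so the intersection is contained in $SE(R)$, which by the previous step lies strictly inside $SE(\{r\})$ --- the desired contradiction. This is the subset-level analogue of the classical observation that in an ensconcement nothing can sit strictly above a set of rules by which it is strictly entailed.

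I expect the main obstacle to be reconciling the precise range of the quantifier in the first condition of Definition~\ref{dfn:subset-ens} --- which is phrased in terms of subsets of $P \setminus \{r\}$ --- with the set $R$, which contains $r$ and so does not literally lie in that range. To make the contradiction go through cleanly one should either apply that condition to a carefully chosen set rather than to $\{r\}$ directly, or restructure the whole argument as a strong induction on $\lvert R \rvert$: peel off a rule $s \in R \setminus \{r\}$, apply the induction hypothesis to the smaller set $R \setminus \{s\}$ to obtain $R \setminus \{s\} \preceq^R \{r\}$, and combine this with transitivity once it has been shown that deleting a single rule never strictly raises ensconcement --- the latter single-rule statement being exactly the place where the first condition of Definition~\ref{dfn:subset-ens} is spent. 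This bookkeeping around the cut level of the ensconcement is, I anticipate, the only genuinely delicate part; the semantic inclusions themselves are routine.
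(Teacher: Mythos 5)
Your argument follows the same route as the paper's one\-/sentence proof: from $R \models_s \{r\}$, either $SE(R) = SE(r)$, in which case Condition~($\preceq^R$2) forces $R$ and $\{r\}$ onto the same level, or $SE(R) \subset SE(r)$, in which case Condition~($\preceq^R$1) instantiated at $\{r\}$ is violated whenever $\{r\} \prec^R R$. That part of your write-up is correct and complete, and the degenerate case $R = \{r\}$ is handled properly.

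The obstacle you flag at the end is genuine, and it should be resolved rather than left open, because it decides whether the proof closes at all. If Condition~($\preceq^R$1) is read literally, with $R'$ ranging over subsets of the set difference $P \setminus R$, then the lemma is in fact false: take $P = \{r,s\}$ with $r = (\bot \leftarrow a.)$ and $s = (\bot \leftarrow b.)$, and order $2^P$ as $\{s\} \prec^R \{r\} \prec^R \{r,s\} \prec^R \es$. Both conditions of Definition~\ref{dfn:subset-ens} hold under the literal reading --- for instance, at $\{r\}$ the only eligible $R'$ is $\es$, whose set of SE~models is $\se \not\subset SE(r)$, and at $\{s\}$ the union of eligible sets is $\{r\}$ with $SE(r) \not\subset SE(s)$ --- yet $\{r\} \prec^R \{r,s\}$ while $r \in \{r,s\}$. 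So no proof can succeed under that reading; the condition must instead be understood as the subset-level analogue of ($\preceq$1), with $R'$ ranging over all subsets of $P$ other than $R$ itself. Under that reading $R$ is a legitimate member of the collection whose SE~models Condition~($\preceq^R$1) bounds, and your direct argument goes through with no further work. Your fallback induction does not rescue the literal reading either: the single-deletion step $R \preceq^R R \setminus \{s\}$ would need Condition~($\preceq^R$1) instantiated at $R \setminus \{s\}$ with $R$ as witness, which runs into exactly the same range problem.
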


\begin{proof}
Since $R \models_s \{r\}$, it follows from Conditions~($\preceq^R$1) and~($\preceq^R$2) that $\{r\} \not \prec^R R$.
\end{proof}

\textsc{Theorem~\ref{thm:subset-equal-rule-based-change}}:
Let $P,Q \in \lpa$, $\preceq$ be an ensconcement associated with $P$, and $\preceq^R$ a subset-ensconcement associated with~$P$ such that $\{r\} \preceq^R \{r'\}$ iff $r \preceq r'$ for all $r,r' \in P$. Then $P \reve Q = P \reveR Q$ (or $P \cone Q = P \coneR Q$, alternatively).

\begin{proof}
Let $P,Q \in \lpa$, $\preceq$ an ensconcement associated with $P$, and $\preceq^R$ a subset-ensconcement associated with $P$. Assume that $\{r\} \preceq^R \{r'\}$ iff $r \preceq r'$ for all $r,r' \in P$. From Lemma~\ref{lem:subset-not-above-rule} it is clear that $\cq = cut_{\preceq^R}(Q)$, which implies for all $r \in (P \reve Q)\setminus Q: SE(cut_{\preceq^R}(Q)) \cap SE(Q) \subseteq SE(\{r\})$, and thus $P \reve Q \subseteq P \reveR Q$. Since $SE(R) \subseteq SE(r)$ for any $R \subseteq P$ and each $r \in R$, we also have $P \reveR Q \subseteq P \reve Q$.

Analogous for contraction.
\end{proof}

\textsc{Proposition~\ref{prop:revpm-via-conpm}}:
Let $P \in \lpa$, $\gamma$ be a selection function for $P$, and $*$ an operator for $P$ such that for any $Q \in \lpa$: $P * Q = (P \conpm \overline{Q}) + Q$. Then $P * Q = P \revpm Q$.

\begin{proof}
Let $\seqc = \se$. Then $P \conpm \overline{Q} = P$ by Definition~\ref{dfn:conpm} and thus $P * Q = (P \conpm \overline{Q}) + Q = P + Q = P \revpm Q$ by Definition~\ref{dfn:revpm}. Otherwise, $\seqc \neq \se$ such that $\mathbb{P}^-_{\overline{Q}} = \{\, R \subseteq P \,  \mid \, SE(R) \cap SE(Q) \neq \es$ and for all $R'$ with $R \subset R' \subseteq P : SE(R') \cap SE(Q) = \es \,\} = \pq$ by Definition~\ref{dfn:compatible-set}. It follows that $P * Q = (P \conpm \overline{Q}) + Q = \bigcap \gpq + Q = P \revpm Q$ by Definitions~\ref{dfn:revpm} and~\ref{dfn:conpm}.
\end{proof}

\textsc{Proposition~\ref{prop:conpm-via-revpm}}:
Let $P \in \lpa$, $\gamma$ be a selection function for $P$, and $\dotminus$ an operator for $P$ such that for any $Q \in \lpa$: $P \dotminus Q = P \cap (P \revpm \overline{Q})$. Then $P \dotminus Q = P \conpm Q$.

\begin{proof}
Let $\seqc = \es$. Then $P \revpm \overline{Q} = P + \overline{Q}$ by Definition~\ref{dfn:revpm} and thus $P \dotminus Q = P \cap (P \revpm \overline{Q}) = P = P \conpm Q$ by Definition~\ref{dfn:conpm}. Otherwise, $\seqc \neq \es$ such that $\mathbb{P}_{\overline{Q}} = \{\, R \subseteq P \,  \mid \, SE(R) \cap \seqc \neq \es$ and for all $R'$ with $R \subset R' \subseteq P : SE(R') \cap \seqc = \es \,\} = \pqm$ by Definition~\ref{dfn:remainder-set}. It follows that $P \dotminus Q = P \cap (P \revpm \overline{Q}) = P \cap (\bigcap \gpqm + \overline{Q})$ by Definition~\ref{dfn:revpm}. Assume there exists a rule $r \in \overline{Q}$ with $r \in P \setminus \bigcap \gamma(\pqm)$. Then there exists an $R \in \gpqm: r \not \in R$, a contradiction since $SE(\overline{Q})= \seqc \subseteq SE(r)$ and $R$ is maximal. We therefore obtain $P \dotminus Q = P \conpm Q$ by Definition~\ref{dfn:conpm}.
\end{proof}

\textsc{Proposition~\ref{prop:reve-via-cone}}:
Let $P \in \lpa$, $\preceq$ be an ensconcement associated with $P$, and $*$ an operator for $P$ such that for any $Q \in \lpa$: $P * Q = (P \cone \overline{Q}) + Q$. Then $P * Q = P \reve Q$.

\begin{proof}
Let $\seqc = \se$. Then $P \cone \overline{Q} = P$ by Definition~\ref{dfn:cone} and thus $P * Q = (P \cone \overline{Q}) + Q = P + Q = P \reve Q$ by Definition~\ref{dfn:reve}. Otherwise, $\seqc \neq \se$ such that $cut^-_\preceq(\overline{Q}) = \{\, r \in P \mid SE\left(\{\, r' \in P \mid r \preceq r' \,\}\right) \cap SE(Q) \neq \es \,\} = \cq$ by Definition~\ref{dfn:cut}. It follows that $P * Q = (P \cone \overline{Q}) + Q = \{\, r \in P \mid SE(\cq) \cap SE(Q) \subseteq SE(r) \,\} + Q = P \reve Q$ by Definitions~\ref{dfn:reve} and~\ref{dfn:cone}.
\end{proof}

\textsc{Proposition~\ref{prop:cone-via-reve}}:
Let $P \in \lpa$, $\preceq$ be an ensconcement associated with $P$, and $\dotminus$ an operator for $P$ such that for any $Q \in \lpa$: $P \dotminus Q = P \cap (P \reve \overline{Q})$. Then $P \dotminus Q = P \cone Q$.

\begin{proof}
Let $\seqc = \es$. Then $P \reve \overline{Q} = P + \overline{Q}$ by Definition~\ref{dfn:reve} and thus $P \dotminus Q = P \cap (P \reve \overline{Q}) = P = P \cone Q$ by Definition~\ref{dfn:cone}. Otherwise, $\seqc \neq \es$ such that $cut_\preceq(\overline{Q}) = \{\, r \in P \mid SE\left(\{\, r' \in P \mid r \preceq r' \,\}\right) \cap \seqc \neq \es \,\} = \cqm$ by definition of $\cqm$. It follows that $P \dotminus Q = P \cap (P \reve \overline{Q}) = P \cap (\{\, r \in P \mid SE(\cqm) \cap \seqc \subseteq SE(r) \,\} + \overline{Q})$ by Definition~\ref{dfn:reve}. Assume there exists a rule $r' \in \overline{Q}$ with $r' \in P \setminus \{\, r \in P \mid SE(\cqm) \cap \seqc \subseteq SE(r) \,\}$. Since $SE(\overline{Q}) = \seqc \subseteq SE(r')$, it holds that $SE(\cqm) \cap \seqc \subseteq SE(r')$. This implies $r' \in \{\, r \in P \mid SE(\cqm) \cap \seqc \subseteq SE(r) \,\}$, a contradiction. We therefore obtain $P \dotminus Q = P \cone Q$ by Definition~\ref{dfn:cone}.
\end{proof}

\textsc{Proposition~\ref{prop:conflict}}:
Let $P,Q \in \lpa$ and $SE(P) \neq \es \neq SE(Q)$. For any $R \subseteq P$, if $SE(R) \cap SE(Q) = \emptyset$ and for all $R' \subset R: SE(R') \cap SE(Q) \neq \emptyset$, then there exists $\mathbb{M} \in 2^{\mpq}$ such that $R \subseteq \bigcup \mathbb{M}$.

\begin{proof}
Let $P,Q$ be satisfiable logic programs and $R \subseteq P$ such that $SE(R) \cap SE(Q) = \emptyset$ and for each $R' \subset R: SE(R') \cap SE(Q) \neq \emptyset$. Then there exists some $a_j \in \mathcal{A}$ such that $a_j \in At(Q)$ and there exist one or more rules $r_i \in R$ for each $a_j$ such that $a_j \in At(r_i)$. For each $r_i$, there exists a corresponding $r_i$-module $M(P)^{r_i}|_{a_j}$ including $a_j$, such that $r_i \in M(P)^{r_i}|_{a_j}$. It follows from Definition~\ref{dfn:mod} that for all remaining rules $r' \in R \setminus {r_i} : r' \in \bigcup_{i,j} M(P)^{r_i}|_{a_j}$.
\end{proof}

\textsc{Corollary~\ref{cor:conflict}}:
Let $P,Q \in \lpa$ and $SE(P) \neq \es$. Then $SE(P) \cap SE(Q) = \emptyset$ if and only if $SE \left(\bigcup \mpq \right) \cap SE(Q) = \emptyset$.

\begin{proof}
\lq\lq If\rq\rq : Since $SE(P) \subseteq SE\left(\bigcup \mpq \right)$, if $SE\left(\bigcup \mpq \right) \cap SE(Q) = \emptyset$, then also $SE(P) \cap SE(Q) = \emptyset$.

\lq\lq Only if\rq\rq : Follows from Proposition~\ref{prop:conflict} if $Q$ is satisfiable. Trivial if $Q$ is not satisfiable.
\end{proof}

\textsc{Theorem~\ref{thm:alg-same-as-op}}:
For any $P,Q \in \lpa$, there exists a selection function $\gamma$ for $P$ such that $P \revpm Q = P \setminus \mpq + \bigcup \mpq^{\revpm} + Q$ (or $P \conpm Q = P \setminus \mpq + \bigcup \mpq^{\conpm}$, respectively).

\begin{proof}
To prove the equation for revision, we need to show that $P \setminus \mpq + \bigcup \mpq^{\revpm} = \bigcap \gpq$ for some $\gamma$. Let $Z \subseteq P$ be the set of rules that are eliminated during the operation of $\revpm$ per Definition~\ref{dfn:revpm}, i.e., $P \revpm Q = \bigcap \gpq + Q = P \setminus Z + Q$, and let $Z' \subseteq P$ be the set of rules that are eliminated by \textsc{ModChange}.

We first show that $Z \subseteq Z'$. Assume that $Z' = \es$ until the last iteration of the while-loop. In the last iteration, we have $n = \lvert \mpq \rvert$ and \textsc{ModChange} computes $\bigcup \mpq \revpm Q = \bigcup \mpq^{\revpm}$. Thus, $P \setminus \mpq + \bigcup \mpq^{\revpm} = P \setminus \mpq + (\bigcup \mpq \revpm Q)$. Let $\mathbb{M}_Q$ denote the set $\{\, R \subseteq \bigcup \mpq \mid SE(R) \cap SE(Q) \neq \es \text{ and, for all } R', R \subset R' \subseteq P \text{ implies } SE(R') \cap SE(Q) = \es \,\}$. If it holds for all $R \in \gpq$ that $R \cap \bigcup \mpq \in \gamma(\mathbb{M}_Q)$, then $P \setminus \mpq + (\bigcup \mpq \revpm Q) = ((P \setminus \mpq) \cup \bigcup \mpq) \revpm Q = P \revpm Q$, which implies $Z = Z'$.

We now show that $Z' \subseteq Z$. Assume that each revision operation in the following is the most restrictive type, that is, for any set $M$, $\gamma(M) = M$. Thus, if $r \in \bigcap \gpq$, then $r \in R$ for all $R \in \pq$. For each $\mathbb{M}$ as specified in the outer $\mathtt{foreach}$ loop of \textsc{ModChange}, let $z'$ be the set of rules eliminated during the revision of $\bigcup \mathbb{M}$ by $Q$: $z' = \bigcup \mathbb{M} \setminus ((\bigcup \mathbb{M} \revpm Q)\setminus Q)$. From $SE(P) \subseteq SE(\bigcup \mathbb{M})$ it then follows that $z' \cap \bigcap \gpq = \es$. Since $\bigcup z' = Z'$, we obtain $Z' \cap \bigcap \gpq = \es$.

Analogous for contraction.
\end{proof}

\textsc{Corollary~\ref{cor:ens-alg}}:
For any $P,Q \in \lpa$, let $P \setminus \mpq = \{\, r \in P \mid \text{for all } M \in \mpq: r \not \in M\,\}$ and $\mpq^\circ$ denote the output of Algorithm~\ref{alg:modrev} for the inputs $\mpq$, $\circ \in \{\reve,\cone \}$, and~$Q$. Then $P \reve Q = P \setminus \mpq + \bigcup \mpq^{\reve} + Q$ (or $P \cone Q = P \setminus \mpq + \bigcup \mpq^{\cone}$, respectively) for some ensconcement $\preceq$ associated with $P$.

\begin{proof}
Follows directly from Theorems~\ref{thm:revpm-reve}, \ref{thm:conpm-cone}, and~\ref{thm:alg-same-as-op}.
\end{proof}


\bibliographystyle{ACM-Reference-Format-Journals}
\bibliography{Binnewies}
                             
\received{Month 2000}{Month 2000}{Month 2000}
\end{document}